\def\*#1{\mathbf{#1}}
\def\+#1{\mathbb{#1}}
\newcommand{\I}{{-1}}
\newcommand{\T}{\top}
\newcommand{\pool}{\mathrm{fed}}
\newcommand{\npool}{\mathrm{pool}}
\newcommand{\concat}{\mathrm{cb}}
\newcommand{\mle}{\mathrm{mle}}
\newcommand{\HT}{\mathrm{ipw{\text -}mle}}
\newcommand{\aipw}{\mathrm{aipw}}
\newcommand{\Var}{\mathrm{Var}}
\newcommand{\norm}[1]{\left\lVert#1\right\rVert}
\newcommand{\RNum}[1]{\uppercase\expandafter{\romannumeral #1\relax}}
\newcommand{\diag}{\mathrm{diag}}
\newcommand{\pad}{\mathrm{pad}}
\newcommand{\betavarpi}{\bm\beta,\varpi}
\newcommand{\sep}{\mathrm{sep}}
\newcommand{\joint}{\mathrm{joint}}
\newcommand{\ate}{\mathrm{ate}}
\newcommand{\att}{\mathrm{att}}
\newtheorem{assumption}{Assumption}
\newtheorem{theorem}{Theorem}
\newtheorem{lemma}{Lemma}
\newtheorem{proposition}{Proposition}
\newtheorem{condition}{Condition}
\newtheorem{remark}{Remark}
\newcommand{\sk}{{(k)}}
\newcommand{\sj}{{(j)}}
\newcommand{\pr}{\mathrm{pr}}
\newcommand{\ms}{\mathrm{M}}
\newcommand{\optum}{\mathrm{O}}
\newcommand{\BM}{\mathrm{bm}}
\newcommand{\s}{\mathrm{r}}
\newcommand{\uns}{\mathrm{unr}}
\newcommand{\ivw}{\mathrm{ivw}}
\definecolor{Red}{rgb}{1,1,1}
\definecolor{Blue}{rgb}{0,0,1}
\definecolor{Black}{rgb}{0,0,0}
\def\blue{\color{Black}}
\def\cmtfinal#1{{\textcolor{black}{#1}}}
\newcommand\tcaptab[1]{\captionsetup{position=top, font=normalsize, labelfont=bf, textfont=normalfont, justification=centering, margin=0mm, aboveskip=1mm, belowskip=0mm, labelsep=colon, singlelinecheck=false}\caption{#1}}
\newcommand\bnotetab[1]{\captionsetup{position=bottom, font=footnotesize,  textfont=normalfont, margin=1mm, skip=2mm, justification=justified, singlelinecheck=false}\caption*{#1}}
\newcommand\tcapfig[1]{\captionsetup{position=top, font=normalsize, labelfont=bf, textfont=normalfont, justification=centering, margin=0mm, aboveskip=2mm, belowskip=0mm, labelsep=colon, singlelinecheck=false}\caption{#1}}
\newcommand\bnotefig[1]{\captionsetup{position=bottom, font=footnotesize,  textfont=normalfont, margin=1mm, skip=2mm, justification=justified, singlelinecheck=false}\caption*{#1}}
\begin{document}
\title{Federated Causal Inference in Heterogeneous Observational Data\footnote{{\scriptsize  This research is generously supported by Microsoft Research, the Office of Naval Research grant N00014-19-1-2468, and DARPA L2M program FA8650-18-2-7834. We thank Kristine Koutout, Molly Offer-Westort, seminar and conference participants at Berkeley, Microsoft Research, and American Causal Inference Conference for helpful comments. Code is available at \url{https://github.com/ruoxuanxiong/federated-causal-inference}. Data for this project were accessed using the Stanford Center for Population Health Sciences Data Core.}
}}
\author{Ruoxuan Xiong\thanks{ \scriptsize Emory University, Department of Quantitative Theory and Methods, \url{ruoxuan.xiong@emory.edu}.}
\and
Allison Koenecke\thanks{ \scriptsize Cornell University, Department of Information Science, \url{akoenecke@cornell.com}.}
\and
Michael Powell\thanks{\scriptsize United States Military Academy, Department of Mathematical Sciences, \url{mike.powell@westpoint.edu}.}
\and
Zhu Shen\thanks{\scriptsize Harvard University, Department of Biostatistics,  \url{zhushen@g.harvard.edu}.}
\and
Joshua T.~Vogelstein\thanks{\scriptsize Johns Hopkins University, Department of Biomedical Engineering, Institute for Computational Medicine, \url{jovo@jhu.edu}.}
\and
Susan Athey\thanks{\scriptsize Stanford University, Graduate School of Business,  \url{athey@stanford.edu}.}}

\begin{titlepage}
	\maketitle
	\thispagestyle{empty}

	\begin{abstract}
        We are interested in estimating the effect of a treatment applied to individuals at multiple sites, where data is stored locally for each site. Due to privacy constraints, individual-level data cannot be shared across sites; the sites may also have heterogeneous populations and treatment assignment mechanisms. Motivated by these considerations, we develop federated methods to draw inference on the average treatment effects of combined data across sites. Our methods first compute summary statistics locally using propensity scores and then aggregate these statistics across sites to obtain point and variance estimators of average treatment effects. We show that these estimators are consistent and asymptotically normal. To achieve these asymptotic properties, we find that the aggregation schemes need to account for the heterogeneity in treatment assignments and in outcomes across sites. We demonstrate the validity of our federated methods through a comparative study of two large medical claims databases. 

		\vspace{0.5cm}
		
		\noindent\textbf{Keywords:} Causal Inference, Propensity Scores, Federated Learning, Multiple Data Sets

	\end{abstract}
\end{titlepage}

\begin{onehalfspacing}

\section{Introduction}\label{sec:introduction}

In many settings, the same treatment is applied to populations in different environments, but data is stored separately for each environment. When the sample size in any one data set is too small to obtain precise estimates of treatment effects, it would often be beneficial, if possible, to use data across environments. However, the combination of individual-level data may be restricted by legal constraints, privacy concerns, proprietary interests, or competitive barriers. Therefore, it is useful to develop analytical tools that can reap the benefits of data combination without pooling individual-level data. Methods that accomplish this while sharing only aggregate data are referred to as ``federated'' learning methods. In this paper, we develop federated learning methods tailored to the problem of causal inference.  The methods allow for heterogeneous treatment effects and heterogeneous outcome models across data sets, and adjust for the imbalance in covariate distributions between treated and control samples. These methods provide treatment effect estimation and inference, that are shown to perform as well asymptotically as if the data sets were combined.

A motivating example for these methods is from \cite{koenecke2020alpha} who study two separate medical claims data sets, MarketScan and Optum. The two data sets are noticeably different: the data from Optum has more elderly patients and covers more years than the data from MarketScan. They found evidence from both data sets that exposure to alpha blockers, a class of commonly prescribed drugs, reduced the risk of adverse outcomes for patients with acute respiratory distress. However, existing federated methods are insufficient to draw inference on the drug effect, while accounting for the heterogeneity in populations between treated and control groups\footnote{Treated group that is exposed to alpha blockers has more elderly patients than control groups. This is because alpha blockers are commonly prescribed for chronic prostatitis, and the prostate generally worsens with age.} and across two separate data sets.

In this paper, we propose two main categories of federated inference methods to address this problem. One category is based on the Inverse Propensity-Weighted Maximum Likelihood
Estimator (IPW-MLE).\footnote{IPW-MLE includes linear models, logit models, Poisson models, and Cox models weighted by inverse propensity scores as special cases.} The other one is based on the Augmented Inverse Propensity Weighted (AIPW) Estimator. 
Our federated methods only use summary statistics of each data set and aim to estimate the parameters, such as average treatment effects, on the combined, individual-level data. Our methods provide point estimates and confidence intervals of these parameters that are asymptotically the same as if individual-level data were combined. We focus on IPW-MLE and AIPW for two main reasons. First, both estimators use propensity scores to balance covariate distributions between treated and control groups.
Second, both estimators enjoy the double robustness property \citep{bang2005doubly,wooldridge2007inverse}, that are robust to the misspecification of one of the propensity and outcome models. As a building block, we propose a supplementary category of federated methods based on MLE for the estimation of either propensity or outcome model, and used as the inputs for the two main categories.

We make four contributions in developing federated inference methods. First, we identify the conditions that need to be considered in federation for valid inference, such as the stability of propensity and outcome models across data sets. Our federated inference methods are then designed to vary with these conditions. Second, to support the validity of inference, we develop inferential theory for all of our federated methods. Our federated methods achieve the optimal convergence rate in the estimation of average treatment effects and other parameters of interest. Third, our federated methods are communication-efficient. We show one-way and one-time sharing of carefully constructed summary statistics is sufficient to obtain consistent federated estimators. Fourth, for IPW-MLE, the estimation error in the propensity model carries over to the estimation of the outcome model \citep{wooldridge2002inverse,wooldridge2007inverse}, which is often overlooked in practice, such as the standard \textsc{svyglm} package in \textsc{R}.\footnote{Overlooking this effect leads to an overestimate of variance and a loss of efficiency.} Our federated IPW-MLE explicitly accounts for this estimation error.

Our federated methods are particularly relevant when separate data sets have heterogeneous populations with heterogeneous treatment assignment and outcome models. This is the setting where conventional pooling methods, such as inverse variance weighting (IVW), can fail.\footnote{IVW is asymptotically the same as our federated IPW-MLE when data sets are homogeneous in the sense that covariate distributions, as well as propensity and outcome models, are stable across data sets.}  Let us revisit the example in \cite{koenecke2020alpha}.
We first estimate the effect of alpha blockers by IPW logistic regression\footnote{IPW logistic regression is a special case of IPW-MLE.} on each data set. We then combine the estimated effects by IVW and by our federated IPW-MLE across data sets. As shown in Figure \ref{fig:compare-ivw-ours}, the federated coefficient of alpha blockers from IVW lies outside of the interval defined by coefficients estimated on two separate data sets. This observation is counterintuitive as we expect the federated coefficient to measure the average effect of alpha blockers for patients in two data sets.\footnote{The main reason for the federated coefficient from IVW to lie outside this interval is that we have heterogeneous coefficients and variance-covariance matrices across datasets.
See Appendix \ref{subsec:toy-example-ivw} for a numerical example for more intuition. } In contrast, the federated coefficient from our proposed method lies between the coefficients estimated separately on two data sets, which makes more sense than IVW.

\begin{figure}[ht!]
		\centering
		\tcapfig{Coefficient of the Exposure to Alpha Blockers}
		\includegraphics[width=0.6\linewidth]{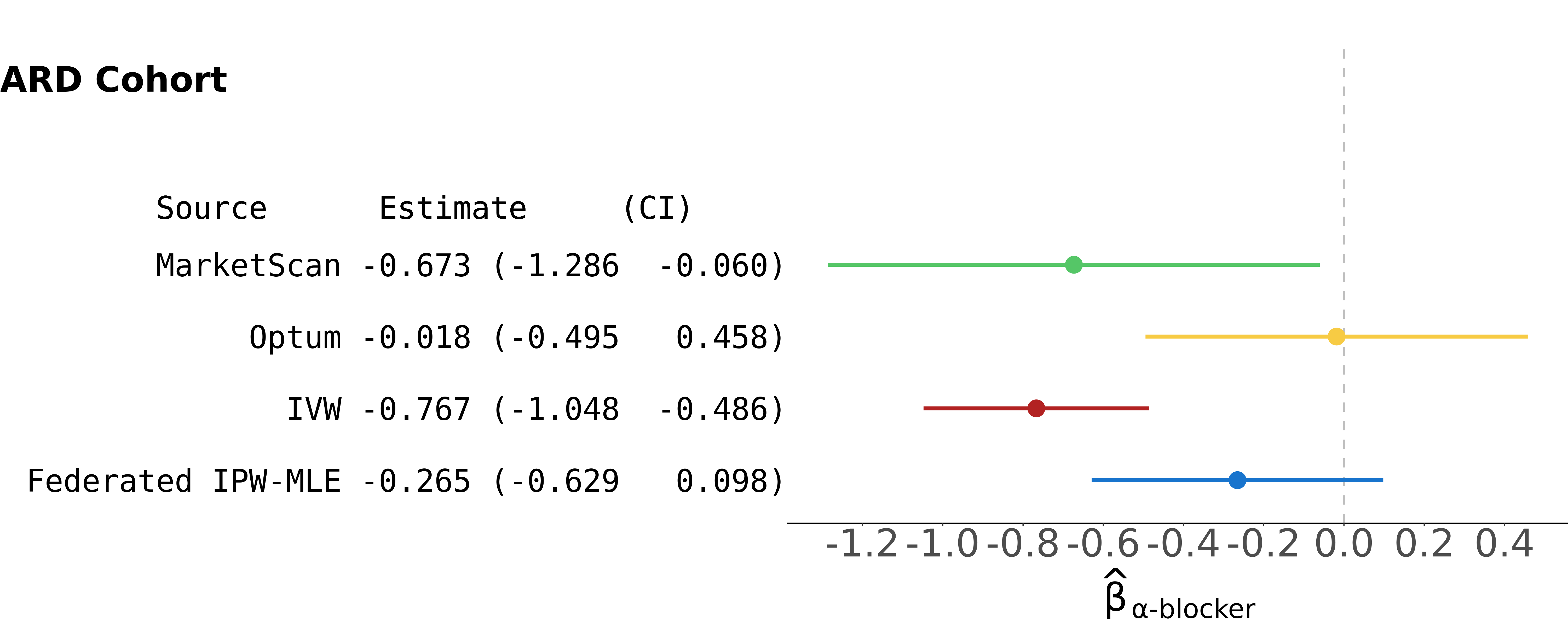}
		\label{fig:compare-ivw-ours}
		\bnotefig{This figure shows the estimated coefficient and its 95\% confidence interval of the exposure to alpha blockers in a logit outcome model, where the outcome indicates whether the patient with acute respiratory distress (ARD) received mechanical ventilation and then had in-hospital death.  We use IVW and our federated IPW-MLE to estimate the coefficient of alpha blockers on the combined data of MarketScan and Optum. The estimated coefficient from our federated IPW-MLE is more credible than that from IVW, because our federated coefficient lies between the interval defined by estimated coefficients on MarketScan and Optum, while coefficient from IVW does not. See Section \ref{sec:empirical} for more details.
		}
\end{figure}

Our work is related to multiple streams of literature which aim to learn and analyze data from multiple sources, including streams from biostatistics, data mining, and federated learning. Most studies in data mining and federated learning focus on estimating a centralized model, mostly through an iterative approach while preserving privacy, without considering inference.\footnote{Early developments in data mining provide methods to combine point estimates of model parameters in linear models \citep{du2004privacy,karr2005secure},  logit models \citep{fienberg2006secure,slavkovic2007secure}, and maximum likelihood estimators \citep{blatt2004distributed,karr2007secure,zhao2007information,lin2010privacy} across distributed information systems, with most methods being iterative. 
Recent advances, mainly in federated learning, aim to develop communication-efficient methods to optimize parameters across a large number of distributed heterogeneous agents, while preserving privacy \citep{konevcny2016federated,mcmahan2017communication,li2020federated}. Importantly, statistical inference is not a primary consideration in the aforementioned literature.} In contrast, our federated methods are non-iterative and are supported by asymptotic theory.\footnote{An iterative approach can provide estimators that are closer to those from the pooled individual-level data. However, we show that the difference between iterative and non-iterative approaches can be neglected asymptotically.}
Studies that provide inference are mostly concentrated in biostatistics.
Specifically, early studies in meta-analysis and meta-regression analysis provide inference, but largely center around combining randomized controlled trials, and a typically used pooling approach is IVW
(\cite{dersimonian1986meta,whitehead1991general}
among others). \cmtfinal{Recently, a growing number of studies} develop privacy-preserving methods to provide inference by pooling aggregate data across multiple studies\cmtfinal{: most of them are} tailored to specific parametric models, including linear models \citep{toh2018combining,toh2020privacy}, logit models \citep{duan2020learning}, Poisson models \citep{shu2019privacy}, Cox models \citep{shu2020inverse,shu2020variance}, {\blue and generalized linear models \citep{wolfson2010datashield}}\cmtfinal{, while \cite{jordan2018communication,duan2022heterogeneity} consider the efficient pooling of the more general MLE}. Among these studies, only \cite{toh2018combining} and \cite{shu2020inverse} account for nonrandom treatment assignments by using propensity scores, though the asymptotic theory is lacking. In contrast, we provide federated methods for a general class of parametric models that adjust for nonrandom treatment assignments and are supported by asymptotic theory.

Our work is most closely related to the recent studies of privacy-preserving methods for causal inference by \cite{vo2021federated}, \cite{han2021federated}, and \cite{han2022privacy}.\footnote{There has been a growing literature surrounding the development of causal inference methods, when individual-level data can be shared across multiple data sets, but data sets are collected under heterogeneous conditions
\citep[e.g.,][]{peters2016causal,bareinboim2016causal,rosenman2018propensity,rosenman2020combining,athey2020combining,rothenhausler2021anchor}.} 
\cite{vo2021federated} estimate treatment effects by modeling potential outcomes by Gaussian processes. {\blue \cite{han2021federated,han2022privacy} propose to estimate treatment effects for target populations by adaptively and optimally weighing source populations, accounting for the risk of negative transfer when source and target populations are heterogeneous. In contrast, our federated inference methods focus on treatment effects and other parameters of interest defined on the combined data, as opposed to on specific target data as in \cite{han2021federated,han2022privacy}. }

\section{Model, Assumptions, and Preliminaries}\label{sec:model}
In this section, we begin by stating the model setup and estimands for individual data sets in Section \ref{subsec:model-setup}. {\blue Next, we define the target parameters in our federated estimators in Section \ref{subsec:target-parameters}. We then review three widely used estimators (MLE, IPW-MLE, AIPW) on which our federated estimators are built in Section \ref{subsec:estimation-one-data}.} 
Next, we list the covariate and model conditions that need to be considered in federation in Section \ref{subsec:covariate-model-conditions}.  Finally, in Section \ref{subsec:weighting-methods}, we state the three weighting methods to aggregate information in our federated estimators. All the matrices in the asymptotic variance of MLE and IPW-MLE are summarized in Table \ref{tab:def-matrices}.

\subsection{Model Setup}\label{subsec:model-setup}

Suppose we have $D$ data sets, where $D$ is finite. Suppose data set $k \in \{1, \cdots, D\}$ has $n_k$ observations $(\*X^\sk_i, Y^\sk_i, W^\sk_i) \in \mathcal{X}_k \times \+R \times \{0,1\}$ that are drawn i.i.d. from some distribution $\mathbb{P}^\sk$. Here, $i \in \{1, \cdots, n_k\}$ indexes the subjects (e.g., patients), $\*X^\sk_i$ is a vector of $d_k$ observed covariates, $Y^\sk_i$ is the outcome of interest, $W^\sk_i$ is the treatment assignment, and $\mathcal{X}_k \subseteq \+R^{d_k}$. Both the types and the number of covariates can vary with data sets.  Let $n_\npool = \sum_{i=1}^D n_k$ be the total number of observations. \cmtfinal{ Here we study the setting where each data set has many observations, i.e., $n_k$ is large for all $k$. We assume the population fraction of observations in data set $k$, i.e., $p_k = \lim n_k/n_\npool$, exists, and is bounded away from $0$ and $1$. }

Under the Neyman-Rubin potential outcome model and the stable unit treatment value assumption \citep{imbens2015causal}, let $Y^\sk_i(1)$ be the outcome of subject $i$ if it is assigned treatment, and let $Y^\sk_i(0)$ be the outcome for the opposite case. For each data set $k$, suppose the following standard unconfoundedness assumption \citep{rosenbaum1983central} holds 
\[\{Y^\sk_i(0), Y^\sk_i(1) \} \perp W^\sk_i  \mid  \*X^\sk_i \]
and the following overlap assumption \citep{rosenbaum1983central} for the propensity score $e^\sk(\*x) = \pr(W^\sk_i = 1 \mid \*X^\sk_i = \*x )$ holds
\[\eta < e^\sk(\*x) < 1 - \eta \quad\quad \forall \*x \in \mathcal{X}_k  \]
for some $\eta > 0$. For each data set $k$, we define the average treatment effect (ATE), denoted as $\tau^\sk_\ate$, and average treatment effect on the treated (ATT), denoted as $\tau^\sk_\att$, as follows 
\begin{align}\label{eqn:definition-ate-att}
\tau^\sk_\ate  \coloneqq \+E[Y^\sk_i(1) - Y^\sk_i(0)], \quad \tau^\sk_\att \coloneqq \+E[Y^\sk_i(1) - Y^\sk_i(0) \mid W^\sk_i = 1].
\end{align}

\subsubsection{Parametric Models}
In this paper, we focus on parametric outcome and propensity models stated in Conditions \ref{cond:parametric-outcome} and \ref{cond:parametric-propensity} below. This is motivated by the common use of parametric outcome models in medical applications, for example, the use of logistic regression for estimating the odds ratio in epidemiological studies \citep{sperandei2014understanding}, Cox regression for survival analysis in clinical trials \citep{singh2011survival}, and generalized linear models (GLM) for assessing medical costs \citep{blough1999modeling,blough2000using}.
In addition, parametric models, such as logit models, are also commonly used to estimate propensity scores (e.g., \cite{imbens2015causal}, Ch. 13). The estimated parametric outcome and/or propensity model can also be used as the input in the estimation of the ATE and ATT. 

\begin{condition}[Parametric Outcome Model]\label{cond:parametric-outcome}
	For any data set $k$, the conditional density function of outcome $y$ on $\*x$ and $w$ follows a parametric model, denoted as $f^\sk_0(y \mid \*x, w, \bm\beta)$ with the true parameter values to be $\bm\beta_0^\sk$.
\end{condition}

\begin{condition}[Parametric Propensity Model]\label{cond:parametric-propensity}
	For any data set $k$, the conditional treatment probability $\pr(w=1 \mid \*x)$ follows a parametric model, denoted as $ e^\sk_0(\*x, \bm\gamma)$, with the true parameter values to be $\bm\gamma_0^\sk$.
\end{condition}

Given Conditions \ref{cond:parametric-outcome} and \ref{cond:parametric-propensity}, we can estimate the outcome and propensity models by maximizing the (weighted) likelihood function. 
Since the parametric models $f^\sk_0(y \mid \*x, w, \bm\beta)$ and $ e^\sk_0(\*x, \bm\gamma)$ are unknown a priori, the family of distributions chosen in the estimation of outcome and propensity models, denoted as $f^\sk(y \mid \*x, w, \bm\beta)$ and $e^\sk(\*x,\bm\gamma)$, may or may not contain the true structure, $f^\sk_0(y \mid \*x, w, \bm\beta)$ and $ e^\sk_0(\*x, \bm\gamma)$. Our federated estimators account for the possibility of model misspecification. We further discuss when the particular parameters of interest, e.g., ATE or ATT, on the combined data can still be consistently estimated by federated estimators in the presence of misspecification.

{\blue 
\subsection{Target Parameters}\label{subsec:target-parameters}

In this subsection, we define the target parameters that our federated methods aim to estimate.  Throughout this paper, the superscript ``$\sk$'' in a notation denotes an object estimated using data set $k$; the superscript ``$\concat$" denotes an object on the combined, individual-level data; and the superscript ``$\pool$" denotes a federated estimator. 

The target parameters are defined on the combined data that concatenate individual data across $D$ data sets together. \cmtfinal{The first set of target parameters are the parameters in the true conditional outcome density $f_0^\concat(\cdot)$ on the combined data, denoted as $\bm{\beta}_0^\concat$, where $f_0^\concat(\cdot)$ is defined as
\[f^\concat_0(Y^\sk_i \mid \*X^\sk_i, W^\sk_i, \bm{\beta}_0^\concat) \coloneqq \prod_{j = 1}^K\left[f^\sj_0(Y^\sj_i \mid \*X^\sj_i, W^\sj_i, \bm{\beta}_0^\sj)\right]^{\bm{1}(j = k)}, \qquad\qquad \forall k, \]
that equals the true conditional outcome density of data set $k$ when the observation is from data set $k$. $\bm{\beta}_0^\concat$ is defined as the union of $\bm{\beta}_0^{(1)}, \cdots, \bm{\beta}_0^{(K)}$. For example, if $\bm{\beta}_0^{(1)} = \cdots = \bm{\beta}_0^{(K)}$, then $\bm{\beta}_0^\concat = \bm{\beta}_0^\sk$ for any $k$; if $\bm{\beta}_0^{(1)}, \cdots, \bm{\beta}_0^{(K)}$ is completely different from one another, then $\bm{\beta}_0^\concat = (\bm{\beta}_0^{(1)}, \cdots, \bm{\beta}_0^{(K)})$.}

\cmtfinal{The second set of target parameters are the parameters in the true propensity $e_0^\concat(\cdot)$ on the combined data, denoted as $\bm{\gamma}_0^\concat$, 
where $e^\concat_0(\cdot)$ is defined as
\[e^\concat_0(W^\sk_i\mid \*X^\sk_i, \bm{\gamma}_0^\concat) \coloneqq \prod_{j = 1}^K \left[e^\sj_0(W^\sj_i\mid \*X^\sj_i, \bm{\gamma}_0^\sj)\right]^{\bm{1}(j = k)}, \qquad\qquad \forall k , \]
that equals the true propensity of data set $k$ when the observation is from data set $k$. Similar to $\bm{\beta}_0^\concat$, $\bm{\gamma}_0^\concat$ is defined as the union of $ \bm{\gamma}_0^{(1)}, \cdots, \bm{\gamma}_0^{(K)}$.}

The third set of target parameters are the ATE and ATT on the combined data, denoted as $\tau^\concat_\ate$ and $\tau^\concat_\att$, and are defined as 
\[\tau^\concat_\ate \coloneqq \sum_{k = 1}^D p_k \tau^\sk_\ate, \qquad \tau^\concat_\att \coloneqq \sum_{k = 1}^D p_k \tau^\sk_\att, \]
\cmtfinal{where $\tau^\concat_\ate$ and $\tau^\concat_\att$ are the averages of $\tau^\sk_\ate$ and $\tau^\sk_\att$ weighted by $p_k$, and $p_k$ is the population fraction of observations in data set $k$. Both $\tau^\concat_\ate$ and $\tau^\concat_\att$ do not depend on the sample size. }

If data sets can be combined at the individual level, then the standard approaches for a single data set (as reviewed in Section \ref{subsec:estimation-one-data} below) are applicable to estimate and draw inference on these target parameters. However, when data sets cannot be combined at the individual level, standard approaches are not applicable. 

We develop federated inference methods for these target parameters that only use aggregate information from each data set. The federated inference methods consist of both point and variance estimators of target parameters, thus allowing for the construction of confidence intervals of target parameters. These confidence intervals can be narrower than those obtained from a single data set. When treatment assignments are randomized, our federated methods include classical approaches such as IVW in meta-analysis, whereas when they are nonrandom, our federated estimators adjust for selection bias.

Note that in some settings, such as those in transfer learning, the target parameters of interest are defined on a specific target data set. Other data sets are used to improve the estimation efficiency on target data. In these settings, if propensity and outcome models are stable (defined in Conditions \ref{cond:stable-propensity} and \ref{cond:stable-outcome} below), then our federated estimators continue to be valid; otherwise, we need to account for the discrepancy between supplementary and target data sets to avoid the negative transfer. See \cite{han2021federated} for more discussion.



\subsection{Estimation Methods for Combined Individual-Level Data}\label{subsec:estimation-one-data}

This subsection reviews MLE, IPW-MLE, and AIPW that could be used to estimate the target parameters in Section \ref{subsec:target-parameters} when individual-level data could have been combined. \cmtfinal{As the individual data cannot be combined in practice, the estimators in this subsection are not feasible. In Section \ref{sec:estimation}, we introduce our federated estimators that are designed to approximate the estimators in this section using only the summary statistics of each data set.} 

\subsubsection{MLE for Model Parameters}\label{subsec:mle-estimation}
Under the parametric outcome model, 
we define the log-likelihood function of outcome conditional on covariates and treatment assignment on the combined data as 
\begin{equation}
    \bm\ell_{n_\npool}(\bm\beta) = \sum_{k = 1}^D \underbrace{\sum_{i = 1 }^{n_k} \log  f(Y^\sk_i \mid \*X^\sk_i, W^\sk_i,  \bm\beta)}_{\bm\ell_{n_k}(\bm\beta)}, \label{eqn:obj-mle}
\end{equation}
where $\bm\ell_{n_k}(\bm\beta)$ is the log-likelihood function on data set $k$. \cmtfinal{Let  $\hat{\bm\beta}^\concat_\mle$ be the solution that maximizes the log-likelihood function $\bm\ell_{n_\npool}(\bm\beta)$ and $\hat{\bm\beta}^\concat_\mle$ is an estimator of $\bm\beta^\concat$.} We can analogously use MLE to estimate the parameters in the parametric propensity model on the combined data.

\subsubsection{IPW-MLE for Model Parameters and Average Treatment Effects}\label{subsec:ht-estimation}
An alternative approach to estimating parameters in the outcome model is to use IPW-MLE, which adjusts the log-likelihood function by inverse propensity scores to estimate the population mean when data is nonrandomly missing} 
\begin{align}
\bm\ell_{n_\npool}(\bm\beta, \hat{e}) =  \sum_{k = 1}^D \underbrace{\sum_{i = 1 }^{n_k} \varpi^\sk_{i, \hat{e}}  \log  f(Y^\sk_i \mid \*X^\sk_i, W^\sk_i,  \bm\beta)}_{\bm\ell_{n_k}(\bm\beta, \hat{e})}, \label{eqn:obj-ht}
\end{align}
where the subscript ``$\hat{e}$'' is the abbreviation of the estimated propensity on the combined data, $\bm\ell_{n_k}(\bm\beta, \hat{e})$ is the weighted log-likelihood function on data set $k$, and $\varpi^\sk_{i, \hat{e}} $ is the weight for unit $i$ that can be
\begin{align*}
    \varpi^\sk_{i, \hat{e}} = \begin{cases}
        W^\sk_i/\hat{e}(\*X^\sk_i) + \big(1 - W^\sk_i\big)/\big(1 - \hat{e}(\*X^\sk_i)\big)  & \text{ATE weighting} \\
        W^\sk_i + \hat{e}(\*X^\sk_i) \big(1 - W^\sk_i\big)/\big(1 - \hat{e}(\*X^\sk_i)\big) & \text{ATT weighting.}
    \end{cases}
\end{align*}
Let $\hat{\bm\beta}^\concat_\HT$ be the estimator than maximizes the weighted log-likelihood $\bm\ell_{n_\npool}(\bm\beta, \hat{e})$. This estimator can be used to estimate treated and control outcomes, and form a doubly robust estimator for ATE and ATT \citep{wooldridge2007inverse}. See Appendix \ref{subsec:ipw-mle-double-robustness} for more details.

\subsubsection{AIPW for Average Treatment Effects}\label{subsec:aipw-estimate}
We can estimate ATE on the combined data using the AIPW estimator
\begin{align}\label{eqn:score}
\hat{\tau}^\concat_\ate =& \sum_{k = 1}^D \frac{n_k}{n_\npool} \cdot \underbrace{\frac{1}{n_k} \sum_{i = 1 }^{n_k} \hat{\phi}(\*X^\sk_i, W^\sk_i, Y^\sk_i )}_{\hat{\tau}^\sk_\ate} ,
\end{align}
that can be written as a weighted average of ATE across data sets by sample size, where \cmtfinal{$\hat{\phi}(\cdot)$ is the estimated score on the combined data and is defined as }
\begin{equation}
  \hat{\phi}(\*x, w, y) = \hat{\mu}_{(1)}(\*x) - \hat{\mu}_{(0)}(\*x)  + \frac{w}{\hat{e}(\*x)} \big( y -\hat{\mu}_{(1)}(\*x)  \big)- \frac{(1 - w )}{1 - \hat{e}(\*x)} \big(y - \hat{\mu}_{(0)}(\*x) \big)\, ,    \label{eqn:aipw}
\end{equation}
and where $\hat{\mu}_{(1)}(\*x) $ and $\hat{\mu}_{(0)}(\*x) $ are estimated conditional treated and control outcome models \cmtfinal{on the combined data}.\footnote{\cmtfinal{The parameters in $\hat{\mu}_{(w)}(\*x) $ and $\hat{e}(\*x)$ are omitted to account for the case where $\hat{\mu}_{(w)}(\*x) $ and $\hat{e}(\*x)$ are estimated by nonparametric methods when the individual-level data could have been combined.} }
If the estimand is ATT, then we can also use \eqref{eqn:score}, but the estimated score $\hat{\phi}(\cdot)$ is defined as
\begin{align}
\hat{\phi}(\*x, w, y ) =&  w \big(y - \hat{\mu}_{(1)}(\*x) \big)- \frac{\hat{e}(\*x)(1 - w )}{1 - \hat{e}(\*x)} \big(y - \hat{\mu}_{(0)}(\*x) \big).  \label{eqn:aipw-att}
\end{align}
AIPW has two prominent properties: doubly robustness \citep{robins1994estimation} and semiparametric efficiency.

\subsection{Covariate and Model Considerations in Federated Estimators}\label{subsec:covariate-model-conditions}

In this subsection, we introduce the conditions that need to be considered in the federation to obtain valid point and variance estimators of target parameters.

\begin{condition}[Known Propensity Score]\label{cond:known-propensity}
	For all data sets, the true propensity scores are known and used.  
\end{condition}

When true propensity scores are known and used, then we do not need to federate propensity models in federated IPW-MLE.

\begin{condition}[Stable Propensity Model]\label{cond:stable-propensity}
The set of covariates and the parameters in the propensity model are the same for all data sets, that is,  $\bm\gamma_0^\sj = \bm\gamma_0^\sk$ for any $j$ and $k$. 
\end{condition}

\begin{condition}[Stable Outcome Model]\label{cond:stable-outcome}
The set of covariates and the parameters in the outcome model are the same for all data sets, that is, $\bm\beta_0^\sj =\bm\beta_0^\sk$ for any $j$ and $k$.  
\end{condition}

\begin{condition}[Stable Covariate Distribution]\label{cond:stable-covariate} The set of covariates and their joint distribution are the same across all data sets. That is, $d_j = d_k$ and $\mathbb{P}^\sj(\*x) = \mathbb{P}^\sk(\*x) $ for any two data sets $j$ and $k$.
\end{condition}

{\blue We refer to data sets as being ``heterogeneous'' in settings where either Condition \ref{cond:stable-propensity}, \ref{cond:stable-outcome}, or \ref{cond:stable-covariate} is violated. If Condition \ref{cond:stable-outcome} holds (similarly for Condition \ref{cond:stable-propensity}), then the parameters on the combined data $\bm{\beta}^\concat_0$ equals $\bm{\beta}^\sk_0$ for any $k$; otherwise, we partition the parameters $\bm\beta^\sk = \big(\bm\beta_{\mathrm{s}}, \bm\beta^\sk_{\mathrm{uns}} \big)$ into shared parameters $\bm\beta_{\mathrm{s}}$ and dataset-specific parameters $\bm\beta^\sk_{\mathrm{uns}}$ for any $k$, and define the parameters on the combined data as $\bm\beta^\concat = (\bm\beta_{\mathrm{s}}, \bm\beta_{\mathrm{uns}}^{(1)}, \bm\beta_{\mathrm{uns}}^{(2)}, \cdots, \bm\beta_{\mathrm{uns}}^{(D)})$.\footnote{For ease of presentation, we assume there are no shared parameters across only a subset of data sets, but our estimator can be easily generalized to the opposite case. If there are some shared parameters across several but not all data sets, we just need to combine these parameters in $\bm\beta^\concat$. For example, if $\bm\beta_{\mathrm{uns}}^\sj$ and $\bm\beta_{\mathrm{uns}}^\sk$ are the same for $j$ and $k$,  then we merge $\bm\beta_{\mathrm{uns}}^\sj$ and $\bm\beta_{\mathrm{uns}}^\sk$ in $\bm\beta^\concat$.} For example, $\bm\beta_{\mathrm{s}}$ could include the parameters of interest, such as the treatment coefficient that we want to precisely estimate;  $\bm\beta^\sk_{\mathrm{uns}}$ could include nuisance parameters, such as the age coefficient in our empirical study.\footnote{Age coefficient has opposite signs in the two data sets in our empirical study, as shown in Figure \ref{fig:coef-age}. }  Note that choosing the partition generally encompasses a tradeoff between efficiency and robustness to model misspecification. See Section \ref{subsec:pool-mle-model-shift} for more discussion, and Section \ref{subsec:practical-guidance} for practical guidance on choosing the partition.

}



\begin{table}
	\centering
	\tcaptab{A Summary of Matrices in the Asymptotic Variance of MLE and IPW-MLE}
	{\scriptsize
		\begin{tabular}{c|c|c|c}
			\toprule
			Matrix & Expression  & Matrix & Expression  \\
			\midrule
			$\*A_{\bm\beta}$ & $ \+E \Big[  - \frac{\partial^2 \log f(y \mid \*x, w, \bm\beta)}{\partial \bm\beta \partial \bm\beta^\T}   \Big] $ & $\*A_{\bm\gamma}$  &  $ \+E \Big[ -  \frac{\partial^2 \log e(\*x, \bm\gamma)}{\partial \bm\gamma \partial \bm\gamma^\T}   \Big] $  \\
			\midrule
			$\*B_{\bm\beta}$ & $ \+E\Big[ \frac{\partial \log f(y \mid \*x, w, \bm\beta) }{\partial \bm\beta}  \big(\frac{\partial \log f(y \mid \*x, w, \bm\beta) }{ \partial \bm\beta}  \big)^\T \Big] $ & $\*B_{\bm\gamma}$ & $ \+E\Big[ \frac{\partial \log e(\*x, \bm\gamma) }{\partial \bm\gamma}  \big(\frac{\partial \log e(\*x, \bm\gamma) }{\partial \bm\gamma}  \big)^\T \Big] $ \\
			\midrule
			\midrule
			\multicolumn{2}{c|}{ATE weighting $\varpi_{i, {e}_{\bm\gamma}} = \frac{w_i}{{e}_{\bm\gamma}(\*x_i)} + \frac{1 - w_i}{1 - {e}_{\bm\gamma}(\*x_i)}$}  & \multicolumn{2}{c}{ATT weighting $\varpi_{i, {e}_{\bm\gamma}} = w_i + \frac{{e}_{\bm\gamma}(\*x_i)}{1 - {e}_{\bm\gamma}(\*x_i)} (1 - w_i)$} \\
			\midrule
			$\*A_{\betavarpi}$ & $ \+E \Big[ \Big( \frac{w}{e_{\bm\gamma}}   + \frac{1 - w}{1 - e_{\bm\gamma}}\Big)  \frac{\partial^2 \log f(y \mid  \*x, w, {\bm\beta}) }{\partial \bm\beta \partial \bm\beta^\T}  \Big] $ & $\*A_{\betavarpi}$  & $ \+E \Big[ \Big( w  + \frac{e_{\bm\gamma}(1 - w)}{1 - e_{\bm\gamma}}\Big)  \frac{\partial^2 \log f(y \mid  \*x, w, {\bm\beta}) }{\partial \bm\beta \partial \bm\beta^\T}\Big] $ \\
			\midrule
			$\*D_{\betavarpi}$ & $ \+E\Big[ \Big( \frac{w}{e_{\bm\gamma}}   + \frac{1 - w}{1 - e_{\bm\gamma}}\Big)^2  \frac{\partial \log f(y \mid \*x, w, \bm\beta)  }{\partial \bm\beta} \cdot \big(\frac{\partial \log f(y \mid \*x, w, \bm\beta)  }{ \partial \bm\beta}  \big)^\T \Big] $ & $\*D_{\betavarpi}$ & $ \+E\Big[ \Big( w  + \frac{e_{\bm\gamma}(1 - w)}{1 - e_{\bm\gamma}}\Big)^2  \frac{\partial \log f(y \mid \*x, w, \bm\beta)  }{\partial \bm\beta} \cdot \big(\frac{\partial \log f(y \mid \*x, w, \bm\beta)  }{ \partial \bm\beta}  \big)^\T \Big] $ \\
			\midrule
			$\*C_{\betavarpi}$ & $ \+E\Big[ \Big( \frac{w}{e^2_{\bm\gamma}}   - \frac{1 - w}{(1 - e_{\bm\gamma})^2}\Big)  \frac{\partial \log f(y \mid \*x, w, \bm\beta)  }{\partial \bm\beta} \cdot \big(\frac{\partial \log e(\*x, \bm\gamma)  }{ \partial \bm\gamma}  \big)^\T \Big] $ & $\*C_{\betavarpi,1}$  & $ \+E\Big[ - \frac{(1 - w)}{(1 - e_{\bm\gamma})^2}  \frac{\partial \log f(y \mid \*x, w, \bm\beta)  }{\partial \bm\beta} \cdot \big(\frac{\partial \log e(\*x, \bm\gamma)  }{ \partial \bm\gamma}    \big)^\T \Big] $ \\
			& & $\*C_{\betavarpi,2}$  & $ \+E\Big[ \Big( \frac{w}{e_{\bm\gamma}}  - \frac{e_{\bm\gamma}(1 - w)}{(1 - e_{\bm\gamma})^2}\Big)  \frac{\partial \log f(y \mid \*x, w, \bm\beta)  }{\partial \bm\beta} \cdot \big(\frac{\partial \log e(\*x, \bm\gamma)  }{ \partial \bm\gamma}    \big)^\T \Big] $ \\
			\bottomrule
		\end{tabular}
		\bnotetab{In the definitions of these matrices, $e_{\bm\gamma}$ denotes $e_{\bm\gamma}(\*x_i) = e(\*x_i, \bm\gamma)$ by a slight abuse of notation. }
	}
\label{tab:def-matrices}
\end{table}

\subsection{Three Weighting Methods}\label{subsec:weighting-methods}

{\blue We list the three weighting methods used in our federated estimators. The choice of weighting methods in each federated estimator 
is based on the functional form of the corresponding estimator for a single data set, as shown in Section \ref{sec:estimation}, and ensures that the federated estimators can be consistent, as shown in Section \ref{sec:results}.}

\subsubsection{Hessian Weighting}
{\blue Hessian weighting is used to estimate target parameters $\bm{\beta}^\concat_0$ and $\bm{\gamma}^\concat_0$ in the outcome and propensity models,} and is defined as
\begin{align}
\hat{\bm\beta}^\pool  = \bigg( \sum_{k = 1}^{D} \hat{\*H}_{\bm\beta}^\sk \bigg)^\I \bigg( \sum_{k = 1}^{D} \hat{\*H}_{\bm\beta}^\sk \hat{\bm\beta}^\sk  \bigg)\, ,\quad \text{ where } \hat{\*H}_{\bm\beta}^\sk = \frac{\partial^2 \bm\ell_{n_k}(\hat{\bm\beta}^\sk)}{\partial \bm\beta^\sk (\partial \bm\beta^\sk)^\top }\, . \label{eqn:hessian-weighting}
\end{align}
for parameters in the outcome model. 
For the propensity model, we just replace $\hat{\bm\beta}^\sk$ by $\hat{\bm\gamma}^\sk$ and $\hat{\*H}_{\bm\beta}^\sk$ by  $\hat{\*H}_{\bm\gamma}^\sk$ in  \eqref{eqn:hessian-weighting}.

\subsubsection{Sample Size Weighting}
{\blue Sample size weighting is used to obtain variance estimators (see more details in Tables \ref{tab:mle}, \ref{tab:ht}, and \ref{tab:aipw}), and is used to estimate ATE and ATT under unstable propensity or outcome models.}
For some generic scalar or matrix $\*M$, we refer to sample size weighting as
\begin{align}
\*M^\pool = \sum_{k = 1}^D \frac{n_k}{n_\npool} \*M^\sk\, ,\quad \text{ where }  n_\npool = \sum_{k = 1}^D n_k\, . \label{eqn:sample-size-weighting}
\end{align}

\subsubsection{Inverse Variance Weighting}\label{subsubsec:ivw}
{\blue Inverse variance weighting (IVW) is used to estimate ATE and ATT and their variance under stable propensity and outcome models.}
For some generic point estimator $\hat{\bm{\nu}}$, we refer to inverse variance weighting as
\begin{align}
\hat{\bm{\nu}}^\pool =& \Bigg( \sum_{k = 1}^{D} \left(\Var(\hat{\bm{\nu}}^\sk )\right)^\I \Bigg)^\I \Bigg( \sum_{k = 1}^{D} \left(\Var(\hat{\bm{\nu}}^\sk )\right)^\I \bm{\nu}^\sk  \Bigg)\,  ,\label{eqn:inverse-variance-weighting} \\
\widetilde{\Var}(\hat{\bm{\nu}}^\pool) =& n_\npool  \left( \sum_{k = 1}^{D} \left(\Var(\hat{\bm{\nu}}^\sk )\right)^\I \right)^\I, \label{eqn:inverse-variance-weighting-variance}
\end{align}
where $\Var(\hat{\bm{\nu}})$ is the variance of $\hat{\bm{\nu}}$, and $\widetilde{\Var}(\hat{\bm{\nu}})$ is $\Var(\hat{\bm{\nu}})$ multipled by the sample size.

\section{Federated Estimators}\label{sec:estimation}
In this section, we introduce three categories of federated inference methods that consist of both point and variance estimators of target parameters in Section \ref{subsec:target-parameters}. These three categories are based on MLE, IPW-MLE and AIPW, respectively. For each category, we start with the simple case in which the propensity and outcome models are stable.
We refer to the federated estimators in this case as \textbf{restricted} federated estimators. 
Next we consider the more challenging case in which at least one of propensity and outcome models is unstable.
The federated estimators for this case are referred to as \textbf{unrestricted} federated estimators, which are built on the corresponding restricted federated estimators. 

Figures \ref{flowchart:mle-unstable}, \ref{flowchart:ht-unstable} and \ref{flowchart:aipw} show the flowcharts of our federated inference methods under different conditions. Tables \ref{tab:mle}, \ref{tab:ht} and \ref{tab:aipw} provide the details of our federated methods.

\subsection{Federated MLE}
We introduce our federated MLE using the outcome model, where the target parameter is $\bm{\beta}^\concat$. However, our federated MLE is also applicable to the propensity model.

\subsubsection{Restricted Federated MLE for Stable Models (Condition \ref{cond:stable-propensity}/ \ref{cond:stable-outcome} Holds)}\label{subsec:pool-stable-mle}

{\blue
When outcome models are stable (i.e., $\bm\beta^\concat = \bm\beta^\sk$ for all $k$), we can use the restricted federated MLE for $\bm{\beta}^\concat$. Let $\hat{\bm\beta}^\pool_\mle$ be the federated point estimator that is obtained by first applying MLE on each data set $k$ to estimate  
parameter $\bm\beta^\sk$, and then using Hessian weighting in \eqref{eqn:hessian-weighting} to combine estimated parameters across all data sets.

We propose this federated estimator based on the objective of satisfying the first-order condition of MLE. When we use Hessian weighting, this objective can be satisfied with the key steps outlined below:
\begin{align*}
    \frac{\partial \sum_{k=1}^D \bm\ell_{n_k}(\hat{\bm\beta}^\pool_\mle) }{\partial \bm\beta} =& \sum_{k=1}^D \frac{\partial  \bm\ell_{n_k}(\bm\beta_0) }{\partial \bm\beta } + \sum_{k=1}^D \*H_{\bm{\beta}}^\sk  \left( \hat{\bm\beta}^\pool_\mle - \bm\beta_0  \right)  \\
   =& \sum_{k=1}^D \frac{\partial  \bm\ell_{n_k}(\bm\beta_0) }{\partial \bm\beta } + \sum_{k=1}^D \*H_{\bm{\beta}}^\sk  \left(  \hat{\bm\beta}^\sk_\mle - \bm\beta_0  \right) \tag{Hessian weighting of $\hat{\bm\beta}^\pool_\mle$} \\
   =& \sum_{k=1}^D \frac{\partial  \bm\ell_{n_k}(\hat{\bm\beta}^\sk_\mle) }{\partial \bm\beta} = 0 \tag{gradient at $\hat{\bm\beta}^\sk_\mle$ is zero for all $k$}
\end{align*}

}

Our federated variance estimator is obtained via a two-step procedure. First, we estimate the terms in the robust variance formula, $\*A_{\bm\beta}$ and $\*B_{\bm\beta}$ (see Table \ref{tab:def-matrices} for the definition), on each data set. Let $\hat{\*A}_{\bm\beta}^\sk$ and $\hat{\*B}_{\bm\beta}^\sk$ be the estimators on data set $k$. Second, we obtain the federated variance using sample size weighting\footnote{If the outcome model is correctly specified, the information matrix equivalence holds, implying that $ \*A_{\bm\beta} = \*B_{\bm\beta}$ and $\*V_{\bm\beta} = \*A_{\bm\beta}^\I$. Then we only need to estimate and combine $\*A_{\bm\beta}^\sk$.} 
\begin{align*}
    \hat{\*V}^\pool_{\bm\beta} =& (\hat{\*A}_{\bm\beta}^\pool)^\I \cdot \hat{\*B}_{\bm\beta}^\pool \cdot (\hat{\*A}_{\bm\beta}^\pool)^\I
\end{align*} 
where
\begin{align}\label{eqn:sample-size-weighting-federation}
\hat{\*A}_{\bm\beta}^\pool = \sum_{k = 1}^D \frac{n_k}{n_\npool} \hat{\*A}_{\bm\beta}^\sk\text{~~~~and~~~~} \hat{\*B}_{\bm\beta}^\pool = \sum_{k = 1}^D \frac{n_k}{n_\npool} \*B_{\bm\beta}^\sk.
\end{align}
This federated variance uses the robust variance formula and is, therefore, robust to outcome model misspecification \citep{white1982maximum}. We use sample size weighting here based on the property that $\*A_{\bm\beta}$ and $\*B_{\bm\beta}$ on the combined data equals the weighted average of the corresponding matrices on individual data sets by sample size.

\begin{figure}[t!]
	\centering
	\tcapfig{Flowchart for Federated MLE}
	\includegraphics[width=.6\linewidth]{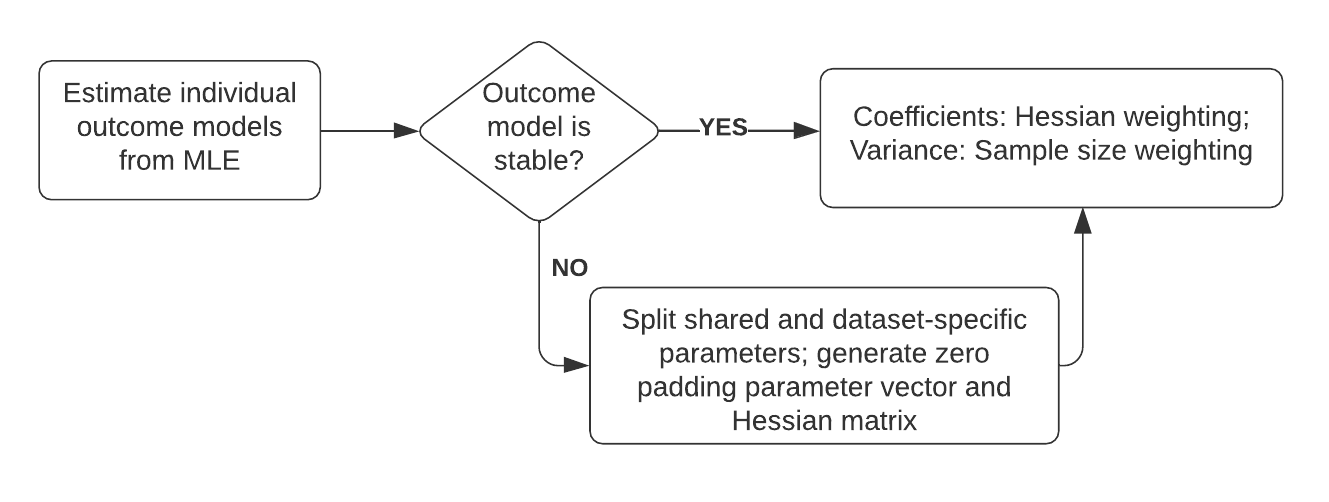}	
	\bnotefig{See Section \ref{subsec:practical-guidance} for practical guidance on determining whether the outcome model is stable.}
	\label{flowchart:mle-unstable}
\end{figure}

\subsubsection{Unrestricted Federated MLE for Unstable Models (Condition \ref{cond:stable-propensity}/ \ref{cond:stable-outcome} is Violated)}\label{subsec:pool-mle-model-shift}

{\blue 
Our unrestricted federated MLE is conceptually similar to our restricted federated MLE, but additionally handles the instability of parameters across datasets. Specifically, our unrestricted estimator only combines the shared parameters across data sets and leaves the dataset-specific parameters as they are in federation. The key to treating shared and dataset-specific parameters differently is to use a zero-padding technique.\footnote{Zero-padding is a commonly used technique in signal processing \citep{madan2016optimal} and deep learning \citep{o2015introduction} to pre-process inputs to the same length.}

Specifically, for each data set $k$, we pad $\bm\beta^\sk$ with zeros so that the padded $\bm\beta^\sk$, denoted as $\bm\beta^{\pad,\sk}$, is aligned with $\bm\beta^\concat = (\bm\beta_{\mathrm{s}}, \bm\beta_{\mathrm{uns}}^{(1)}, \bm\beta_{\mathrm{uns}}^{(2)}, \cdots, \bm\beta_{\mathrm{uns}}^{(D)})$. We similarly pad each matrix on data set $k$ so that it is aligned with the corresonding matrix on the combined data. 
Below we provide an example of zero-padding $\bm\beta^{(1)}$ and $\*H^{(1)}_{\bm{\beta}}$ for data set $k = 1$:
}
{\small
\begin{align}
\bm\beta^{\pad,(1)} = \begin{pmatrix}
\bm\beta_{\mathrm{s}}  \\ {\bm\beta}^\sk_{\mathrm{uns}} \\ \mathbf{0} 
\end{pmatrix}, \quad
{\*H}_{\bm\beta}^{\pad,(1)}= \begin{pmatrix}
\begin{tabular}{ccccc}
${\*H}_{\bm\beta,\mathrm{s},\mathrm{s}}$  & ${\*H}_{\bm\beta,\mathrm{s}, \mathrm{uns}}^{(1)}$ & $\mathbf{0}$  \\
${\*H}_{\bm\beta,\mathrm{uns},\mathrm{s}}^{(1)}$ & ${\*H}_{\bm\beta,\mathrm{uns}, \mathrm{uns}}^{(1)}$ & $\mathbf{0}$ \\
$\mathbf{0}$ & $\mathbf{0}$ & $\mathbf{0}$   
\end{tabular}
\end{pmatrix}.\label{eqn:expand-beta-H}
\end{align}}

The zero-padding of other vectors and matrices for other $k$ is conceptually the same. The unrestricted point and variance estimator essentially applies the restricted point and variance estimator to the padded parameters and matrices. In this way, the unrestricted estimator only federates the shared parameters. 

Note that it is possible to treat some parameters as dataset-specific parameters even though they are stable. This approach does not affect the consistency of the federated estimator; however, as the number of parameters on the combined data increases, the federated estimator is weakly less efficient than that using the most parsimonious specification, as stated in the following proposition. See Table \ref{tab:efficiency-comparison} Appendix  \ref{efficiency-comparison} for a numerical example.

\begin{table}[t!]
	\centering
	\tcaptab{Federated Maximum Likelihood Estimator}
	{\renewcommand\arraystretch{1.25}
		\footnotesize
		\begin{tabular}{l|l|l} \toprule
			\multicolumn{1}{p{2.5cm}|}{\raggedright Description} & \multicolumn{1}{p{5.2cm}|}{\raggedright Assume Stable Outcome Model (MLE \#1)}  &  \multicolumn{1}{p{6.2cm}}{\raggedright Assume Unstable Outcome Model (MLE \#2)}    \\  \midrule
			\multicolumn{1}{p{3.cm}|}{\raggedright Stable outcome model} &  \multicolumn{1}{p{3.cm}|}{\raggedright yes}  &  \multicolumn{1}{p{3.cm}}{\raggedright no}    \\ \midrule
			\multicolumn{1}{p{3.cm}|}{\raggedright Parameter $\bm\beta$ federation} &  \multicolumn{1}{p{5.2cm}|}{\raggedright $ \Big( \sum_{k = 1}^{D} \hat{\*H}_{\bm\beta}^\sk\Big)^\I \Big( \sum_{k = 1}^{D} \hat{\*H}_{\bm\beta}^\sk  \hat{\bm\beta}^\sk  \Big)$}  &  \multicolumn{1}{p{6.6cm}}{\raggedright $ \Big( \sum_{k = 1}^{D} \hat{\*H}_{\bm\beta}^{\pad,\sk}\Big)^\I \Big( \sum_{k = 1}^{D} \hat{\*H}_{\bm\beta}^{\pad,\sk}  \hat{\bm\beta}^{\pad,\sk} \Big)$ }   \\ \midrule
			\multicolumn{1}{p{3.cm}|}{\raggedright Variance $\*V_{\bm\beta}$ federation} 
			& \multicolumn{1}{p{5.5cm}|}{\raggedright Sample size weighting  $\hat{\*A}^{(k)}_{\bm\beta}$ and $\hat{\*B}^{(k)}_{\bm\beta}$ in $\*V_{\bm\beta} = \*A_{\bm\beta}^\I \*B_{\bm\beta} \*A_{\bm\beta}^\I$}  & \multicolumn{1}{p{5.5cm}}{\raggedright Sample size weighting  $\hat{\*A}^{\pad,(k)}_{\bm\beta}$ and $\hat{\*B}^{\pad,(k)}_{\bm\beta}$ in $\*V_{\bm\beta} = \*A_{\bm\beta}^\I \*B_{\bm\beta} \*A_{\bm\beta}^\I$  }  \\
			\midrule
            \multicolumn{1}{p{3.cm}|}{\raggedright Asymptotic results} & \multicolumn{2}{m{7.5cm}}{\raggedright Theorem  \ref{thm:pool-mle}}   \\
			\bottomrule
	\end{tabular}}
	\bnotetab{
    This table also holds for the propensity model.
	The second row correspond to Condition \ref{cond:stable-outcome}.  $\hat{\*H}_{\bm\beta}^\sk $ denotes the estimated Hessian. $\*A_{\bm\beta}$ and $\*B_{\bm\beta}$ are defined in Table \ref{tab:def-matrices}. $\hat{\*H}_{\bm\beta}^\sk$ increases with sample size $n_k$, while $\*A_{\bm\beta}$ and $\*B_{\bm\beta}$ do not. For a generic vector or matrix $\*x$, $\*x^{\pad}$ denotes $\*x$ padded with zeros.
	}
	
	\label{tab:mle}
\end{table}

\begin{proposition}\label{proposition:adjust-data-efficiency}
	Suppose $Y_i$ follows a generalized linear model that is stable across data sets (Condition \ref{cond:stable-outcome} holds). If we use unrestricted federated MLE with a flexible outcome model specification on the combined data  (i.e., $\bm\beta^\concat$ has a higher dimension than the most parsimonious specification), then we get a weakly less efficient estimate of $\bm\beta_{\mathrm{s}}$ than that from restricted federated MLE.
\end{proposition}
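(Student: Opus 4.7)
The plan is to reduce the efficiency comparison to a pair of pooled MLEs on the combined, individual-level data via the asymptotic equivalence in Theorem~\ref{thm:pool-mle}. Under Condition~\ref{cond:stable-outcome}, the restricted federated MLE is asymptotically equivalent to the pooled MLE under the parsimonious specification with the single shared parameter $\bm\beta^\concat = \bm\beta_0$, while the unrestricted federated MLE under the flexible specification is asymptotically equivalent to the pooled MLE under the over-parameterized model $\bm\beta^\concat = (\bm\beta_{\mathrm{s}}, \bm\beta_{\mathrm{uns}}^{(1)}, \dots, \bm\beta_{\mathrm{uns}}^{(D)})$. Since the true parameters satisfy $\bm\beta_{\mathrm{uns}}^{(1)} = \cdots = \bm\beta_{\mathrm{uns}}^{(D)}$ under Condition~\ref{cond:stable-outcome}, both estimators are consistent for $\bm\beta_{\mathrm{s}}$, so the proof reduces to comparing their asymptotic variances on that subvector. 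Letting $\*A^\sk_{\bm\beta}$ denote the per-observation Fisher information on data set $k$, partitioned conformably as $\bigl(\begin{smallmatrix} \*A^\sk_{\mathrm{ss}} & \*A^\sk_{\mathrm{su}} \\ \*A^\sk_{\mathrm{us}} & \*A^\sk_{\mathrm{uu}} \end{smallmatrix}\bigr)$, I would first write out the two total Fisher information matrices: in the restricted case it is simply $\sum_k n_k \*A^\sk_{\bm\beta}$, while in the unrestricted case it carries $\sum_k n_k \*A^\sk_{\mathrm{ss}}$ in the $(\mathrm{s},\mathrm{s})$ block, $n_k \*A^\sk_{\mathrm{su}}$ in the block coupling $\bm\beta_{\mathrm{s}}$ to $\bm\beta_{\mathrm{uns}}^{(k)}$, $n_k \*A^\sk_{\mathrm{uu}}$ on the $\bm\beta_{\mathrm{uns}}^{(k)}$ diagonal block, and zero coupling between $\bm\beta_{\mathrm{uns}}^{(j)}$ and $\bm\beta_{\mathrm{uns}}^{(k)}$ whenever $j \neq k$.

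Next I would apply the Schur complement / block-inverse formula to extract the $(\bm\beta_{\mathrm{s}},\bm\beta_{\mathrm{s}})$ block of each inverse. The restricted shared-part variance is $\bigl(\sum_k n_k \*A^\sk_{\mathrm{ss}} - (\sum_k n_k \*A^\sk_{\mathrm{su}})(\sum_k n_k \*A^\sk_{\mathrm{uu}})^{-1}(\sum_k n_k \*A^\sk_{\mathrm{us}})\bigr)^{-1}$. Exploiting the block diagonality of the $\bm\beta_{\mathrm{uns}}$ part of the unrestricted Fisher information, the corresponding unrestricted shared-part variance works out to $\bigl(\sum_k n_k \*A^\sk_{\mathrm{ss}} - \sum_k n_k \*A^\sk_{\mathrm{su}}(\*A^\sk_{\mathrm{uu}})^{-1}\*A^\sk_{\mathrm{us}}\bigr)^{-1}$. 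The proposition therefore reduces to the positive semidefinite inequality
\[
\sum_k n_k \*A^\sk_{\mathrm{su}}(\*A^\sk_{\mathrm{uu}})^{-1}\*A^\sk_{\mathrm{us}} \;\succeq\; \Bigl(\sum_k n_k \*A^\sk_{\mathrm{su}}\Bigr)\Bigl(\sum_k n_k \*A^\sk_{\mathrm{uu}}\Bigr)^{-1}\Bigl(\sum_k n_k \*A^\sk_{\mathrm{us}}\Bigr),
\]
since inverting both sides reverses the order and transfers the $\succeq$ to the variance matrices.

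The only genuine technical step, and the main obstacle, is establishing this matrix inequality. I would argue it by an elementary Schur-complement construction: setting $B_k = n_k \*A^\sk_{\mathrm{su}}$ and $C_k = n_k \*A^\sk_{\mathrm{uu}}$, each block matrix $M_k = \bigl(\begin{smallmatrix} B_k C_k^{-1} B_k^\T & B_k \\ B_k^\T & C_k \end{smallmatrix}\bigr)$ is positive semidefinite because its Schur complement with respect to $C_k$ vanishes. Summing preserves positive semidefiniteness, so $\sum_k M_k \succeq 0$, and its Schur complement with respect to the $\sum_k C_k$ block delivers exactly the desired inequality. This is essentially the matrix Jensen inequality for the jointly convex map $(B,C) \mapsto B C^{-1} B^\T$ on $C \succ 0$. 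Equality holds precisely when $\*A^\sk_{\mathrm{su}}(\*A^\sk_{\mathrm{uu}})^{-1}$ is constant in $k$---for example under the additional Condition~\ref{cond:stable-covariate}, where the Fisher information itself is identical across data sets---so in general the loss of efficiency is only weak, yielding the \emph{weakly} less efficient conclusion stated in the proposition.
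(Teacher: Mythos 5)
Your proposal is correct, and its skeleton matches the paper's: both reduce the claim to comparing the $(\mathrm{s},\mathrm{s})$ blocks of the inverses of the pooled information matrices via Schur complements, arriving at the same crux inequality $\sum_k B_k C_k^{-1} B_k^\T \succeq \big(\sum_k B_k\big)\big(\sum_k C_k\big)^{-1}\big(\sum_k B_k\big)^\T$ with $B_k$ the shared--unshared cross block and $C_k$ the unshared diagonal block of data set $k$'s information. Where you genuinely diverge is in proving that inequality. The paper works with the conditional information $\tilde{\*X}^\T \Xi \tilde{\*X}$ for $D=2$, takes SVDs of $(\Xi^{(j)})^{-1/2}\tilde{\*X}_{\mathrm{uns}}^{(j)}$, reduces everything to a scalar inequality $\sum_i \bar a_{1,i}^2 d_i^2/(1+d_i^2) + \sum_i \bar a_{2,i}^2/(1+d_i^2) \geq 2\sum_i \bar a_{1,i}\bar a_{2,i} d_i/(1+d_i^2)$ settled by Cauchy--Schwarz, and then appeals to induction for $D>2$. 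Your argument -- each $M_k = \bigl(\begin{smallmatrix} B_k C_k^{-1} B_k^\T & B_k \\ B_k^\T & C_k\end{smallmatrix}\bigr)$ is PSD because its Schur complement vanishes, sums of PSD matrices are PSD, and the Schur complement of $\sum_k M_k$ with respect to $\sum_k C_k$ is exactly the desired difference -- is the joint-convexity proof of the same fact; it handles arbitrary $D$ in one stroke, avoids the SVD bookkeeping entirely, and gives the equality condition ($B_k C_k^{-1}$ constant in $k$) essentially for free, which nicely explains why the efficiency loss is only ``weak.'' The only other difference is cosmetic: you phrase everything in terms of per-observation expected Fisher information and invoke Theorem~\ref{thm:pool-mle} to identify the federated estimators with the corresponding pooled MLEs, whereas the paper compares the finite-sample model-based variances $(\tilde{\*X}^\T\Xi\tilde{\*X})^{-1}$ directly; the algebraic content is identical.
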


\subsection{Federated IPW-MLE}\label{subsec:federate-ht}

The target parameter of our federated IPW-MLE is $\bm{\beta}^\concat$ in the outcome model on the combined data. As IPW-MLE uses the propensity scores, we need to account for whether the propensity scores are known or estimated. If they are estimated, then our federated IPW-MLE also estimates and federates the propensity models.

\subsubsection{Restricted Federated IPW-MLE for Stable Models (Conditions \ref{cond:stable-propensity} and \ref{cond:stable-outcome} Hold)}\label{subsec:pool-stable-ht}

{\blue 
Let $\hat{\bm{\beta}}^\pool_\HT$ be our restricted federated point estimator for $\bm\beta^\concat$ obtained via a three-step procedure. First, if the propensity scores are unknown, we use restricted MLE to estimate the parameters in the propensity model on the combined data and obtain the federated propensity scores; otherwise, skip this step. Second, we use IPW-MLE with federated propensity scores to estimate $\bm\beta^\sk$ on each data set $k$. Third, we combine estimated $\bm\beta^\sk$ by Hessian weighting to obtain $\hat{\bm{\beta}}^\pool_\HT$.
Similar to federated MLE, this federated point estimator is designed to satisfy the first-order condition of IPW-MLE. 

The federated variance estimator of IPW-MLE is designed based on the variance formula of IPW-MLE in Lemma \ref{lemma:expression-ht-var} in Section \ref{subsec:ht-results} for a single data set. For every term in the variance formula, we estimate it on each data set. We combine the estimated terms across data sets by sample size weighting, and plug the sample size weighted terms into the variance formula to obtain the federated variance. The procedure is conceptually similar to that for MLE, but operates on a different variance formula. See Table \ref{tab:ht} for more details.

}

\subsubsection{Unrestricted Federated IPW-MLE for Unstable Models (Condition \ref{cond:stable-propensity} or \ref{cond:stable-outcome} is Violated)}\label{subsec:pool-ht-model-shift}

Similar to unrestricted federated MLE, our unrestricted federated IPW-MLE only federates shared parameters in the propensity and outcome models, and leaves the dataset-specific parameters as they are in federation. We first pad the parameters and matrices on each data set with zeros to match the dimensionality of the corresponding parameters and matrices on the combined data. Then we apply restricted federated IPW-MLE to the zero-padded parameters and matrices to obtain point and variance estimates of the target parameter.

\begin{figure}[t]
	\centering
	\tcapfig{Flowchart for Federated IPW-MLE}
	\includegraphics[width=.95\linewidth]{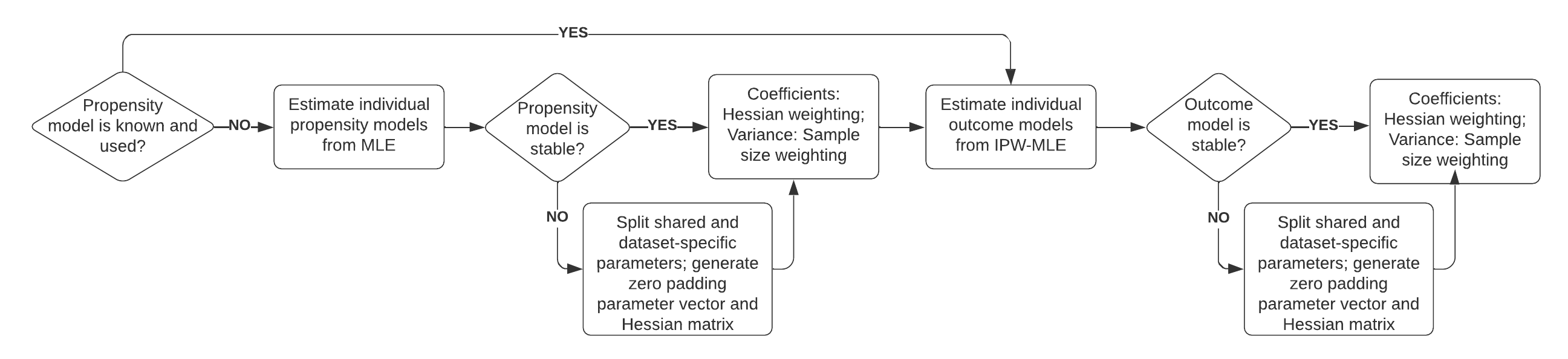}
	\label{flowchart:ht-unstable}
	\bnotefig{See Section \ref{subsec:practical-guidance} for practical guidance on determining whether the propensity/outcome model is stable.}
\end{figure}

\subsection{Federated AIPW Estimator}\label{subsec:federate-aipw}
{\blue

Our federated AIPW estimates ATE or ATT on the combined data. 
The illustration of federated AIPW uses ATE as an example. The federation of ATT is conceptually the same. 

\begin{landscape}
	\begin{table}
			\vspace{-1.5cm}
		\centering
		\tcaptab{Federated Inverse Propensity-Weighted Maximum Likelihood Estimator}
		{\renewcommand\arraystretch{1.25}
			\footnotesize
			\begin{tabular}{l|l|l|l|l}\toprule
				\multicolumn{1}{p{3.cm}|}{\raggedright Description} & \multicolumn{1}{p{5.cm}|}{\raggedright Assume Stable Known Propensity and Stable Outcome Model (IPW-MLE \#1)}  &  \multicolumn{1}{p{7.cm}|}{\raggedright Assume Stable Misspecified Propensity and Stable Outcome Model (IPW-MLE \#2)} &  \multicolumn{1}{p{5.cm}}{\raggedright Assume Unstable Propensity or Unstable Outcome Model (IPW-MLE \#3)}    \\  \midrule
				\multicolumn{1}{p{3.cm}|}{\raggedright Stable propensity model} & \multicolumn{1}{p{3.cm}|}{\raggedright yes}  &  \multicolumn{1}{p{3.cm}|}{\raggedright yes}  &  \multicolumn{1}{p{3.cm}}{\raggedright yes or no}    \\ \midrule
				\multicolumn{1}{p{3.cm}|}{\raggedright Stable outcome model} & \multicolumn{1}{p{3.cm}|}{\raggedright yes}  &  \multicolumn{1}{p{3.cm}|}{\raggedright yes}  &  \multicolumn{1}{p{3.cm}}{\raggedright yes or no}   \\ \midrule 
	\multicolumn{1}{p{3.cm}|}{\raggedright Parameter $\bm\beta$ federation} &  \multicolumn{1}{p{5.cm}|}{\raggedright (1) Estimate  ${\bm\beta}^\sk$ using  ${\bm\gamma}_0$; (2) Federate $\hat{\bm\beta}^\sk$ by Hessian weighting} & \multicolumn{1}{p{7.cm}|}{\raggedright (1) Federate  $\hat{\bm\gamma}^\sk $ by Hessian weighting; (2) Estimate  ${\bm\beta}^\sk$ using  $\hat{\bm\gamma}^\pool$; (3) Federate $\hat{\bm\beta}^\sk$ by Hessian weighting} & \multicolumn{1}{p{5.cm}}{\raggedright Same federation procedure, but with $\hat{\bm\gamma}^{\pad,\sk}$ and $\hat{\*H}_{\bm\gamma}^{\pad,\sk}$ if propensity models are unstable and estimated, and with $\hat{\bm\beta}^{\pad,\sk}$ and $\hat{\*H}_{\bm\beta}^{\pad,\sk}$ if outcomes models are unstable}  \\ \midrule
	\multicolumn{1}{p{3.cm}|}{\raggedright Variance $\*V_{\bm\beta}$ federation} & \multicolumn{1}{p{5.cm}|}{\raggedright  $\*V_{\bm\beta} = \*A_{\betavarpi}^\I \*D_{\betavarpi} \*A_{\betavarpi}^\I$ \\  (1) Estimate $\*A_{\betavarpi}^\sk$, $\*D_{\betavarpi}^\sk$ using $\hat{\bm\beta}^\pool$; (2) Federate $\hat{\*A}_{\betavarpi}^\sk$ and $\hat{\*D}_{\betavarpi}^\sk$ by sample size weighting}  &  \multicolumn{1}{p{7.cm}|}{\raggedright  $\*V_{\bm\beta} = \*A_{\betavarpi}^\I (\*D_{\betavarpi} - \*M_{\bm\beta, \varpi,\bm\gamma}) \*A_{\betavarpi}^\I$,  $\*M_{\bm\beta, \varpi,\bm\gamma} = \*C_{\betavarpi} \*V_{\bm\gamma} \*C_{\betavarpi}^\T$ for ATE weighting; $\*M_{\bm\beta, \varpi,\bm\gamma} =\*C_{\bm\beta, \varpi,1} \*V_{\bm\gamma}  \*C_{\bm\beta, \varpi,2}^\T +   \*C_{\bm\beta, \varpi,2} \*V_{\bm\gamma}  \*C_{\bm\beta, \varpi,1}^\T - \*C_{\bm\beta, \varpi,2} \*V_{\bm\gamma}  \*C_{\bm\beta, \varpi,2}^\T $  for ATT weighting; $\*V_{\bm\gamma} = \*A_{\bm\gamma}^\I \*B_{\bm\gamma} \*A_{\bm\gamma}^\I$. \\  (1) Estimate $\*A_{\betavarpi}^\sk$, $\*C_{\betavarpi}^\sk$, $\*D_{\betavarpi}^\sk$, $\*A^\sk_{\bm\gamma}$, and $ \*B^\sk_{\bm\gamma}$ using $\hat{\bm\gamma}^\pool$ and $\hat{\bm\beta}^\pool$; (2) Federate $\hat{\*A}_{\betavarpi}^\sk$, $\hat{\*C}_{\betavarpi}^\sk$, $\hat{\*D}_{\betavarpi}^\sk$, $\hat{\*A}^\sk_{\bm\gamma}$, and $ \hat{\*B}^\sk_{\bm\gamma}$ by sample size weighting }  & \multicolumn{1}{p{5.cm}}{\raggedright  Same federation procedure, but with $\hat{\bm\gamma}^{\pad,\sk}$, $\hat{\*A}_{\bm\gamma}^{\pad,\sk}$, $\hat{\*C}^{\pad,\sk}_{\betavarpi}$ and $\hat{\*B}_{\bm\gamma}^{\pad,\sk}$ if propensity models are unstable and estimated, and with $\hat{\bm\beta}^{\pad,\sk}$, $\hat{\*A}^{\pad,\sk}_{\betavarpi}$, $\hat{\*D}^{\pad,\sk}_{\betavarpi}$, and $\hat{\*C}^{\pad,\sk}_{\betavarpi}$ if outcomes models are unstable}   \\ 	\hline
		\multicolumn{1}{p{3cm}|}{\raggedright Asymptotic results} &   \multicolumn{3}{m{10.cm}}{Theorem \ref{theorem:ht-est-prop}}   \\ \bottomrule
		\end{tabular}}
		\bnotetab{
		The second and third rows correspond to Conditions \ref{cond:stable-propensity} and \ref{cond:stable-outcome}. ``yes or no" means that the solution does not vary with whether the condition is satisfied or not. The definitions of $\*A_{\betavarpi}, \*D_{\betavarpi},\*C_{\betavarpi},\*C_{\betavarpi,1},\*C_{\betavarpi,2},  \*A_{\bm\gamma}$, and $ \*B_{\bm\gamma}$ can be found in Table \ref{tab:def-matrices}. When the propensity model is estimated (Condition \ref{cond:known-propensity} is violated), the coefficient federation procedure is the same for all scenarios, but is simplified when the true propensity is used (Condition \ref{cond:known-propensity} holds). The variance federation procedure varies with whether the true propensity is used and whether ATE or ATT weighting is used. The definitions of ATE and ATT weighting can be found in Section \ref{subsec:ht-estimation}. For a generic vector or matrix $\*x$, $\*x^{\pad}$ denotes $\*x$ padded with zeros.
		}
		\label{tab:ht}
	\end{table}
\end{landscape}


\subsubsection{Restricted AIPW Estimator for Stable Models and Stable Covariate Distributions (Conditions \ref{cond:stable-propensity}, \ref{cond:stable-outcome} and \ref{cond:stable-covariate} Hold)}\label{subsec:pool-stable-aipw}
As the AIPW estimator uses both outcome and propensity models, we need to federate both propensity and outcome models. When covariate distributions, propensity models, and outcome models are stable, we propose to use the restricted federated AIPW, which has three steps. First, we use federated MLE to obtain a federated propensity model and a federated outcome model.\footnote{When the true propensity model is known and used, we do not need to federate the individual propensity models.} Second, we use AIPW with the federated propensity and outcome models to estimate ATE on each data set. Finally, we obtain the federated ATE by inverse variance weighting the estimated ATE on each data set, as in formula \eqref{eqn:inverse-variance-weighting}.

\begin{figure}[t!]
	\centering
	\caption{Flowchart for Federated AIPW}
	\includegraphics[width=1.\linewidth]{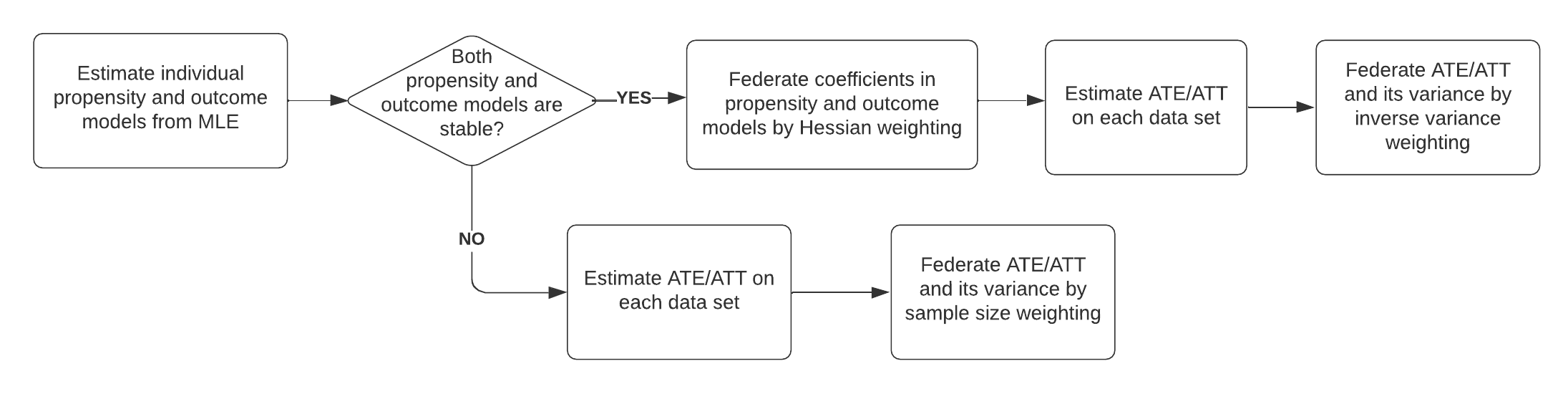}
	\bnotefig{See Section \ref{subsec:practical-guidance} for practical guidance on determining whether propensity and outcome models are stable.}
	\label{flowchart:aipw}
\end{figure}

To obtain the federated variance, we first estimate the variance of the estimated ATE on each data set, and then use inverse variance weighting to combine the estimated variances on all data sets together, as in formula \eqref{eqn:inverse-variance-weighting-variance}.

Note that, under stable covariate distributions and stable propensity and outcome models, ATE and asymptotic variance of ATE are the same for all data sets. In this case, we can apply any weighting scheme to combine the estimated ATE together. We choose IVW because it has the smallest variance among all weighting schemes, as shown in Appendix \ref{subsec:ivw-min-var}.

\subsubsection{Unrestricted AIPW Estimator for Unstable Models or Unstable Covariate Distributions (Either Condition \ref{cond:stable-propensity}, \ref{cond:stable-outcome} or \ref{cond:stable-covariate} is Violated)}\label{subsec:pool-aipw-model-shift}
When either propensity model, outcome model, or covariate distribution is unstable, ATE may not be the same across data sets. For this case, we suggest using the unrestricted federated AIPW. For this unrestricted estimator, we first estimate ATE and its asymptotic variance on each data set and then use sample size weighting to combine the estimated ATE and variances together:\footnote{\cmtfinal{The unrestricted AIPW is equivalent to the AIPW in \eqref{eqn:score} with the score on combined data estimated by $\hat{\phi}(\*X^\sk_i, W^\sk_i, Y^\sk_i ) \coloneqq \prod_{j=1}^K [\hat{\phi}^\sj(\*X^\sj_i, W^\sj_i, Y^\sj_i )]^{\bm{1}(j = k)} $ and $\hat{\phi}^\sj(\*X^\sj_i, W^\sj_i, Y^\sj_i )$ estimated using the estimated outcome and propensity models on data set $j$. } }
\begin{align}
\label{eqn:pool-aipw-var-model-shift}
\hat{\tau}^\pool_\aipw =  \sum_{k = 1}^{D} \frac{n_k}{n_\npool}  \hat{\tau}_\aipw^\sk \qquad \hat{\*V}_{\tau}^\pool  = \sum_{k = 1}^{D} \frac{n_k}{n_\npool} \hat{\*V}^\sk_\tau,
\end{align}
where $\hat{\tau}_\aipw^\sk$ is the estimated ATE on data set $k$, and $\hat{\*V}^\sk_\tau$ is the estimated variance of $\hat{\tau}_\aipw^\sk$.

This federated AIPW estimator is quite general. First, it is robust to propensity or outcome model misspecification. Second, it allows the propensity or/and outcome models to vary arbitrarily across data sets. Third, it allows $\hat{\tau}_\aipw^\sk$ to be estimated from flexible machine learning methods, such as random forests \citep{wager2018estimation}, as we do not need an approach to federate estimated propensity and outcome models across data sets. The tradeoff is that the unrestricted estimator is less efficient than the restricted estimator, under stable covariance distribution and stable propensity and outcome models. 

\begin{table}[t!]
	\centering
	\tcaptab{Federated AIPW Estimator}
	{\renewcommand\arraystretch{1.25}
		\footnotesize
		\begin{tabular}{l|l|l}\toprule
			\multicolumn{1}{p{2cm}|}{\raggedright Description}   & \multicolumn{1}{p{5.cm}|}{\raggedright Assume Stable Propensity and Stable Outcome Model (AIPW \#1)} &   \multicolumn{1}{p{5.cm}}{\raggedright Assume Unstable Propensity or Unstable Outcome Model (AIPW \#2)}     \\ 
			\midrule
			
			\multicolumn{1}{p{4.cm}|}{\raggedright Stable propensity model} & \multicolumn{1}{p{3.cm}|}{\raggedright yes} &  \multicolumn{1}{p{3.cm}}{\raggedright yes or no}    \\ \midrule
			\multicolumn{1}{p{4.cm}|}{\raggedright Stable outcome model} & \multicolumn{1}{p{3.cm}|}{\raggedright yes} &  \multicolumn{1}{p{3.cm}}{\raggedright yes or no}   \\ \midrule
   \multicolumn{1}{p{4.8cm}|}{\raggedright Stable covariate distribution} & \multicolumn{1}{p{3.cm}|}{\raggedright yes} &  \multicolumn{1}{p{3.cm}}{\raggedright yes or no}   \\ 
   \midrule 
			\multicolumn{1}{p{4.2cm}|}{\raggedright ATE or ATT $\tau$ federation}   & \multicolumn{1}{p{5.cm}|}{\raggedright (1) Federate $\hat{\bm\beta}^\sk $ (and $\hat{\bm\gamma}^\sk $ if necessary) by Hessian weighting; (2) Estimate  ${\tau}^\sk$ using  $\hat{\bm\beta}^\pool$ and $\hat{\bm\gamma}^\pool$ (or $\bm\gamma^\sk_0$ if known); (3) Federate $\hat{\tau}^\sk$ by inverse variance weighting.} &   \multicolumn{1}{p{5.cm}}{\raggedright (1) Estimate  ${\tau}^\sk$ using  $\hat{\bm\beta}^\sk$ and $\hat{\bm\gamma}^\sk$ (or $\bm\gamma^\sk_0$ if known); (2) Federate $\hat{\tau}^\sk$ by sample size weighting.}     \\  \midrule
			\multicolumn{1}{p{4.2cm}|}{\raggedright Variance $\*V_\tau$ federation}   & \multicolumn{1}{p{5.cm}|}{\raggedright Inverse variance weighting} &   \multicolumn{1}{p{5.cm}}{\raggedright Sample size weighting}     \\ 
			\midrule
			\multicolumn{1}{p{2cm}|}{\raggedright Results} &   \multicolumn{2}{m{6cm}}{Theorem \ref{theorem:pool-aipw} } \\ \bottomrule
	\end{tabular}}
	\bnotetab{The second to fourth rows correspond to Conditions \ref{cond:stable-propensity}, \ref{cond:stable-outcome} and \ref{cond:stable-covariate}. 
	}
	\label{tab:aipw}
\end{table}

 }

{\blue 
\subsection{Practical Guidance}\label{subsec:practical-guidance}

In this subsection, we suggest some diagnostic tests that may help practitioners choose between restricted and unrestricted methods and determine the set of shared parameters. For ease of discussion, our empirical application is used as a running example with a generalized linear model (GLM) specification for outcomes. 

First, we can examine whether the link function of the GLM is the same across data sets. If not  (for example, one is linear and the other one is logit), then it is natural to choose the unrestricted method without shared parameters.

Suppose the link function of the GLM is the same across data sets. Second, we can examine whether there exist some covariates that are unique to a data set. If yes, then it is natural to specify the parameters of these covariates as unstable parameters. For example, Optum covers more years than MarketScan, and the outcome model incorporates several year dummies that are unique to Optum. The coefficients of these dummies are unstable parameters.

Third, we can run hypothesis tests for whether the parameter values are the same across data sets. Suppose we would like to test whether the $p$-dimensional parameters on MarketScan  $\bm{\beta}_\ms$ and on Optum $\bm{\beta}_\optum$ are the same, i.e.,
\begin{equation}\label{eqn:hypothesis-test}
    \mathcal{H}_0: \bm{\beta}_\ms = \bm{\beta}_\optum \qquad \mathcal{H}_1: \bm{\beta}_\ms \neq \bm{\beta}_\optum.
\end{equation}
We can construct the modified Hotelling's T-square test statistic,
\[T^2 = \left( \hat{\bm{\beta}}_\ms - \hat{\bm{\beta}}_\optum \right)^\T \left(\frac{n_\ms}{(n_\ms + n_\optum)^2} \hat{\*{V}}_\ms + \frac{n_\optum}{(n_\ms + n_\optum)^2} \hat{\*{V}}_\optum \right)^{-1} \left( \hat{\bm{\beta}}_\ms - \hat{\bm{\beta}}_\optum \right), \]
where $\hat{\bm{\beta}}_\ms$ and $\hat{\bm{\beta}}_\optum$ are estimated parameters on MarketScan and on Optum, with estimated asymptotic variances $\hat{\*{V}}_\ms$ and $\hat{\*{V}}_\optum$. 

$T^2$ is approximately chi-square distributed with $p$ degree of freedom when both $\hat{\bm{\beta}}_\ms$ and $\hat{\bm{\beta}}_\optum$ are asymptotically normal.
If we do not reject the null, then we can treat $\bm{\beta}_\ms$ and $\bm{\beta}_\optum$ as stable parameters. Otherwise, we have two options. First, we can treat every entry in $\bm{\beta}_\ms$ and $\bm{\beta}_\optum$ as an unstable parameter. Second, we can test again on a subset of $\bm{\beta}_\ms$ and $\bm{\beta}_\optum$ using a similar procedure to determine whether this subset of parameters are stable. We may want to choose the second option when we want to specify as many stable parameters as possible for efficiency consideration (following the intuition in Proposition \ref{proposition:adjust-data-efficiency}). 

Last but not least, we suggest running a data-driven simulation study using real data to compare various federated methods with different specifications of shared and dataset-specific parameters. See Section \ref{subsec:empirical-sampling} for an example. In this simulation study, we draw patient records from one data set to construct subsamples that mimic the demographics of the multiple data sets we seek to federate. Then we federate subsamples using various federated methods. The benchmarks are the results from the combined data, as in this case, combining patient records across subsamples is permissible, given that they are sampled from one data set. Finally, we choose the federated method that is closest to the benchmarks.

}

\section{Asymptotic Results}\label{sec:results}
In this section, we show the asymptotic results of our federated MLE, IPW-MLE and AIPW. The federated point estimators have the same asymptotic distributions as their corresponding estimators using the combined, individual-level data. The federated variance estimators are consistent, which allows us to construct valid confidence intervals of target parameters. Appendix \ref{sec:simulation} demonstrate the finite-sample properties of the asymptotic results.  Appendix \ref{sec:proof} collects all the proofs. 

{\blue To show the asymptotic results, we impose standard regularity assumptions on $f(y \mid \*x, w, \bm\beta)$ and $e(\*x,\bm\gamma)$, similar to  \cite{white1982maximum} and \cite{wooldridge2007inverse}, among others. To conserve space, the regularity assumptions are deferred to Assumption \ref{assumption:parametric} in Appendix \ref{subsec:regularity-conditions}. Let $\bm{\gamma}^{\concat \ast}$  and $\bm{\beta}^{\concat \ast}$ be the solutions that maximize the expected log-likelihood $\+E[\log  e^\concat(\*x, \bm\gamma)]$ and $\+E[\log f^\concat(y \mid \*x, w,\bm\beta)]$. The solutions may or may not equal the true parameter values $\bm{\gamma}^{\concat}_0$ and $\bm{\beta}^{\concat}_0$, depending on whether the propensity and outcome models are correctly specified. See Appendix \ref{subsec:regularity-conditions} for more discussion. In this section, we show that our federated MLE or IPW-MLE can consistently estimate $\bm{\beta}^{\concat \ast}$ (and $\bm{\gamma}^{\concat \ast}$).}

\subsection{Federated MLE}\label{subsec:mle-results}

We illustrate the asymptotic results of federated MLE using the estimated parameters in the outcome model, but the asymptotic results also apply to estimated parameters in the propensity model.
The following theorem shows that in federated MLE, the federated point estimator of target parameters, denoted by $\hat{\bm\beta}^\pool_\mle$, have the same asymptotic distribution as MLE on the combined, individual-level data. In addition, the federated variance estimator, denoted by $\hat{\*V}_{\bm\beta}^\pool $, is consistent. 

\begin{theorem}[Federated MLE]\label{thm:pool-mle}
	Suppose Assumption \ref{assumption:parametric}.1 holds. If Condition \ref{cond:stable-outcome} holds, we use restricted federated MLE in Section \ref{subsec:pool-stable-mle}; otherwise, we use unrestricted federated MLE in Section \ref{subsec:pool-mle-model-shift}. 
 Suppose the information matrices satisfy $\norm{  \mathcal{I}^\concat(\bm\beta)^\I \mathcal{I}^\sk(\bm\beta)  }_2 \leq M$ for some $M < \infty$ and for all $k$. 
 As $n_1, \cdots, n_D \rightarrow \infty$, we have 
	\begin{align}
	n_\npool^{1/2} (\hat{\*V}_{\bm\beta}^\concat)^{-1/2} (\hat{\bm\beta}^\pool_\mle - \bm\beta^{\concat \ast}) \xrightarrow{d}& \mathcal{N} (0, \*I_d), \label{eqn:theorem-pool-mle}
	\end{align}
	where $d$ is the dimension of $\bm{\beta}^{\concat \ast}$.

	If we replace $\hat{\*V}_{\bm\beta}^{\concat}$  by $\hat{\*V}_{\bm\beta}^{\pool}$  and/or replace $\hat{\bm\beta}^\pool_\mle$ by  $\hat{\bm\beta}^\concat_\mle$,  then \eqref{eqn:theorem-pool-mle} continues to hold.
	
\end{theorem}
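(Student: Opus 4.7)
The plan is to show $\hat{\bm\beta}^\pool_\mle - \hat{\bm\beta}^\concat_\mle = o_p(n_\npool^{-1/2})$, so that the federated estimator shares the limiting normal distribution of the combined-data MLE, and then to establish $\hat{\*V}^\pool_{\bm\beta} \xrightarrow{p} \*V^\concat_{\bm\beta}$. Together with White's classical asymptotic normality for (possibly misspecified) MLE applied to $\hat{\bm\beta}^\concat_\mle$, the four variants claimed in the final sentence follow by Slutsky's theorem.

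For the first step, under Assumption \ref{assumption:parametric}.1 the site-wise first-order condition admits the mean-value expansion $0 = \nabla \bm\ell_{n_k}(\hat{\bm\beta}^\sk) = \nabla \bm\ell_{n_k}(\bm\beta^{\concat\ast}) + \wt{\*H}^\sk_{\bm\beta}(\hat{\bm\beta}^\sk - \bm\beta^{\concat\ast})$, where $\wt{\*H}^\sk_{\bm\beta}/n_k \xrightarrow{p} -\*A^\sk_{\bm\beta}$ by a uniform LLN; the realized Hessian $\hat{\*H}^\sk_{\bm\beta}$ evaluated at $\hat{\bm\beta}^\sk$ satisfies the same limit, and the difference $\hat{\*H}^\sk_{\bm\beta} - \wt{\*H}^\sk_{\bm\beta}$ contributes an $o_p(n_\npool^{1/2})$ error uniformly in $k$ once we exploit the information-ratio bound $\|\mathcal{I}^\concat(\bm\beta)^{-1}\mathcal{I}^\sk(\bm\beta)\|_2 \leq M$. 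Summing the expansions over $k$ and rearranging gives $\sum_{k} \hat{\*H}^\sk_{\bm\beta}(\hat{\bm\beta}^\sk - \bm\beta^{\concat\ast}) = -\nabla \bm\ell_{n_\npool}(\bm\beta^{\concat\ast}) + o_p(n_\npool^{1/2})$, whose left-hand side equals $\bigl(\sum_k \hat{\*H}^\sk_{\bm\beta}\bigr)(\hat{\bm\beta}^\pool_\mle - \bm\beta^{\concat\ast})$ by the Hessian-weighting definition \eqref{eqn:hessian-weighting}. A parallel Taylor expansion of the combined-data first-order condition $\nabla \bm\ell_{n_\npool}(\hat{\bm\beta}^\concat_\mle)=0$ delivers the identical identity with $\hat{\bm\beta}^\concat_\mle$ in place of $\hat{\bm\beta}^\pool_\mle$. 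Subtracting and premultiplying by $(\sum_k \hat{\*H}^\sk_{\bm\beta})^{-1} = O_p(n_\npool^{-1})$ yields the desired asymptotic equivalence.

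For the variance half, I would note that each $\hat{\*A}^\sk_{\bm\beta}$ and $\hat{\*B}^\sk_{\bm\beta}$ is a continuous functional of an empirical average evaluated at the consistent estimator $\hat{\bm\beta}^\sk$, so a uniform LLN yields $\hat{\*A}^\sk_{\bm\beta} \xrightarrow{p} \*A^\sk_{\bm\beta}$ and $\hat{\*B}^\sk_{\bm\beta} \xrightarrow{p} \*B^\sk_{\bm\beta}$. Because $n_k/n_\npool \to p_k \in (0,1)$, the sample-size-weighted sums in \eqref{eqn:sample-size-weighting-federation} converge to $\*A^\concat_{\bm\beta} \coloneqq \sum_k p_k \*A^\sk_{\bm\beta}$ and $\*B^\concat_{\bm\beta} \coloneqq \sum_k p_k \*B^\sk_{\bm\beta}$; continuous mapping then gives $\hat{\*V}^\pool_{\bm\beta} \xrightarrow{p} (\*A^\concat_{\bm\beta})^{-1} \*B^\concat_{\bm\beta} (\*A^\concat_{\bm\beta})^{-1} = \*V^\concat_{\bm\beta}$, which is exactly White's robust sandwich for the combined-data MLE. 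Combining the two halves and invoking Slutsky's theorem delivers all four substitutions in the last sentence of the theorem.

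The unstable case is handled by rerunning the same argument on the zero-padded objects in \eqref{eqn:expand-beta-H}. The key structural observation is that $\sum_k \hat{\*H}^{\pad,\sk}_{\bm\beta}$ has precisely the block shape of the combined-data Hessian under the partition $(\bm\beta_{\mathrm{s}}, \bm\beta_{\mathrm{uns}}^{(1)},\ldots,\bm\beta_{\mathrm{uns}}^{(D)})$: only site $k$ contributes to its own dataset-specific diagonal block and to the off-diagonal blocks coupling that block with the shared one, while the shared block accumulates contributions from all sites. Hence the Hessian-weighted combination reproduces the first-order conditions of the combined-data MLE on the padded parameter vector block by block, and the argument of the previous paragraphs goes through unchanged. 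I expect the main obstacle to be uniform control of the remainder terms across the $D$ sites in this padded setting, together with verifying that $\sum_k \hat{\*H}^{\pad,\sk}_{\bm\beta}$ is invertible with probability tending to one (each dataset-specific diagonal block must stay well conditioned); this is exactly where the information-ratio bound $\|\mathcal{I}^\concat(\bm\beta)^{-1}\mathcal{I}^\sk(\bm\beta)\|_2 \leq M$ does its work, preventing any single site from either dominating or being crushed inside the combined-data inverse.
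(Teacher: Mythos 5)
Your proof is correct, but it is organized around a different pivot than the paper's appendix proof. You establish the first-order asymptotic equivalence $\hat{\bm\beta}^\pool_\mle - \hat{\bm\beta}^\concat_\mle = o_p(n_\npool^{-1/2})$ by showing that the Hessian-weighted combination approximately satisfies the combined-data first-order condition $\sum_k \nabla\bm\ell_{n_k}(\hat{\bm\beta}^\pool_\mle) = o_p(n_\npool^{1/2})$, and then import everything from White's theorem applied to $\hat{\bm\beta}^\concat_\mle$. The paper sketches exactly this score-matching identity informally in Section \ref{subsec:pool-stable-mle} as the \emph{motivation} for Hessian weighting, but its formal proof does not execute it: instead it writes $n_\npool^{1/2}(\hat{\bm\beta}^\pool_\mle - \bm\beta^{\concat\ast})$ directly as $\sum_j \hat{p}_{n,j}^{1/2}\,\bm\xi^\sj_{n_j}$ with $\bm\xi^\sj_{n_j} = (\sum_k\hat{\*H}^\sk_{\bm\beta})^{-1}\hat{\*H}^\sj_{\bm\beta}\hat{p}_{n,j}^{-1}\, n_j^{1/2}(\hat{\bm\beta}^\sj_\mle - \bm\beta^{\concat\ast})$, invokes the asymptotic independence and joint normality of the site-level MLEs, and checks that the limiting covariance $\sum_j p_j\, \mathcal{I}^\concat(\bm\beta)^{-1}\mathcal{I}^\sj(\bm\beta)\mathcal{I}^\sj(\bm\beta)^{-1}\mathcal{I}^\sj(\bm\beta)\mathcal{I}^\concat(\bm\beta)^{-1}$ collapses to $\mathcal{I}^\concat(\bm\beta)^{-1}$. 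Your route buys a strictly stronger conclusion (genuine $o_p(n_\npool^{-1/2})$ closeness of the two estimators, not merely a common limit law) and avoids any appeal to independence across sites, since it works at the level of score and Hessian sums; the paper's route makes the variance bookkeeping across heterogeneous information matrices more explicit, which is the template it then reuses for IPW-MLE. Your treatment of the unstable case via the block structure of $\sum_k\hat{\*H}^{\pad,\sk}_{\bm\beta}$ and the fact that the zero columns of $\hat{\*H}^{\pad,\sk}_{\bm\beta}$ annihilate the padded entries matches the paper's key observation there. One minor quibble: the information-ratio bound $\norm{\mathcal{I}^\concat(\bm\beta)^{-1}\mathcal{I}^\sk(\bm\beta)}_2 \le M$ is not really what controls your remainder $\hat{\*H}^\sk_{\bm\beta} - \wt{\*H}^\sk_{\bm\beta}$ (finite $D$ with $p_k$ bounded away from $0$ and $1$ already handles uniformity); its role is to keep $(\sum_k\hat{\*H}^\sk_{\bm\beta})^{-1}\hat{\*H}^\sj_{\bm\beta}$ bounded when the site information matrices differ, which in your argument surfaces as the $O_p(n_\npool^{-1})$ bound on $(\sum_k\hat{\*H}^\sk_{\bm\beta})^{-1}$.
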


The federated point estimator $\hat{\bm\beta}^\pool_\mle $ converges at the optimal rate $n_\npool^{-1/2}$ convergence rate. The convergence rate is therefore improved via federation, as compared to the rate $n_k^{-1/2}$ of $\hat{\bm\beta}^\sk_\mle$ for any $k$. 
%
Theorem \ref{thm:pool-mle} holds regardless of whether the outcome model is correctly specified or not. If the outcome model is correctly specified, $\hat{\bm\beta}^\pool_\mle$ is a consistent estimator of $\bm{\beta}^{\concat}_0$; otherwise, $\hat{\bm\beta}^\pool_\mle$ converges to the limit $\bm\beta^{\concat \ast}$ that generally differs from $\bm{\beta}^{\concat}_0$.

\begin{remark}\label{remark:stable-formula-for-unstable-model}
\normalfont
If outcome models are unstable, but we use restricted federated MLE in Section \ref{subsec:pool-stable-mle}, Proposition \ref{proposition:model-shift-mle-weighted} in Appendix \ref{sec:additional-asymptotic} shows that, under some special cases,
Theorem \ref{thm:pool-mle} continues to hold, but with a limit that potentially differs from $\bm\beta^{\concat \ast}$.
\end{remark}

\subsection{Federated IPW-MLE}\label{subsec:ht-results}
We start with a lemma that provides the asymptotic distribution of IPW-MLE on a single data set, on which the asymptotic results of federated IPW-MLE are built.

\begin{lemma}\label{lemma:expression-ht-var}
	Suppose Assumption \ref{assumption:parametric} holds and we estimate $e(\*X_i)$ from MLE. As $n \rightarrow \infty$,
	$	\hat{\bm\beta}_\HT$ estimated from  IPW-MLE is consistent and asymptotically normal, 
	\begin{align*}
	\sqrt{n} \big(\hat{\bm\beta}_\HT -  \bm\beta^\ast   \big) \xrightarrow{d}&\mathcal{N} \big(0,  \*V_{\bm\beta^\ast, \HT, \hat{e}}^\dagger  \big),
	\end{align*}
	where
	\begin{align}\label{eqn:expression-ht-var}
		 \*V_{\bm\beta^\ast, \HT, \hat{e}}^\dagger = \*A_{\bm\beta^\ast, \varpi}^\I \big(\*D_{\bm\beta^\ast, \varpi} -  \*M_{\bm\beta^\ast, \varpi,\bm\gamma^\ast}  \big) \*A_{\bm\beta^\ast, \varpi}^\I
	\end{align}
	with 
	\begin{align*}
	    \*M_{\bm\beta^\ast, \varpi,\bm\gamma^\ast} = \begin{cases} \*C_{\bm\beta^\ast, \varpi} \*V_{\bm\gamma^\ast}  \*C_{\bm\beta^\ast, \varpi}^\T  & \text{ ATE weighting} \\
	    \*C_{\bm\beta^\ast, \varpi,1} \*V_{\bm\gamma^\ast}  \*C_{\bm\beta^\ast, \varpi,2}^\T +   \*C_{\bm\beta^\ast, \varpi,2} \*V_{\bm\gamma^\ast}  \*C_{\bm\beta^\ast, \varpi,1}^\T - \*C_{\bm\beta^\ast, \varpi,2} \*V_{\bm\gamma^\ast}  \*C_{\bm\beta^\ast, \varpi,2}^\T  & \text{ ATT weighting}, 
	    \end{cases}
	\end{align*}
	$\*V_{\bm\gamma^\ast} = \*A_{\bm\gamma^\ast}^\I \*B_{\bm\gamma^\ast} \*A_{\bm\gamma^\ast}^\I $, and $\*A_{\bm\beta^\ast, \varpi}$ is matrix $\*A_{\betavarpi}$ evaluated at $\bm\beta^\ast$, with the definition of $\*A_{\betavarpi}$ provided in Table \ref{tab:def-matrices}. Other terms in formula \eqref{eqn:expression-ht-var} are defined similarly. 
	
	If IPW-MLE uses true propensities, then the asymptotic variance is simplified to
	\begin{align}\label{eqn:expression-ht-var-known-e}
	\*V_{\bm\beta^\ast, \HT}^\dagger   =  \*A_{\bm\beta^\ast, \varpi}^\I \*D_{\bm\beta^\ast, \varpi} \*A_{\bm\beta^\ast, \varpi}^\I. 
	\end{align}
\end{lemma}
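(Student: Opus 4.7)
My plan is to treat IPW-MLE as a two-step M-estimator in which $\hat{\bm\beta}_\HT$ solves an estimating equation that depends on the first-step nuisance parameter $\hat{\bm\gamma}$, and then apply the standard two-step partial-score adjustment argument in the spirit of \cite{wooldridge2007inverse}. I would start from the first-order condition $\frac{1}{n}\sum_i \varpi_{i,\hat e}\, s_i(\hat{\bm\beta}_\HT)=0$, where $s_i(\bm\beta)=\partial \log f(Y_i\mid\*X_i,W_i,\bm\beta)/\partial \bm\beta$, Taylor-expand it in $\bm\beta$ around the population maximizer $\bm\beta^\ast$, and use the regularity conditions in Assumption \ref{assumption:parametric} together with a uniform law of large numbers to replace the sample Hessian by $-\*A_{\bm\beta^\ast,\varpi}$. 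This gives
\[
\sqrt{n}\,(\hat{\bm\beta}_\HT-\bm\beta^\ast)=\*A_{\bm\beta^\ast,\varpi}^{-1}\cdot\frac{1}{\sqrt n}\sum_{i=1}^n \varpi_{i,\hat e}\, s_i(\bm\beta^\ast)+o_p(1).
\]

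Next I would linearize the dependence on $\hat{\bm\gamma}$ by a second Taylor expansion,
\[
\frac{1}{\sqrt n}\sum_{i=1}^n \varpi_{i,\hat e}\, s_i(\bm\beta^\ast)=\frac{1}{\sqrt n}\sum_{i=1}^n \varpi_{i,e^\ast}\, s_i(\bm\beta^\ast)+\*G\,\sqrt n(\hat{\bm\gamma}-\bm\gamma^\ast)+o_p(1),
\]
where $\*G$ is the limiting expectation of the derivative of the weighted score with respect to $\bm\gamma$ evaluated at $(\bm\beta^\ast,\bm\gamma^\ast)$. Under ATE weighting, the chain rule reduces $\*G$ to an expression built out of $\*C_{\bm\beta^\ast,\varpi}$ of Table~\ref{tab:def-matrices}; under ATT weighting, the derivative separates into two contributions---one from the control-only piece of the weight and one from how the propensity enters it multiplicatively---producing the two matrices $\*C_{\bm\beta^\ast,\varpi,1}$ and $\*C_{\bm\beta^\ast,\varpi,2}$. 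Combined with the standard MLE expansion $\sqrt n(\hat{\bm\gamma}-\bm\gamma^\ast)=\*A_{\bm\gamma^\ast}^{-1}\cdot\frac{1}{\sqrt n}\sum_i u_i(\bm\gamma^\ast)+o_p(1)$ for $u_i=\partial \log e/\partial\bm\gamma$, this expresses $\sqrt n(\hat{\bm\beta}_\HT-\bm\beta^\ast)$ as $\*A_{\bm\beta^\ast,\varpi}^{-1}\cdot\frac{1}{\sqrt n}\sum_i\psi_i+o_p(1)$ for an i.i.d.\ influence function $\psi_i$. A multivariate CLT together with a direct calculation of $\+E[\psi_i\psi_i^\T]$ then yields the variance $\*A_{\bm\beta^\ast,\varpi}^{-1}(\*D_{\bm\beta^\ast,\varpi}-\*M_{\bm\beta^\ast,\varpi,\bm\gamma^\ast})\*A_{\bm\beta^\ast,\varpi}^{-1}$; when the true propensity is used, the second Taylor step is unnecessary and $\*M=0$, giving \eqref{eqn:expression-ht-var-known-e}.

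The main obstacle is getting the structure---and the sign---of $\*M_{\bm\beta^\ast,\varpi,\bm\gamma^\ast}$ right, especially under ATT weighting. The minus sign in front of $\*M$ reflects that estimating $\bm\gamma$ effectively projects out the component of $\varpi_i s_i(\bm\beta^\ast)$ that is explained by the propensity score, so using the estimated rather than the true propensity can only shrink the asymptotic variance. Making this explicit requires combining the ``variance'' piece $\*G\*V_{\bm\gamma^\ast}\*G^\T$ with the two ``cross'' pieces $\*G\*A_{\bm\gamma^\ast}^{-1}\+E[u_i\varpi_i s_i(\bm\beta^\ast)^\T]$ and its transpose into a single $-\*M$, which I would verify using the MLE information-matrix identities for $\bm\gamma$. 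Under ATE weighting the symmetry collapses the algebra into a single $\*C_{\bm\beta^\ast,\varpi}\*V_{\bm\gamma^\ast}\*C_{\bm\beta^\ast,\varpi}^\T$; under ATT weighting, the asymmetry between treated observations (whose weight $W_i$ does not depend on $\bm\gamma$) and control observations (whose weight involves both $e$ in the numerator and $1-e$ in the denominator) breaks the symmetry and yields the three-term form with $\*C_{\bm\beta^\ast,\varpi,1}$ and $\*C_{\bm\beta^\ast,\varpi,2}$ stated in the lemma. Checking that this algebra telescopes correctly is where most of the bookkeeping will go.
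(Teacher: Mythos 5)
Your plan is essentially the paper's own proof: a mean-value expansion of the weighted first-order condition in $\bm\beta$ around $\bm\beta^\ast$, a second expansion of the weighted score in $\bm\gamma$ around $\bm\gamma^\ast$ (with the derivative of $\varpi_{i,e_{\bm\gamma}}$ producing $\*C_{\bm\beta^\ast,\varpi}$ for ATE weighting and the pair $\*C_{\bm\beta^\ast,\varpi,1},\*C_{\bm\beta^\ast,\varpi,2}$ for ATT weighting via $W_i(1-W_i)=0$), substitution of the MLE influence-function expansion for $\hat{\bm\gamma}$, and a CLT applied to the resulting i.i.d.\ influence function. If anything, your explicit insistence on tracking the cross-covariance terms $\+E[u_i\varpi_i s_i(\bm\beta^\ast)^\T]$ and invoking the information-matrix identity for $\bm\gamma$ to collapse them into the single $-\*M$ term is more careful than the paper's argument, which disposes of those terms by asserting that $\*k_i$ and $\*d_j$ are uncorrelated.
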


Lemma \ref{lemma:expression-ht-var} coincides with the the results in \cite{wooldridge2002inverse,wooldridge2007inverse} for ATE weighting, and Lemma \ref{lemma:expression-ht-var} additionally provides the results for ATT weighting.

Note that the estimation error of the propensity model carries over to the asymptotic variance of IPW-MLE. This explains why our federated variance estimator in Section \ref{subsec:federate-ht} needs to vary with whether the true propensities are used. In addition, if federated IPW-MLE varies properly with whether propensity and/or outcome models are stable or not, then the federated point estimator, denoted by $\hat{\bm\beta}^\pool_\HT$, have the same asymptotic distribution as IPW-MLE on the combined, individual-level data. Moreover, the federated variance, denoted by $\hat{\*V}_{\bm\beta,\HT, \hat{e}}^{\pool, \dagger}$, is consistent when it is obtained based on the formulas in Lemma \ref{lemma:expression-ht-var}.

\begin{theorem}[Federated IPW-MLE]\label{theorem:ht-est-prop}
	Suppose Assumption \ref{assumption:parametric} holds. If Conditions \ref{cond:stable-propensity} and \ref{cond:stable-outcome} hold, we use restricted federated IPW-MLE in Section \ref{subsec:pool-stable-ht}; otherwise, we use unrestricted federated IPW-MLE in Section \ref{subsec:pool-ht-model-shift}. 
 Suppose $\norm{(\*A^\concat_{\bm\beta^\ast, \varpi})^\I  \*A^\sk_{\bm\beta^\ast, \varpi}}_2 \leq M$ and $\norm{(\*A^\concat_{\bm\gamma^\ast})^\I  \*A^\sk_{\bm\gamma^\ast}}_2 \leq M$ for some $M < \infty$ and for all $k$. 
	As $n_1, \cdots, n_D \rightarrow \infty$, we have 
	\begin{align}
	n_\npool^{1/2} (\hat{\*V}_{\bm\beta,\HT,\hat{e}}^{\concat, \dagger})^{-1/2} (\hat{\bm\beta}^\pool_\HT  - \bm\beta^{\concat \ast}) \xrightarrow{d}& \mathcal{N} (0, \*I_d), \label{eqn:theorem-pool-ht}
	\end{align}
	
	If we replace $\hat{\*V}_{\bm\beta,\HT,\hat{e}}^{\concat, \dagger}$  by $\hat{\*V}_{\bm\beta,\HT,\hat{e}}^{\pool, \dagger}$  and/or replace $\hat{\bm\beta}^\pool_\HT$ by  $\hat{\bm\beta}^\concat_\HT$, then \eqref{eqn:theorem-pool-ht} continues to hold. 
	If we use true propensities, the above statements continue to hold with $\hat{\*V}_{\bm\beta,\HT,\hat{e}}^{\concat, \dagger}$ replaced by the corresponding variance terms for the true propensities.
	
\end{theorem}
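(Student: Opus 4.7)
The strategy parallels the proof of Theorem~\ref{thm:pool-mle} for federated MLE, with extra bookkeeping for the estimated propensity. The unrestricted case (unstable propensity or outcome) reduces to the restricted one through the zero-padding construction of Section~\ref{subsec:pool-ht-model-shift}: on the unpadded coordinates the first-order conditions and Hessian weights coincide with those of a restricted problem on the augmented parameter vector $\bm\beta^\concat = (\bm\beta_{\mathrm{s}}, \bm\beta^{(1)}_{\mathrm{uns}}, \ldots, \bm\beta^{(D)}_{\mathrm{uns}})$, so I focus on the restricted setting with global limits $\bm\beta^{\concat\ast}$ and $\bm\gamma^{\concat\ast}$. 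A preliminary step is to invoke Theorem~\ref{thm:pool-mle} for the propensity federation, which gives the Bahadur representation $\hat{\bm\gamma}^\pool - \bm\gamma^{\concat\ast} = -(\*A^\concat_{\bm\gamma^\ast})^{-1} n_\npool^{-1} \sum_{k,i} \partial \log e^\sk(\*X^\sk_i,\bm\gamma^{\concat\ast})/\partial\bm\gamma + o_p(n_\npool^{-1/2})$, matching that of $\hat{\bm\gamma}^\concat$.

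The core step is a two-argument Taylor expansion of the first-order condition of IPW-MLE on each data set around $(\bm\beta^{\concat\ast}, \bm\gamma^{\concat\ast})$:
\begin{equation*}
0 = \frac{\partial \bm\ell_{n_k}(\hat{\bm\beta}^\sk, \hat{\bm\gamma}^\pool)}{\partial \bm\beta} = \frac{\partial \bm\ell_{n_k}(\bm\beta^{\concat\ast}, \bm\gamma^{\concat\ast})}{\partial \bm\beta} + \hat{\*H}^\sk_{\bm\beta,\varpi}\Lp\hat{\bm\beta}^\sk - \bm\beta^{\concat\ast}\Rp + \hat{\*G}^\sk\Lp\hat{\bm\gamma}^\pool - \bm\gamma^{\concat\ast}\Rp + o_p(n_k^{1/2}),
\end{equation*}
where $\hat{\*H}^\sk_{\bm\beta,\varpi}$ is the weighted Hessian entering \eqref{eqn:hessian-weighting} and $\hat{\*G}^\sk = \partial^2 \bm\ell_{n_k}/\partial\bm\beta\partial\bm\gamma^\T$ converges to $n_k$ times the $\*C_{\betavarpi}$-type matrix of Table~\ref{tab:def-matrices}. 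Summing over $k$ and substituting the Hessian-weighting identity $\sum_k \hat{\*H}^\sk_{\bm\beta,\varpi}(\hat{\bm\beta}^\sk - \bm\beta^{\concat\ast}) = \Lp\sum_k \hat{\*H}^\sk_{\bm\beta,\varpi}\Rp\Lp\hat{\bm\beta}^\pool_\HT - \bm\beta^{\concat\ast}\Rp$ yields
\begin{equation*}
\hat{\bm\beta}^\pool_\HT - \bm\beta^{\concat\ast} = -\Lp\sum_k \hat{\*H}^\sk_{\bm\beta,\varpi}\Rp^{\!-1}\Ls \sum_k \frac{\partial \bm\ell_{n_k}(\bm\beta^{\concat\ast}, \bm\gamma^{\concat\ast})}{\partial \bm\beta} + \Lp\sum_k \hat{\*G}^\sk\Rp\Lp\hat{\bm\gamma}^\pool - \bm\gamma^{\concat\ast}\Rp \Rs + o_p(n_\npool^{-1/2}).
\end{equation*}
Because the bracketed quantities are exactly the combined-data gradient and cross-Hessian, and because $\hat{\bm\gamma}^\pool$ shares the leading term of $\hat{\bm\gamma}^\concat$ by the preliminary step, this expansion is identical, to first order, to the one obtained by solving $\partial \bm\ell_{n_\npool}(\hat{\bm\beta}^\concat_\HT, \hat{\bm\gamma}^\concat)/\partial\bm\beta = 0$. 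Lemma~\ref{lemma:expression-ht-var} then delivers \eqref{eqn:theorem-pool-ht}; when the true propensity is used, the $\hat{\*G}^\sk$ term vanishes and the simpler formula \eqref{eqn:expression-ht-var-known-e} applies.

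For the variance estimator, each matrix in Table~\ref{tab:def-matrices} is an expectation of an i.i.d.\ average, so sample-size weighted dataset-level plug-ins $\hat{\*A}^\sk_{\betavarpi}, \hat{\*D}^\sk_{\betavarpi}, \hat{\*C}^\sk_{\betavarpi}, \hat{\*A}^\sk_{\bm\gamma}, \hat{\*B}^\sk_{\bm\gamma}$ evaluated at the consistent $\hat{\bm\beta}^\pool, \hat{\bm\gamma}^\pool$ converge in probability to their combined-data counterparts by a uniform law of large numbers, supplied by the envelope and smoothness conditions in Assumption~\ref{assumption:parametric}. Continuous mapping through the closed form in Lemma~\ref{lemma:expression-ht-var} then gives $\hat{\*V}^{\pool,\dagger}_{\bm\beta,\HT,\hat{e}} \xrightarrow{p} \*V^{\concat,\dagger}_{\bm\beta^{\concat\ast},\HT,\hat{e}}$. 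The main obstacle is tracking the propensity-induced cross term: the Hessian-weighted average of the $\hat{\*G}^\sk$ must combine with $\hat{\bm\gamma}^\pool - \bm\gamma^{\concat\ast}$ to reproduce the exact first-order correction that a joint IPW-MLE on the combined data would incur, and the Hessian weighting scheme is precisely what aligns the dataset-level residuals $\hat{\bm\beta}^\sk - \bm\beta^{\concat\ast}$ so that no additional bias survives in $n_\npool^{1/2}$-scale. The spectral bounds $\|(\*A^\concat_{\bm\beta^{\concat\ast},\varpi})^{\I}\*A^\sk_{\bm\beta^{\concat\ast},\varpi}\|_2 \le M$ and $\|(\*A^\concat_{\bm\gamma^{\concat\ast}})^{\I}\*A^\sk_{\bm\gamma^{\concat\ast}}\|_2 \le M$ prevent any single site from dominating the sums and keep the remainders uniformly $o_p(n_\npool^{-1/2})$, playing the same role as the information-matrix bound in Theorem~\ref{thm:pool-mle}.
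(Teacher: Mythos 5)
Your proposal is correct and follows essentially the same route as the paper: the paper likewise reduces the unrestricted case to the restricted one via zero-padding, uses the single-dataset expansion of Lemma \ref{lemma:expression-ht-var} (which tracks the propensity-estimation correction through the $\*C_{\betavarpi}$-type cross term and the shared federated $\hat{\bm\gamma}^\pool$), combines the dataset-level expansions through the Hessian-weighting identity so that the federated estimator inherits the combined-data influence-function representation, and establishes consistency of the sample-size-weighted variance plug-ins by a uniform law of large numbers and continuous mapping. The only difference is organizational — you Taylor-expand jointly in $(\bm\beta,\bm\gamma)$ and sum over sites before inverting, while the paper expands each $\hat{\bm\beta}^\sk_\HT$ separately and then aggregates — and these are algebraically equivalent rearrangements.
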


Federated IPW-MLE converges at the rate $n_\npool^{-1/2}$, which is faster than $n_k^{-1/2}$ on a single data set $k$. 
{\blue This theorem holds regardless of whether covariate distributions are stable or not, as long as the limiting objects, $\bm{\beta}^{\concat \ast}$ and $\*V^\ast_{\bm{\beta}^{\concat \ast},\HT}$, are well-defined on the combined data, though their definitions may vary with whether covariate distributions are stable.}

Moreover, Theorem \ref{theorem:ht-est-prop} holds regardless of whether we use the true or estimated propensities. In practice, even if we know the true propensities, it is better to use the estimated propensities for the efficiency consideration  (\cite{wooldridge2002inverse,hirano2003efficient} among others), as $ \*V_{\bm\beta^\ast, \HT}^\dagger - 	\*V_{\bm\beta^\ast, \HT, \hat{e}}^\dagger  $ is positive semidefinite from Lemma \ref{lemma:expression-ht-var} for ATE weighting. 
If the estimated propensities are used, we could still use the federated variance estimator for the true propensity case, which takes a simpler form, but overestimates the variance.

{\blue 

\subsection{Federated AIPW}\label{subsec:aipw-results}

The following theorem shows that our federated AIPW for ATE and ATT has the same asymptotic distribution as AIPW on the combined data. In addition, our federated variance estimators for AIPW are consistent.

\begin{theorem}\label{theorem:pool-aipw}
	Suppose either of the following cases holds: (a) the score $ \phi^\sk(\*x, w, y) $ is the same for all $k$, and we use the federation procedure in Section \ref{subsec:pool-stable-aipw}; or (b) $ \phi^\sk(\*x, w, y) $ varies with $k$, and we use the federation procedure in Section \ref{subsec:pool-aipw-model-shift}. Furthermore, suppose for any data set $k$, at least one condition holds: (a) $\mu^\sk_{(w)}(\*x)$ is correctly specified and consistently estimated for $w \in \{0,1\}$, or (b) $e^\sk(\*x)$ is correctly specified and consistently estimated. 
	As $n_1, \cdots, n_D \rightarrow \infty$, if the estimand is ATE, we have 
	\begin{align}
	n_\npool^{1/2} (\hat{\*V}^\concat_{\tau_\ate})^{-1/2} (\hat{\tau}^\pool_\ate - \tau^\concat_\ate)  \xrightarrow{d}& \mathcal{N}(0,1). \label{eqn:theorem-pool-aipw}
	\end{align}
	If we replace $\hat{\*V}^\concat_{\tau_\ate}$  by $\hat{\*V}^\pool_{\tau_\ate}$  and/or replace $\hat{\tau}^\pool_\ate$ by  $\hat{\tau}^\concat_\ate$, then \eqref{eqn:theorem-pool-aipw} continues to hold. If the estimand is ATT, \eqref{eqn:theorem-pool-aipw} continues to hold analogously for the federated estimator of ATT and the corresponding federated variance estimator. 
\end{theorem}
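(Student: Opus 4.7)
The plan is to prove Theorem \ref{theorem:pool-aipw} by splitting into the two stated cases, restricted (stable score) versus unrestricted (data-set-specific score), and in both cases combining three ingredients: asymptotic linearity of AIPW on each data set under double robustness, independence of observations across data sets, and consistency of each plug-in variance estimator $\hat V^\sk_\tau$. By double robustness, when either $\mu^\sk_{(w)}$ or $e^\sk$ is correctly specified and consistently estimated on data set $k$, we have $\sqrt{n_k}(\hat\tau^\sk_\ate - \tau^\sk_\ate) \xrightarrow{d} \mathcal{N}(0, V^\sk_\tau)$ with asymptotic linear representation governed by the true-nuisance influence function.

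For the unrestricted case, the federated point estimator is $\hat\tau^\pool_\ate = \sum_k (n_k/n_\npool)\hat\tau^\sk_\ate$, so I would write
\[\sqrt{n_\npool}\bigl(\hat\tau^\pool_\ate - \tau^\concat_\ate\bigr) = \sum_{k=1}^D \sqrt{n_k/n_\npool}\cdot\sqrt{n_k}\bigl(\hat\tau^\sk_\ate - \tau^\sk_\ate\bigr).\]
Since $n_k/n_\npool \to p_k$ and the $\hat\tau^\sk_\ate$ are independent across data sets, Slutsky's theorem together with the per-site CLT gives $\sqrt{n_\npool}(\hat\tau^\pool_\ate - \tau^\concat_\ate) \xrightarrow{d} \mathcal{N}(0, \sum_k p_k V^\sk_\tau)$. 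The federated variance $\hat{\*V}^\pool_{\tau_\ate} = \sum_k (n_k/n_\npool)\hat V^\sk_\tau \xrightarrow{p} \sum_k p_k V^\sk_\tau$ by the continuous mapping theorem; the same limit equals the probability limit of $\hat{\*V}^\concat_{\tau_\ate}$ computed on the pooled individual-level data, so either variance estimator yields the desired pivotal convergence.

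For the restricted case, the common-score assumption together with stable covariate distributions forces $\tau^\sk_\ate \equiv \tau^\concat_\ate$ and $V^\sk_\tau \equiv V$ for all $k$, so $\Var(\hat\tau^\sk_\ate) = V/n_k\cdot(1+o(1))$. The IVW weights in \eqref{eqn:inverse-variance-weighting} then reduce asymptotically to the sample-size weights $n_k/n_\npool$, giving $\sqrt{n_\npool}(\hat\tau^\pool_\ate - \tau^\concat_\ate) \xrightarrow{d} \mathcal{N}(0, V)$, and the variance estimator in \eqref{eqn:inverse-variance-weighting-variance} converges to $V$. The subtlety is that each $\hat\tau^\sk_\ate$ is built using the federated nuisance estimates $\hat{\bm\beta}^\pool$ and $\hat{\bm\gamma}^\pool$ from Theorems \ref{thm:pool-mle} and \ref{theorem:ht-est-prop}, which converge at the faster rate $n_\npool^{-1/2}$ and couple the per-site point estimators across data sets. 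The main obstacle is establishing the oracle property: under double robustness, the nuisance estimation error enters the asymptotic expansion of each $\hat\tau^\sk_\ate$ only through a $o_p(n_k^{-1/2})$ remainder, so the asymptotic representation coincides with that of the true-nuisance AIPW on data set $k$, restoring the effective independence needed to match the pooled-AIPW variance $V$.

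The ATT case is handled by the same argument applied to the ATT score in \eqref{eqn:aipw-att}, whose doubly robust structure and asymptotic linearity are analogous, so the restricted and unrestricted federation procedures in Sections \ref{subsec:pool-stable-aipw} and \ref{subsec:pool-aipw-model-shift} transfer verbatim, completing the proof.
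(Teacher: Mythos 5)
Your proposal is correct and follows essentially the same route as the paper's proof: per-site asymptotic linearity of AIPW under double robustness (the paper's Lemma \ref{lemma:expression-aipw-var}), independence across data sets, the reduction of the IVW weights to sample-size weights when the per-site asymptotic variances coincide, and Slutsky's theorem to swap in either consistent variance estimator. The only presentational difference is that the paper packages the pooled-data limit by constructing $\phi^\concat = \sum_j \bm{1}(k=j)\phi^{(j)}$ and verifying it is a valid influence function via the Newey moment condition, whereas you work directly with the per-site CLTs; your explicit remark that the coupling induced by the shared federated nuisance estimates is absorbed into the $o_p(n_k^{-1/2})$ remainder of the asymptotically linear expansion is in fact a point the paper's proof leaves implicit.
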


Analogous to federated MLE and IPW-MLE, federated AIPW achieves a faster convergence rate than AIPW on a single data set. The estimation efficiency of ATE and ATT can be improved through federation. In addition, federated AIPW achieves the semiparametric efficiency bound.

Note that if the propensity and outcome models are estimated from flexible machine learning methods, then we can use unrestricted federated AIPW to combine the estimated ATE or ATT on individual data sets together without combining individual propensity and outcome models. \cmtfinal{If individual propensity and outcome models can be estimated at rate $o(n_k^{-1/4})$, then the estimated ATE or ATT on individual data sets by using cross-fitting converges at the rate $n_k^{-1/2}$ and is asymptotically normal \citep{chernozhukov2017double}. We can then show that the federated ATE or ATT is asymptotically normal, and the federated variance estimator is consistent. This approach is asymptotically efficient. However, when propensity and outcome models are stable, the variance may be reduced in finite samples, by developing new approaches to federate flexible machine learning methods and using restricted federated AIPW.  }
}

\section{Empirical Studies Based on Medical Claims Data}\label{sec:empirical}
{\blue In this section, we further study the effect of alpha blockers on two distributed medical databases, MarketScan and Optum, introduced in Section \ref{sec:introduction}.\footnote{Our analysis builds on the studies by \cite{konig2020preventing,koenecke2020alpha, rose2021alpha, powell2021ten}, and \cite{thomsen2021association}.} We first evaluate various federation methods through a data-driven simulation study on one medical claims data, select the optimal federated method, and then apply this method to federate MarketScan and Optum.\footnote{Note that our findings reproduce similar results to \cite{koenecke2020alpha}, validating the prior result suggesting that alpha blockers are effective in reducing ventilation and death in ARD and pneumonia patients; however, the confidence levels are narrower because our federated methods presented here are improved from those used in \cite{koenecke2020alpha}. \cite{koenecke2020alpha} only use the treatment coefficient and variance in federation, whereas here, we use the full variance-covariance matrix from all covariates. Our approach leverages the stable part of the model across two data sets, which could improve the estimation precision of the treatment coefficient.} }

\subsection{Simulation on One Medical Claims Data Set}\label{subsec:empirical-sampling}

{\blue In the data-driven simulation study, we first construct subsamples from one cohort to reflect patient demographics from MarketScan and Optum. Next we compare estimates from various federated methods with those on the combined data. We seek to evaluate how well the federated methods recover the known result from the combined data in a setting where combining data is permissible. We can then select the most effective federated methods and apply these methods to combine the summary-level information from MarketScan and Optum in Section \ref{subsec:empirical-federate-two}. 
}

We start by presenting our approach to simulate subsamples from one patient cohort in Section \ref{subsec:sampling}.  Then we list benchmark methods and tested federated methods in Section \ref{subsec:estimation-benchmark}. We compare the results from federated methods against benchmarks in Section \ref{subsec:results}.

\subsubsection{Sampling Schemes for Subsamples}\label{subsec:sampling}
We draw two subsamples, denoted as $\mathcal{S}_1$ and $\mathcal{S}_2$, based on patient records from one cohort in a database (denoted as $\mathcal{C}$), to mimic the demographics of the distributed databases that we aim to federate. Our simulation design is based on the observation that cohorts in MarketScan include patients younger than age 65 from 2009 to 2015, while cohorts in Optum include patients up to age 85 from 2005 to 2019, with a majority to be over age 65.

To simulate subsamples, we first partition one cohort $\mathcal{C}$ into four disjoint sub-cohorts, denoted as $\mathcal{C}_1$, $\mathcal{C}_2$, $\mathcal{C}_3$, and $\mathcal{C}_4$, by age and fiscal year. $\mathcal{C}_1$ include patients younger than the median age of $\mathcal{C}$ up to year 2012; $\mathcal{C}_2$ include patients younger than the median age after 2012; $\mathcal{C}_3$ include patients older than the median age up to 2012; $\mathcal{C}_4$ include patients older than the median age after 2012. Next we simulate $\mathcal{S}_1$ and $\mathcal{S}_2$.  $\mathcal{S}_1$ mimics the demographics of MarketScan, with 70\% and 30\% sampled from $\mathcal{C}_1$ and $\mathcal{C}_3$, respectively, with replacement. $\mathcal{S}_2$ mimics the demographics of Optum, with 10\%, 10\%, 10\%, and 70\% are sampled from   $\mathcal{C}_1$, $\mathcal{C}_2$, $\mathcal{C}_3$, and $\mathcal{C}_4$, respectively, with replacement. 

As a robustness check,  we consider other approaches in Appendix \ref{sec:additional-empirical} to construct subsamples, including varying the sampling ratios from different sub-cohorts, varying subsample sizes, and varying the number of subsamples.

\subsubsection{Estimation and Benchmarks}\label{subsec:estimation-benchmark}

We consider two benchmark estimators and three federated estimators.

\paragraph{Restricted Benchmarks} Parameters in the propensity and outcome models are assumed to be stable across subsamples. On the combined data, we specify restricted propensity and outcome models as
\begin{equation}\label{eqn:empirical-stable-specification-main}
    \begin{aligned}
        \frac{\pr(W_i=1 \mid \*X_i)}{\pr(W_i=0 \mid \*X_i)} =& \*X_i^\T  \bm\gamma \\
    \frac{\pr(Y_i=1 \mid \*X_i, W_i)}{\pr(Y_i=0 \mid \*X_i, W_i)} =&  W_i \beta_{w} + \*X_i^\T \bm\beta_{\*X} \,,
    \end{aligned}
\end{equation}
where outcome $Y_i$ is binary indicating whether a patient received mechanical ventilation and then had an in-hospital death ($Y_i = 1$) or not ($Y_i = 0$), treatment $W_i$ is binary indicating whether a patient is exposed to alpha blockers ($W_i = 1$) or not ($W_i = 0$), and  $\*X_i$ consists of age, fiscal year dummies, and health-related confounders.\footnote{See Appendix \ref{subsec:study-definition} for the full list of confounders.}

The restricted benchmarks are the estimate of $\beta_{w}$ in \eqref{eqn:empirical-stable-specification-main}, denoted as $\hat\beta^{\bm{\s}}_{w,\BM}$, and its estimated variance, denoted as $\hat{V}^{\bm{\s}}_{\beta_w,\BM}$, from the combined data. 

\paragraph{Unrestricted Benchmarks} Parameters in the propensity and outcome models can be unstable across subsamples. On the combined data, we specify a flexible functional form for the propensity and outcome models\footnote{\eqref{eqn:empirical-unstable-specification-main} can be easily generalized to the case with more than two subsamples.}
\begin{equation}\label{eqn:empirical-unstable-specification-main}
        \begin{aligned}
        \frac{\pr(W_i=1 \mid \*X_i)}{\pr(W_i=0 \mid \*X_i)} =& \*X_{i,\mathrm{s}}^\T  \bm\gamma_{\mathrm{s}} + \*X_{i,\mathrm{uns}}^\T  \bigg( \bm{1}(A_i = 1) \bm\gamma^{(1)}_{\mathrm{uns}} + \bm{1}(A_i = 2) \bm\gamma^{(2)}_{\mathrm{uns}}  \bigg) \\
    \frac{\pr(Y_i=1 \mid \*X_i, W_i)}{\pr(Y_i=0 \mid \*X_i, W_i)} =&  W_i \beta_{w} + \*X_{i,\mathrm{s}}^\T \bm\beta_{\*X,\mathrm{s}} + \*X_{i,\mathrm{uns}}^\T \bigg( \bm{1}(A_i = 1) \bm\beta^{(1)}_{\*X,\mathrm{uns}}  + \bm{1}(A_i = 2) \bm\beta^{(2)}_{\*X,\mathrm{uns}} \bigg)\, ,
    \end{aligned}
\end{equation}
where $A_i \in \{1, 2\}$ indicates whether the patient record belongs to $\mathcal{S}_1$ or $\mathcal{S}_2$.\footnote{Note that $\mathcal{S}_1$ has patient records up to 2012, while $\mathcal{S}_2$ has patient records for all years. The coefficients of year dummies after 2012 are treated as unstable parameters in both restricted and unrestricted benchmarks.} Parameters are partitioned into stable parameters ($\bm\gamma_{\mathrm{s}}$, $\beta_{w} $ and $\bm\beta_{\*X,\mathrm{s}}$) and unstable parameters ($\bm\gamma^{(a)}_{\mathrm{uns}}$ and $\bm\beta^{(a)}_{\*X,\mathrm{uns}}$ for $a \in \{1,2\}$). The unstable variables include the coefficients of age 
confounders and year dummies unique to $\mathcal{S}_2$,\footnote{Age confounders include age, age-squared, and age-cubed.} which is motivated by the observation that age coefficient has opposite signs on MarketScan and Optum (see Figure \ref{fig:coef-age} in Appendix \ref{sec:additional-empirical}), and Optum covers more years. Note that $\beta_{w}$ is stable across subsamples, which can be interpreted as the average treatment coefficient across subsamples. 

The unrestricted benchmarks are the estimates of $\beta_{w}$  in \eqref{eqn:empirical-unstable-specification-main}, denoted as $\hat\beta^{\bm{\uns}}_{w,\BM}$, and its estimated variance, denoted as $\hat{V}^{\bm{\uns}}_{\beta_w,\BM}$, from the combined data.

\paragraph{Restricted Federated Estimators} Under the restricted model specification \eqref{eqn:empirical-stable-specification-main}, we use restricted federated IPW-MLE to estimate $\beta_{w}$ in \eqref{eqn:empirical-stable-specification-main} and its variance. Let $\hat\beta^{\bm{\s}.\pool}_{w,\HT}$ and $\hat{V}^{\bm{\s}.\pool}_{\beta_w,\HT}$ be the estimated coefficient and variance.

\paragraph{Unrestricted Federated Estimators} Under the flexible model specification \eqref{eqn:empirical-unstable-specification-main}, we use our unrestricted federated IPW-MLE to estimate $\beta_{w}$ in \eqref{eqn:empirical-unstable-specification-main} and its variance. Let $\hat\beta^{\bm{\uns}.\pool}_{w,\HT}$ and $\hat{V}^{\bm{\uns}.\pool}_{\beta_w,\HT}$ be the estimated coefficient and variance.

\paragraph{Inverse Variance Weighting (IVW)} Under the restricted model specification \eqref{eqn:empirical-stable-specification-main}, we use IVW to estimate $\beta_{0,w}$ in \eqref{eqn:empirical-stable-specification-main} and its variance. Let $\hat{\beta}_{w,\ivw}$ and $\hat{V}_{\beta_w,\ivw}$ be the estimated coefficient and variance.\footnote{IVW is appropriate when Conditions \ref{cond:stable-propensity}, \ref{cond:stable-outcome}, and \ref{cond:stable-covariate} hold. In this case, Hessians and other matrices in the asymptotic variance are asymptotically stable across data sets. Then we can show that our federated estimators in Section \ref{sec:estimation} are asymptotically the same as IVW.}

\begin{table}[t!]
	\centering
	\tcaptab{Comparison Between Restricted/Unrestricted Federated Estimators and IVW with Corresponding Restricted/Unrestricted Benchmarks}
	\vspace{0.5cm}
	\begin{subtable}[t]{.5\textwidth}
	\centering
	\caption{Restricted Benchmarks $\hat\beta^{\bm{\s}}_{w,\BM}$, $ \hat{V}^{\bm{\s}}_{w,\BM}$}\label{tab:fed-ht-ivw-res-main}
	{\footnotesize
	
	\begin{tabular}{l|r|r|r|r}
		\toprule
	&	\multicolumn{1}{c|}{$\hat\beta^{\bm{\s}}_{w,\BM}$}     & \multicolumn{1}{c|}{$\hat{\beta}_{w,\ivw}$} & \multicolumn{1}{c|}{$\hat\beta^{\bm{\s}.\pool}_{w,\HT}$} & \multicolumn{1}{c}{$\hat\beta^{\bm{\uns}.\pool}_{w,\HT}$} \\
	&	\textbf{mean}  & \textbf{MAE} &  \textbf{MAE} & \textbf{MAE}  \\ 
		\midrule
ARD & \textsl{-0.6757} & 1.2349 & 0.0538 & 0.0677 \\ 
PNA & \textsl{-0.3250} & 0.6482 & 0.0541 & 0.0384 \\ 
\midrule
&	\multicolumn{1}{c|}{$\hat{V}^{\bm{\s}}_{w,\BM}$}    & \multicolumn{1}{c|}{$\hat{V}_{w,\ivw}$}  & \multicolumn{1}{c|}{$\hat{V}^{\bm{\s}.\pool}_{w,\HT}$} & \multicolumn{1}{c}{$\hat{V}^{\bm{\uns}.\pool}_{w,\HT}$} \\
&	\textbf{mean}  & \textbf{MAE} &  \textbf{MAE} & \textbf{MAE}  \\ 
\midrule
ARD & \textsl{0.1098} & 0.0848 & 0.0395 & 0.0363 \\ 
PNA & \textsl{ 0.0641} & 0.0376 & 0.0158 & 0.0129 \\ 
		\bottomrule
	\end{tabular}
	}
	 \end{subtable}%
   \begin{subtable}[t]{0.5\textwidth}
  \centering
\caption{Unrestricted Benchmarks $\hat\beta^{\bm{\uns}}_{w,\BM}$, $\hat{V}^{\bm{\uns}}_{w,\BM}$}\label{tab:fed-ht-ivw-unres-main}
{\footnotesize
\centering
	\begin{tabular}{l|r|r|r|r}
		\toprule
	&	\multicolumn{1}{c|}{$\hat\beta^{\bm{\uns}}_{w,\BM}$}     & \multicolumn{1}{c|}{$\hat{\beta}_{w,\ivw}$} & \multicolumn{1}{c|}{$\hat\beta^{\bm{\s}.\pool}_{w,\HT}$} & \multicolumn{1}{c}{$\hat\beta^{\bm{\uns}.\pool}_{w,\HT}$} \\
  &	\textbf{mean}  & \textbf{MAE} &  \textbf{MAE} & \textbf{MAE}  \\ 
		\midrule
ARD & \textsl{-0.6497} & 1.2608 & 0.0622 & 0.0467 \\ 
PNA & \textsl{-0.3328} & 0.6403 & 0.0617 & 0.0321 \\ 
\midrule
&	\multicolumn{1}{c|}{$\hat{V}^{\bm{\uns}}_{w,\BM}$}    & \multicolumn{1}{c|}{$\hat{V}_{w,\ivw}$}  & \multicolumn{1}{c|}{$\hat{V}^{\bm{\s}.\pool}_{w,\HT}$} & \multicolumn{1}{c}{$\hat{V}^{\bm{\uns}.\pool}_{w,\HT}$} \\
 &	\textbf{mean} & \textbf{MAE} &  \textbf{MAE} & \textbf{MAE}  \\ 
\midrule
ARD & \textsl{0.1088} & 0.0837 & 0.0385 & 0.0352 \\
PNA & \textsl{0.0629} & 0.0364 & 0.0146 & 0.0118 \\ 
		\bottomrule
	\end{tabular}
			}
	\end{subtable}
	\bnotetab{Subsamples are simulated from the MarketScan ARD cohort, and from the MarketScan pneumonia (PNA) cohort with $D=2$. For subsamples drawn from ARD cohort, $n_1 = n_2 = 6,000$; for subsamples drawn from PNA cohort, $n_1 = n_2 = 10,000$. We use ATE weighting in IPW-MLE in these tables. The mean absolute error (MAE) is calculated relative to the benchmark mean values (first column of each table) based on 50 iterations of independent draws of subsamples. We report the mean value of benchmarks because the combined data $\mathcal{C}_1 \cup \mathcal{C}_2$ from which benchmarks are estimated vary across iterations. }
	\label{tab:fed-ivw-main}
\end{table}

\subsubsection{Results}\label{subsec:results}

We compare restricted and unrestricted federated IPW-MLE and IVW with the restricted and unrestricted benchmarks in Table \ref{tab:fed-ivw-main}. Additional simulation results with alternative sampling schemes and with federated MLE are presented Tables \ref{tab:varying-sampling-ratio}-\ref{tab:varying-d} in Appendix \ref{subsec:simulation-appendix}. The error of a federated estimator is defined as its difference from the benchmark.

There are four observations from Table \ref{tab:fed-ivw-main}. First and foremost, for both point and variance estimates, our restricted and unrestricted federated IPW-MLE have much lower errors than IVW, when compared to restricted and unrestricted benchmarks. Second, the restricted federated point estimator is closer to the restricted benchmark than the unrestricted benchmark. Analogously, the unrestricted federated point estimator is closer to the unrestricted benchmark. Third, the variance in the unrestricted benchmark and federated variance are larger than the restricted counterparts, implying the efficiency loss when flexible model specifications are used. Fourth, interestingly, the unrestricted federated variance is closer to variances in both restricted and unrestricted benchmarks. This is because federated variance tends to underestimate the true variance in finite samples (even though both are consistent). As the unrestricted federated variance tends to be larger, it partially corrects for the underestimation error. 

These observations are robust to alternative sampling schemes and to federated MLE as shown in Tables \ref{tab:varying-sampling-ratio}-\ref{tab:varying-d} in Appendix \ref{subsec:simulation-appendix}.\footnote{\cmtfinal{
    We could use alternative approaches to obtaining federated maximum likelihood estimator of treatment coefficient, such as by using a surrogate likelihood function that communicates gradients only \citep{jordan2018communication} or that communicates both gradients and Hessians \citep{duan2020learning} similar to our federated MLE. In the likelihood function, the heterogeneity in data sets can be adjusted through tilting the density ratio \citep{duan2022heterogeneity}; moreover, a regularization term can be included in high-dimensional settings \citep{wang2017efficient,li2021targeting}. These methods do not account for the treatment selection bias and are iterative, while federated IPW-MLE does and is noniterative. We expect the results of these methods to be conceptually similar to those of federated MLE.  }} As unrestricted federated IPW-MLE is more flexible and generally provides a better variance estimate, we use unrestricted federated IPW-MLE to federate MarketScan and Optum, as shown in Section \ref{subsec:empirical-federate-two} below.

		\subsection{Federation Across Two Medical Claim Data Sets}\label{subsec:empirical-federate-two}
		
		In this section, we seek to federate MarketScan and Optum to study the effect of alpha-blockers. As shown in Figure \ref{fig:fed-ms-optum}, the coefficient on alpha blockers is consistently negative on the individual cohorts of ARD patients and of pneumonia patients, implying a reduced risk of adverse outcomes for ARD and pneumonia patients who were exposed to alpha blockers.

\begin{figure}[t!]
\tcaptab{Federation Across MarketScan and Optum}

\begin{subfigure}{1.\textwidth}
		\centering		\includegraphics[width=1\columnwidth]{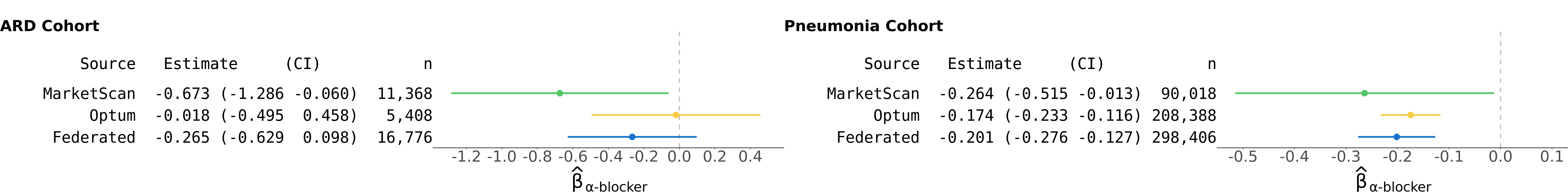}
			\end{subfigure}
			\label{fig:fed-ms-optum}
			\bnotefig{These figures show the estimated coefficient of alpha blockers and 95\% confidence interval on MarketScan and Optum, and federated coefficient and 95\% confidence interval from unrestricted federated IPW-MLE with ATE weighting. {\blue Note that for the pneumonia cohort, the confidence intervals for the federated estimator are wider than those on Optum. This can happen when the asymptotic variance is heterogeneous across data sets and the asymptotic variance on the small data is much larger than that on the large data. In this case, the federated variance obtained by sample size weighting can be larger than the variance on the large data.  See Appendix \ref{subsec:toy-example-sample-size-weighting} for a toy example.} See Figure \ref{fig:compare-ivw-ours-pna} in Appendix \ref{sec:additional-empirical} for ATT weighting; the results are close to those in these figures. 
			}
\end{figure}

    However, coefficients of some confounders, e.g., age, are of different magnitudes or signs in the outcome model across the two databases (though, none with statistical significance).
    This raises three potential concerns: model instability, model misspecification, and unobserved confounders across the two databases, which we ameliorate as follows. 
    
    First, model instability could be due to the different populations underlying these two databases, as shown in Figure \ref{fig:age-hist}, as well as the heterogeneous response of outcomes to the treatment and confounders. Unrestricted federated IPW-MLE with a flexible functional form for the combined data seems to be preferable in the presence of model instability. Second, model misspecification could exist if the response is indeed the same across two databases, but there exists a coefficient difference in the estimated outcome models. To protect against this possibility, we suggest using IPW-MLE due to its doubly robust properties (as opposed to MLE). Third, we have largely controlled for unobserved confounders in our approach to constructing cohorts, as discussed in Appendix \ref{subsec:study-definition}, and sensitivity analyses are conducted in \cite{koenecke2020alpha}.
    
    Figure \ref{fig:fed-ms-optum} shows the federated point estimates and confidence intervals from unrestricted federated IPW-MLE. As desired, the federated estimates of the effect of alpha blockers lie between the estimates on MarketScan and Optum for both ARD and pneumonia patients, and they approximate the average effect of alpha blockers on all ARD or pneumonia patients across two databases (recall the estimates from IVW may not lie between those on MarketScan and Optum as shown in Figure \ref{fig:compare-ivw-ours} and Figure \ref{fig:compare-ivw-ours-pna} in Appendix \ref{sec:additional-empirical}). 
    
    As a robustness check, we report the results from federated MLE in Figure \ref{fig:fed-ms-optum-mle}, and estimated treatment effects from federated IPW-MLE and AIPW in Figure \ref{fig:compare-ivw-ours-aipw} in Appendix \ref{sec:additional-empirical}.\footnote{\cmtfinal{Similar to Footnote 24, we could use alternative approaches to obtaining the federated estimator of treatment coefficient. The results would be conceptually similar to those of federated MLE in Figure \ref{fig:fed-ms-optum-mle}. Due to the treatment selection bias, the estimated treatment coefficient from alternative approaches would not have the interpretation of the average treatment coefficient on either the whole population or the treated population, while the estimated coefficient from IPW-MLE does. }  } Both the coefficient in the outcome model and estimated treatment effects of alpha blockers are negative and statistically significant, supporting our finding of an association between the exposure to alpha blockers and a reduced risk of progression to ventilation and death.

\section{Conclusion}\label{sec:conclusion}
This paper proposes three categories of federated inference methods based on MLE,  IPW-MLE, and AIPW, respectively. Our federated point estimators have the same asymptotic distributions as the corresponding estimators from combined, individual-level data. Our federated variance estimators are consistent. To achieve these properties, we show that the implementations of our federated methods should be adjusted based on conditions such as whether propensity and outcome models are stable across heterogeneous data sets. Finally, we apply our federated inference methods to study the effectiveness of alpha blockers on patient outcomes from two separate medical claims databases. 

To conclude, we would like to point out three interesting directions for future work. The first is to develop federated semiparametric or nonparametric estimation methods. The second is to develop communication-efficient, theoretically guaranteed federated causal inference methods in settings with high-dimensional nuisance parameters. The third is to develop these methods in settings with many data sets, while each data set may only have a small number of observations.

\end{onehalfspacing}

 \bibliographystyle{apalike} 
 { \small
\bibliography{reference}
}

\newpage
\begin{appendices}
{\small

\section{Supplementary Details and Results}\label{sec:additional-asymptotic}

{\blue 

\subsection{Regularity Conditions}\label{subsec:regularity-conditions}

\begin{assumption}[Regularity Conditions on Outcome and Propensity Models]\label{assumption:parametric}
\texttt{} 
	\begin{enumerate}
		\item Condition \ref{cond:parametric-outcome} holds. For any $k$, $\mathcal{X}_k$ is bounded. $f(y \mid \*x, w,  \bm\beta)$ is twice continuously differentiable in $\bm\beta$. $\bm\beta^{\sk\ast} \in \mathcal{S}_{\bm\beta}^\sk \subset \+R^{d_k+1}$ lies in the interior of a known compact set $\mathcal{S}_{\bm\beta}^\sk$, where $\bm\beta^{\sk\ast} $ is the unique solution that minimizes $-\+E[\log f(y \mid \*x, w,\bm\beta^{\sk\ast})]$. The information matrix $\mathcal{I}^\sk(\bm\beta)  = -\+E_{(\*x, w, y) \sim \mathbb{P}^\sk} \Big[ \frac{\partial^2 \log  f(y \mid \*x, w, \bm\beta) }{\partial \bm\beta \partial \bm\beta^\top } \Big] $ is positive definite,  full rank, and its condition number is bounded for all $\bm\beta$.
		\item Condition \ref{cond:parametric-propensity} holds. For any $k$, $\mathcal{X}_k$ is bounded. $e(\*x,\bm\gamma)$ is twice continuously differentiable in $\bm\gamma$. $\bm\gamma^{\sk\ast} \in \mathcal{S}_{\bm\gamma}^\sk \subset \+R^{d_k+1}$ lies in the interior of a known compact set $\mathcal{S}_{\bm\gamma}^\sk$, where $\bm\gamma^{\sk\ast}$ is the unique solution that minimizes $-\+E[\log  e(\*x, \bm\gamma^{\sk\ast})]$. The information matrix $\mathcal{I}^\sk(\bm\gamma)  = -\+E_{(\*x, w) \sim \mathbb{P}^\sk} \Big[ \frac{\partial^2 \log  e(\*x, \bm\gamma) }{\partial \bm\gamma \partial \bm\gamma^\top } \Big] $ is positive definite, full rank, and its condition number is bounded for all $\bm\gamma$.
		\item Regularity conditions in Assumptions \ref{assumption:parametric}.1 and  \ref{assumption:parametric}.2 hold for the outcome and propensity models on the combined, individual-level data.
	\end{enumerate}
\end{assumption}

If $f(y \mid \*x, w,  \bm\beta)$ contains the true structure $f_0(y \mid \*x, w, \bm\beta_0^\sk)$, then $\+E[\log f(y \mid \*x, w,\bm\beta_0^\sk)] = 0$ and $\bm\beta^{\sk\ast} = \bm\beta_0^\sk$. Similarly,  if $e(\*x, \bm\gamma)$ contains the true structure $e_0(\*x, \bm\gamma_0^\sk)$, then\\ $\+E[\log e(\*x,\bm\gamma_0^\sk)] = 0$ and $\bm\gamma^{\sk\ast} = \bm\gamma_0^\sk$. The same properties hold for the density functions on the combined, individual-level data, with parameters $\bm\beta^{\ast}$, $\bm\beta_0$, $\bm\gamma^{\ast}$, and $\bm\gamma_0$ defined analogously to $\bm\beta^{\sk\ast}$, $\bm\beta_0^\sk$, $\bm\gamma^{\sk\ast}$, and $\bm\gamma_0^\sk$.


\subsection{Treatment Effect Estimation Based on IPW-MLE}\label{subsec:ipw-mle-double-robustness}
After we estimate the parameters $\bm\beta$ in the likelihood function, we can use $\hat{\bm\beta}_\HT$  to estimate the conditional outcome models $\mu_{(w)}(\*X_i, \bm\beta) = \+E[Y_i \mid \*X_i, W_i = w]$ \footnote{Since the likelihood function can be parametrized by $\bm{\beta}$ and notice that $\+E[Y_i \mid \*x_i = \*x, W_i = w] = \int y f(y \mid \*X_i = x, W_i = w, \bm{\beta}) d y$, the conditional outcome models can also be parametrized by $\bm{\beta}$. } and $\tau_\ate$ \footnote{The estimator of $\tau_\att$ can be defined as $\hat{\tau}_\att= \frac{1}{\sum_{i=1}^n W_i} \sum_{i=1}^n  W_i  \cdot \left[\mu_{(1)}(\*X_i, \hat{\bm\beta}_\HT) -  \mu_{(0)}(\*X_i, \hat{\bm\beta}_\HT) \right].$}
\[\hat{\tau}_\ate= \frac{1}{n} \sum_{i=1}^n \left[\mu_{(1)}(\*X_i, \hat{\bm\beta}_\HT) -  \mu_{(0)}(\*X_i, \hat{\bm\beta}_\HT) \right].\]

$\hat{\tau}_\ate$ estimated from this approach enjoys the ``double robustness'' property \citep{wooldridge2007inverse,lumley2011complex}, meaning that $\tau_\ate$ is consistent even if one of outcome and propensity models, but not both, is misspecified. On one hand, if the outcome model is correctly specified, then $\hat{\bm\beta}_\HT$ is consistent. We can show that  $\frac{1}{n} \sum_i \mu_{(w)}(\*X_i, \hat{\bm\beta}_\HT)$ is a consistent estimator of 
$\+E[Y_i(w)]$, and $\hat{\tau}_\ate$ is consistent.\footnote{Note that $\+E[Y_i(w)] = \+E[\mu_{(w)}(\*X_i, \bm\beta)]$.}

On the other hand, if the outcome model is misspecified, and if the propensity model is correctly specified, then  $\hat{\bm\beta}_\HT$ is a consistent estimator of $\bm\beta^\ast$, where $\bm\beta^\ast$ is the unique solution that maximizes $\+E[\log f(Y_i \mid \*X_i, W_i, \bm\beta^\ast)]$. If the conditional outcome models satisfy $\+E[\mu_{(w)}(\*X_i, \bm\beta^\ast)] = \+E[Y_i(w)] $,\footnote{We can show that if $\mu_{(w)}(\*X_i, \bm\beta^\ast)$ is a linear or logistic function of $\*X_i$ and $w$ with an intercept term, then $\+E[\mu_{(w)}(\*X_i, \bm\beta^\ast)] = \+E[Y_i(w)] $. } then $\hat{\tau}_\ate$ is still consistent \citep{wooldridge2007inverse}.

Additionally, under suitable assumptions, $\hat{\tau}_\ate$ is asymptotically normal, 
\begin{align*}
	\sqrt{n} \big(\hat{\tau}_\ate -  \tau_\ate   \big) \xrightarrow{d}&\mathcal{N} \big(0, \+E[J(\*X_i, \bm\beta^\ast)]^\T \cdot \*V_{\bm\beta^\ast, \HT, \hat{e}}^\dagger \cdot  \+E[J(\*X_i, \bm\beta^\ast)] \big).
	\end{align*}
where $\*V_{\bm\beta^\ast, \HT, \hat{e}}^\dagger$ is defined in Lemma \ref{lemma:expression-ht-var} and $J(\*X_i, \bm\beta)$ is the gradient
\begin{align*}
    J(\*X_i, \bm\beta) =& \frac{\partial}{\partial \bm\beta} \left[\mu_{(1)}(\*X_i, \bm\beta) - \mu_{(0)}(\*X_i, \bm\beta) \right].
\end{align*}
For example, if the outcome model is logit with parameters $\bm\beta$, 
\begin{align*}
    \mu_{(w)}(\*X_i, \bm\beta) = \frac{1}{1 + \exp(- \tilde{\*X}^\T_{(w),i} \bm{\beta})}
\end{align*}
for $\tilde{\*X}_{(w),i} = [w, \*X^\T_i]^\T$, then the gradient is 
\[J(\*X_i, \bm\beta) =  \mu_{(1)}(\*X_i, \bm\beta) \big(1 - \mu_{(1)}(\*X_i, \bm\beta) \big) \cdot \tilde{\*X}_{(1),i} - \mu_{(0)}(\*X_i, \bm\beta) \big(1 - \mu_{(0)}(\*X_i, \bm\beta) \big) \cdot \tilde{\*X}_{(0),i}.\]

\subsection{Federated IPW-MLE for ATE}
We can construct a federated estimator for average treatment effects based on IPW-MLE. Specifically, we first use federated IPW-MLE to obtain the federated parameters $\hat{\bm{\beta}}^\pool_\HT$ in the outcome model on the combined data. Next we use $\hat{\bm{\beta}}^\pool_\HT$ to estimate ATE on each data set. Let the estimator on data set $k$ be $\hat{\tau}^\sk_\ate$. Finally we use sample size weighting to combine $\hat{\tau}^\sk_\ate$ together to obtain the federated ATE, $\hat{\tau}^\pool_\ate$. 

For the asymptotic variance of federated ATE, we can first use $\hat{\bm{\beta}}^\pool_\HT$ to estimate $\+E[J(\*X_i, \bm\beta^\ast)]^\T $ and $ \*V_{\bm\beta^\ast, \HT, \hat{e}}^\dagger$ on each data set $k$, and then use sample size weighting to combine these estimates together to obtain the federated variance on the combined data.

}

\subsection{Lemma for AIPW}\label{subsec:lemma-aipw}

Our federated AIPW estimators in Appendices \ref{subsec:pool-stable-aipw} and \ref{subsec:pool-aipw-model-shift} are based on the asymptotic linear property of the AIPW estimator
\citep{robins1994estimation,tsiatis2007comment}. For completeness, we state this property in the following lemma. 

\begin{lemma}[Adapted from \cite{tsiatis2007comment} and \cite{chernozhukov2017double}]\label{lemma:expression-aipw-var}
	Suppose at least one condition holds: (a) $\mu_{(w)}(\*x)$ is correctly specified and consistently estimated for $w \in \{0,1\}$, or (b) $e(\*x)$ is correctly specified and consistently estimated. Then the AIPW estimator $\hat{\tau}_\ate$ for ATE satisfies 
	\begin{align}\label{eqn:aipw-asymptotic-normality}
	\sqrt{n} (\hat{\tau}_\ate - \tau_\ate) = \frac{1}{\sqrt{n}} \sum_{i = 1 }^n [\hat{\phi}(\*X_i, W_i, Y_i ) - \tau_\ate] =\frac{1}{\sqrt{n}} \sum_{i = 1 }^n \phi(\*X_i, W_i, Y_i ) + o_p(1) 
	\xrightarrow{d} \mathcal{N} \big( 0, \*V_\tau \big),
	\end{align}
	for the influence function $\phi(\*x, w, y)$ that satisfies $\+E[\phi(\*x, w, y) ] = 0$ and  $\*V_\tau = \+E[\phi(\*x, w, y)^2 ]$ and is defined as
	\[\phi(\*x,w,y) =  \mu_{(1)}(\*x)- \mu_{(0)}(\*x)+ \frac{w}{e(\*x)} ( y -\mu_{(1)}(\*x) ) - \frac{(1 - w )}{1 - e(\*x)} (y - \mu_{(0)}(\*x) )  - \tau_0\]
	The AIPW estimator $\hat{\tau}_\att$ for ATT also satisfies \eqref{eqn:aipw-asymptotic-normality} with $\phi(\*x, w, y)$ defined as 
	\[\phi(\*x,w,y ) =   w \big(y - \mu_{(1)}(\*x) \big)- \frac{e(\*x)(1 - w)}{1 - e(\*x)} \big(y -\mu_{(0)}(\*x)  \big) - \tau_0.\]
\end{lemma}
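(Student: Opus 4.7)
The plan is to reduce the lemma to a standard asymptotic-linear representation for doubly robust estimators and then invoke the classical CLT. First I would write
\[
\sqrt{n}\bigl(\hat{\tau}_\ate - \tau_\ate\bigr) \;=\; \frac{1}{\sqrt{n}}\sum_{i=1}^n \bigl[\hat{\phi}(\*X_i,W_i,Y_i) - \tau_\ate\bigr]
\]
and insert the oracle influence function by adding and subtracting $\phi$, yielding $\sqrt{n}(\hat{\tau}_\ate-\tau_\ate) = \frac{1}{\sqrt{n}}\sum_i \phi(\*X_i,W_i,Y_i) + R_n$, where $R_n = \frac{1}{\sqrt{n}}\sum_i (\hat{\phi}-\phi)(\*X_i,W_i,Y_i)$. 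The main work is showing $R_n = o_p(1)$; once that is done, the lemma follows by the Lindeberg--L\'evy CLT applied to the iid mean-zero sum, using $\+E[\phi]=0$ and $\+E[\phi^2]=\*V_\tau < \infty$ (the former a direct calculation using $\+E[W/e(\*X)\mid \*X]=1$ and $\+E[(1-W)/(1-e(\*X))\mid\*X]=1$ together with $\+E[Y\mid \*X, W=w]=\mu_{(w)}(\*X)$, where only one of the two identities is needed for the telescoping; the latter by the overlap assumption $\eta < e(\*x) < 1-\eta$ and standard moment bounds on $Y$).

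To handle $R_n$, I would expand $\hat{\phi}-\phi$ into pieces coming from $\hat{\mu}_{(w)}-\mu_{(w)}$ for $w\in\{0,1\}$ and from $\hat{e}-e$, treating each arm separately, and split $R_n$ into an \emph{empirical-process} piece $\{n^{-1/2}\sum_i [(\hat{\phi}-\phi)(\*Z_i) - \+E(\hat{\phi}-\phi)(\*Z)]\}$ and a \emph{bias} piece $\{\sqrt{n}\cdot\+E(\hat{\phi}-\phi)(\*Z)\}$. The empirical-process piece is $o_p(1)$ under either a Donsker-type complexity assumption on the nuisance classes or, following Chernozhukov et al., under a sample-splitting/cross-fitting construction of $\hat{\mu}_{(w)}$ and $\hat{e}$ combined with $L^2$ consistency. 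The bias piece is where double robustness enters: a direct calculation using the tower property shows that each summand factors, up to negligible remainders, as a product of the form $\bigl(\hat{e}(\*x)-e(\*x)\bigr)\bigl(\hat{\mu}_{(w)}(\*x)-\mu_{(w)}(\*x)\bigr)$. Under case (a), $\hat\mu_{(w)}-\mu_{(w)}\to 0$, and under case (b), $\hat e - e \to 0$, so in either case the product vanishes at rate $o_p(n^{-1/2})$ under standard rate conditions on the surviving nuisance.

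The main obstacle is precisely the bias piece: verifying that the residual term after the doubly robust cancellation is $o_p(n^{-1/2})$ without imposing $n^{1/4}$-rates on both nuisances. If one is willing to assume at least one nuisance is in a correctly specified parametric family estimated by MLE (as is the case in the body of the paper), then the corresponding error is $O_p(n^{-1/2})$ and the other need only be consistent, so the product is $o_p(n^{-1/2})$ automatically. Under the more general nonparametric case invoked by citing \cite{chernozhukov2017double}, one needs the product of $L^2$ rates to be $o(n^{-1/2})$, which motivates the $o_p(n^{-1/4})$ rate requirement mentioned in Section~\ref{subsec:aipw-results}.

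The ATT case is handled by the same template with the ATT-specific score of \eqref{eqn:aipw-att}: the cancellations now rely on $\+E[W - e(\*X)\mid\*X]=0$ for the propensity-driven robustness and on $\+E[Y-\mu_{(0)}(\*X)\mid \*X, W=0]=0$ for the outcome-driven robustness, and after the same empirical-process / bias decomposition one again arrives at cross-product remainders that vanish under either (a) or (b). The CLT step is then identical, with $\*V_\tau = \+E[\phi^2]$ for the ATT influence function.
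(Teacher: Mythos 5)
The paper does not actually prove this lemma: it is stated ``for completeness'' as adapted from \cite{tsiatis2007comment} and \cite{chernozhukov2017double}, and in the proof of Theorem \ref{theorem:pool-aipw} the asymptotic linearity is simply invoked with a citation. Your sketch therefore goes beyond what the paper does, and its skeleton --- write $\sqrt{n}(\hat\tau_\ate-\tau_\ate)=n^{-1/2}\sum_i\phi(\*Z_i)+R_n$, split $R_n$ into an empirical-process piece (controlled by a Donsker condition or cross-fitting) and a bias piece $\sqrt{n}\,\+E[(\hat\phi-\phi)(\*Z)]$ with the doubly robust product structure, then apply the Lindeberg--L\'evy CLT --- is the standard and correct route. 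Your verification that $\+E[\phi]=0$ needs only one of the two conditioning identities, and the corresponding ATT identities, are also fine.

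There is, however, a genuine gap in the bias step, and it sits exactly on the ``(a) or (b)'' hypothesis. A direct computation for the treated arm (tower property, then correctness of $e$) gives $\+E[(\hat\phi-\phi)(\*Z)]=\+E\big[(\mu_{(1),0}(\*X)-\hat\mu_{(1)}(\*X))\,(e_0(\*X)-\hat e(\*X))/\hat e(\*X)\big]$: the product is of errors relative to the \emph{truth}, not relative to probability limits. Under one-sided misspecification --- say $e$ correct but $\mu_{(1)}$ misspecified --- the factor $\mu_{(1),0}-\hat\mu_{(1)}$ converges to a nonzero function, so the product is only $O_p(n^{-1/2})$ and $\sqrt{n}$ times the bias is $O_p(1)$, not $o_p(1)$. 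Your claim that ``the corresponding error is $O_p(n^{-1/2})$ and the other need only be consistent'' conflates consistency for the pseudo-true limit with consistency for the truth. The surviving $O_p(1)$ term is asymptotically linear in $\sqrt{n}(\hat{\bm\gamma}-\bm\gamma_0)$ and therefore modifies the influence function --- which is precisely the point of \cite{tsiatis2007comment}: the asymptotic variance of AIPW differs according to which nuisance is correctly specified. As written, your argument establishes the lemma only when both nuisances are consistent for the truth (the cross-fitting regime of \cite{chernozhukov2017double}, where the product of $L^2$ rates being $o(n^{-1/2})$ is the operative condition); to cover the genuine double-robustness case you would have to either carry the extra projection term explicitly or restrict the conclusion to that regime.
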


We can see from Lemma \ref{lemma:expression-aipw-var} that the score $\hat{\phi}(\*x,w,y)$ in the definition of $\hat{\tau}_\ate$ is an estimator of $\tau_\ate+ \phi(\*x,w,y)$ (recall Section \ref{subsec:aipw-estimate}, and similarly for $\hat{\tau}_\att$). Lemma \ref{lemma:expression-aipw-var}  formally states the doubly robust property mentioned in Appendix \ref{subsec:aipw-estimate}:
$\hat{\tau}_\aipw$ continues to be consistent and asymptotically normal if either the propensity model is misspecified or the outcome model is misspecified, but not both.


\subsection{IVW has the minimum variance}\label{subsec:ivw-min-var}
Let $\hat{z}^\concat$ be an estimator for the combined data and $\hat{z}^\sk$ be an estimator on data set $k$. The following discussion holds for $\hat{z}$ to be any of $\hat{\tau}_\ate$, $\hat{\tau}_\att$, $\hat{\*V}_{\tau_\ate}$ and $\hat{\*V}_{\tau_\att}$.

Let $\hat z^\concat = \sum_{k = 1}^D \omega_k \hat z^\sk$ with $\sum_{k = 1}^D \omega_k = 1$. Since $\hat z^\sk$ for all $k$ are estimated from different populations, they are independent and 
\[\mathrm{Var}(\hat z^\concat ) = \sum_{k = 1}^D \omega_k^2 \mathrm{Var}(\hat z^\sk ). \]
To solve the $\omega_k$ that minimizes $\mathrm{Var}(\hat z^\concat )$ under the constraint $\sum_{k = 1}^D \omega_k = 1$, we introduce a Lagrange multiplier $\lambda$, and we seek to solve $\omega_k$ and $\lambda$ from the following Lagrange function
\[\mathcal{L}(\bm{\omega}, \lambda) = \sum_{k = 1}^D \omega_k^2 \mathrm{Var}(\hat z^\sk ) - \lambda \left(\sum_{k = 1}^D \omega_k - 1 \right) \]
Setting the derivative of $\mathcal{L}(\bm{\omega}, \lambda)$  with respect to $\omega_k$ to zero, we have $\omega^\ast_k = \lambda/(2\mathrm{Var}(\hat z^\sk ))$. Given that $\sum_{k = 1}^D \omega^\ast_k = 1$, the solution $(\lambda^\ast,\bm{\omega}^\ast)$ that minimizes $\mathcal{L}(\bm{\omega}, \lambda)$ is 
\[\lambda^\ast = \frac{2}{\sum_{j = 1}^D 1/\mathrm{Var}(\hat z^{(j)} ) } \qquad  \omega_k^\ast = \frac{1/\mathrm{Var}(\hat z^\sk )}{\sum_{ = 1}^D 1/\mathrm{Var}(\hat z^{(j)} ) } \quad \forall k.\]
In other words, $\omega_k^\ast$ is the same as IVW.
\subsection{Supplementary Results}

When the outcome model is unstable, if we continue using the same federation formulas as those for stable models in Section \ref{subsec:pool-stable-mle}, Theorem \ref{thm:pool-mle} continues to hold for some special cases, but $\hat{\bm\beta}^\pool_\mle$ converges to a different limit from that in Theorem \ref{thm:pool-mle}.

\begin{proposition}[Restricted Federated MLE for Correctly-Specified Unstable Outcome Models]\label{proposition:model-shift-mle-weighted}
	Suppose Assumption \ref{assumption:parametric}.1 hold, Condition \ref{cond:stable-outcome} holds, and $\norm{  \mathcal{I}^\concat(\bm\beta)^\I \mathcal{I}^\sk(\bm\beta)  }_2 \leq M$ for some $M < \infty$. Furthermore, suppose $\dot{\*d}^\sk_y(\bm\beta)  - \mathcal{I}^\sk(\bm\beta) \cdot \bm\beta$ and $\mathcal{I}^\sk(\bm\beta)$ do not depend on $\bm\beta$ for all $k$,  where $\dot{\*d}^\sk_y(\bm\beta)  = \+E_{(\*x, w, y) \sim \mathbb{P}^\sk }   \Big[ \frac{\partial \log  f(y|\*x, w;  \bm\beta) }{\partial \bm\beta  } \Big]$. As $n_1, \cdots, n_D \rightarrow \infty$, we have
	\begin{align}
	n_\npool^{1/2} (\hat{\*V}_{\bm\beta}^\concat)^{-1/2} (\hat{\bm\beta}^\pool_\mle - \bm\beta^\dagger) \xrightarrow{d}& \mathcal{N} (0, \*I_d), \label{eqn:proposition-model-shift-mle}
	\end{align}
	where $\bm\beta^\dagger$ minimizes the 
	Kullback-Leibler Information Criterion between $f_0(y|\*x, w,  \bm\beta^\dagger)$ and the mixture of  $f_0(y|\*x, w,  \bm\beta_0^\sk)$ on the combined data. If we replace $\hat{\bm\beta}^\pool_\mle$ by $\hat{\bm\beta}^\concat_\mle$ and/or replace  $\hat{\*V}_{\bm\beta}^\concat$ by $\hat{\*V}_{\bm\beta}^\pool$, then \eqref{eqn:proposition-model-shift-mle} continues to hold.
\end{proposition}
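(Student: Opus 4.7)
The plan is to exploit the linearity assumption to obtain a closed form for $\bm\beta^\dagger$ and to show that Hessian-weighted aggregation of per-dataset MLEs recovers it exactly in the limit. Since $\mathcal{I}^\sk(\bm\beta) \equiv \mathcal{I}^\sk$ and $\dot{\*d}^\sk_y(\bm\beta) - \mathcal{I}^\sk \bm\beta$ is a constant vector $\*c_k$, we have $\dot{\*d}^\sk_y(\bm\beta) = \mathcal{I}^\sk \bm\beta + \*c_k$. The KL-minimizer on the combined data solves the mixed first-order condition $\sum_k p_k \dot{\*d}^\sk_y(\bm\beta^\dagger) = 0$, giving the closed form $\bm\beta^\dagger = -\big(\sum_k p_k \mathcal{I}^\sk\big)^\I \sum_k p_k \*c_k$. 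Each per-dataset log-likelihood, under correct specification of $f_0(y \mid \*x, w, \bm\beta_0^\sk)$, has population score vanishing at $\bm\beta_0^\sk = -(\mathcal{I}^\sk)^\I \*c_k$. A direct computation then delivers the key matching identity $\sum_k p_k \mathcal{I}^\sk \bm\beta_0^\sk = -\sum_k p_k \*c_k = \sum_k p_k \mathcal{I}^\sk \bm\beta^\dagger$, so the Hessian-weighted population average of the $\bm\beta_0^\sk$ equals $\bm\beta^\dagger$ exactly.

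Next, I would decompose the federated estimator as
\begin{align*}
\hat{\bm\beta}^\pool_\mle - \bm\beta^\dagger = \Big(\textstyle\sum_k \hat{\*H}_{\bm\beta}^\sk\Big)^\I \sum_k \hat{\*H}_{\bm\beta}^\sk (\hat{\bm\beta}^\sk - \bm\beta_0^\sk) + \Big(\textstyle\sum_k \hat{\*H}_{\bm\beta}^\sk\Big)^\I \sum_k \hat{\*H}_{\bm\beta}^\sk (\bm\beta_0^\sk - \bm\beta^\dagger).
\end{align*}
Using $n_\npool^\I \hat{\*H}_{\bm\beta}^\sk \xrightarrow{p} -p_k \mathcal{I}^\sk$ together with the matching identity, the second term is $o_p(n_\npool^{-1/2})$. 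Because $\mathcal{I}^\sk$ is constant in $\bm\beta$, the Newton step is exact: the per-dataset first-order condition reduces to $\hat{\*H}_{\bm\beta}^\sk(\hat{\bm\beta}^\sk - \bm\beta_0^\sk) = -\partial \bm\ell_{n_k}(\bm\beta_0^\sk)/\partial \bm\beta$ with zero remainder. Substitution yields
\begin{align*}
\sqrt{n_\npool}\,(\hat{\bm\beta}^\pool_\mle - \bm\beta^\dagger) = -\Big(n_\npool^\I \textstyle\sum_k \hat{\*H}_{\bm\beta}^\sk\Big)^\I \cdot n_\npool^{-1/2} \sum_k \frac{\partial \bm\ell_{n_k}(\bm\beta_0^\sk)}{\partial \bm\beta} + o_p(1).
\end{align*}

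The score contributions are independent across $k$, and correct per-dataset specification gives $\+E_{\mathbb{P}^\sk}[\partial \log f(Y \mid \*X, W, \bm\beta_0^\sk)/\partial \bm\beta] = 0$, so by a standard CLT $n_k^{-1/2} \partial \bm\ell_{n_k}(\bm\beta_0^\sk)/\partial \bm\beta \xrightarrow{d} \mathcal{N}(0, \*B_{\bm\beta}^\sk)$. Reweighting by $\sqrt{n_k/n_\npool} \to \sqrt{p_k}$ and summing produces a joint Gaussian limit with variance $\*B_{\bm\beta}^\concat = \sum_k p_k \*B_{\bm\beta}^\sk$; combined with the Hessian factor tending to $-\*A_{\bm\beta}^\concat = -\sum_k p_k \mathcal{I}^\sk$, this delivers the sandwich limit $\sqrt{n_\npool}(\hat{\bm\beta}^\pool_\mle - \bm\beta^\dagger) \xrightarrow{d} \mathcal{N}(0, \*V_{\bm\beta}^\concat)$ with $\*V_{\bm\beta}^\concat = (\*A_{\bm\beta}^\concat)^\I \*B_{\bm\beta}^\concat (\*A_{\bm\beta}^\concat)^\I$, i.e., the formula in Theorem \ref{thm:pool-mle}. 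Consistency of $\hat{\*V}_{\bm\beta}^\pool$ follows from the convergence of the sample-size-weighted averages $\hat{\*A}_{\bm\beta}^\pool \xrightarrow{p} \*A_{\bm\beta}^\concat$ and $\hat{\*B}_{\bm\beta}^\pool \xrightarrow{p} \*B_{\bm\beta}^\concat$, which carries over verbatim from Theorem \ref{thm:pool-mle}; interchangeability with $\hat{\bm\beta}^\concat_\mle$ and $\hat{\*V}_{\bm\beta}^\concat$ follows because the combined-data MLE solves the aggregated score equation whose linearization around $\bm\beta^\dagger$ matches the display above to $o_p(n_\npool^{-1/2})$.

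The main obstacle, and the reason the proposition is restricted to this special case, is securing the matching identity $\sum_k \hat{\*H}_{\bm\beta}^\sk \bm\beta_0^\sk \approx \sum_k \hat{\*H}_{\bm\beta}^\sk \bm\beta^\dagger$. This requires $\dot{\*d}^\sk_y$ to be affine in $\bm\beta$ with a constant slope, which is why the stated assumptions impose $\bm\beta$-independence on both $\mathcal{I}^\sk(\bm\beta)$ and $\dot{\*d}^\sk_y(\bm\beta) - \mathcal{I}^\sk(\bm\beta)\bm\beta$ (the linear-Gaussian and a handful of related settings). Outside this regime, the Newton step is only a first-order approximation, the Hessian-weighted aggregate of per-dataset MLEs targets a second-order-biased proxy of $\bm\beta^\dagger$, and an $O(1)$ asymptotic bias would contaminate the limit.
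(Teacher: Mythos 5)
Your overall architecture is close to the paper's (exact Newton step under a constant Hessian, plus a matching identity characterizing $\bm\beta^\dagger$), but there is a genuine gap in the step where you discard the second term of your decomposition. You claim $\bigl(\sum_k \hat{\*H}_{\bm\beta}^\sk\bigr)^\I \sum_k \hat{\*H}_{\bm\beta}^\sk (\bm\beta_0^\sk - \bm\beta^\dagger) = o_p(n_\npool^{-1/2})$, justified by $n_\npool^\I\hat{\*H}_{\bm\beta}^\sk \xrightarrow{p} -p_k\mathcal{I}^\sk$ together with $\sum_k p_k\mathcal{I}^\sk(\bm\beta_0^\sk-\bm\beta^\dagger)=0$. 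But the matching identity only kills the \emph{expectation} of this term: the sample Hessian satisfies $\hat{\*H}_{\bm\beta}^\sk = -n_k\mathcal{I}^\sk + O_p(n_k^{1/2})$, so $\sum_k \hat{\*H}_{\bm\beta}^\sk(\bm\beta_0^\sk - \bm\beta^\dagger)$ has a zero-mean fluctuation of exact order $O_p(n_\npool^{1/2})$ whenever the covariates are random and some $\bm\beta_0^\sk \neq \bm\beta^\dagger$. After premultiplying by $\bigl(\sum_k\hat{\*H}_{\bm\beta}^\sk\bigr)^\I$ this term is $O_p(n_\npool^{-1/2})$, not $o_p(n_\npool^{-1/2})$, and it contributes to the limit distribution. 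Concretely, in the linear--Gaussian case $\dot{\bm\ell}^\sk_{n_k}(\bm\beta^\dagger) = \dot{\bm\ell}^\sk_{n_k}(\bm\beta_0^\sk) - \hat{\*H}_{\bm\beta}^\sk(\bm\beta_0^\sk-\bm\beta^\dagger)$, so the term you drop is exactly the difference between centering the score at $\bm\beta_0^\sk$ versus at $\bm\beta^\dagger$; the per-unit score at $\bm\beta^\dagger$ has second moment $\mathcal{I}^\sk + \+E[(\*x^\T(\bm\beta_0^\sk-\bm\beta^\dagger))^2\*x\*x^\T]/\sigma^4$, which strictly exceeds your $\*B_{\bm\beta}^\sk = \mathcal{I}^\sk$. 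Since the normalizer $\hat{\*V}_{\bm\beta}^\concat$ in the display is estimated at $\hat{\bm\beta}^\concat_\mle \to \bm\beta^\dagger$ and therefore converges to the sandwich built from the scores at $\bm\beta^\dagger$, your (smaller) variance does not match the standardization, and the claimed $\mathcal{N}(0,\*I_d)$ limit would fail under your derivation.

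The paper's proof avoids this by never separating the two centers: it uses the assumption that the (affine) score combination $\dot{\bm\ell}^\sk_{n_k}(\bm\beta) - \ddot{\bm\ell}^\sk_{n_k}(\bm\beta)\bm\beta$ is invariant in $\bm\beta$ to rewrite the entire aggregated first-order condition as $n_\npool^{1/2}(\hat{\bm\beta}^\pool_\mle - \bm\beta^\dagger) = -\bigl(n_\npool^\I\sum_k\ddot{\bm\ell}^\sk_{n_k}(\bm\beta^\dagger)\bigr)^\I n_\npool^{-1/2}\sum_k\dot{\bm\ell}^\sk_{n_k}(\bm\beta^\dagger) + o_p(1)$, then applies the misspecified-MLE CLT with $\*B^\concat(\bm\beta^\dagger) = \sum_k p_k\+E\bigl[\dot{\bm\ell}^\sk_{n_k}(\bm\beta^\dagger)\dot{\bm\ell}^\sk_{n_k}(\bm\beta^\dagger)^\T\bigr]$, which automatically absorbs the Hessian-fluctuation contribution you dropped. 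To repair your argument, retain the second term, recenter the CLT on the scores evaluated at $\bm\beta^\dagger$ (or equivalently add the covariance of $\hat{\*H}_{\bm\beta}^\sk(\bm\beta_0^\sk-\bm\beta^\dagger)$ with the score term), and verify the resulting variance coincides with the probability limit of $\hat{\*V}_{\bm\beta}^\concat$. Your closed-form characterization of $\bm\beta^\dagger$ and the matching identity are correct and are a nice way to make the target explicit; only the variance accounting is off.
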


If the outcome model is linear with i.i.d. Gaussian noise and variance $\sigma^2_e$, then $\mathcal{I}^\sk(\bm\beta)  = \*X^\T \*X/\sigma_e^2 $ and $\dot{\*d}^\sk_y(\bm\beta)  - \mathcal{I}^\sk(\bm\beta) \cdot \bm\beta = -  \*Y^\T \*X/\sigma_e^2$ do not depend on $\bm\beta$, satisfying the assumptions in Proposition \ref{proposition:model-shift-mle-weighted}. In this case, $\bm\beta^\dagger$ is a weighted average of $(\bm\beta_0^{(1)},\bm\beta_0^{(2)}, \cdots, \bm\beta_0^{(D)})$ and satisfies $\sum_{k = 1}^D p_k \+E_{\*x \sim \mathbb{P}^\sk} [\*x] \cdot (\bm\beta_0^\sk - \bm\beta_0^\ast) = 0$. 

\subsection{Practical Considerations}

Regarding the variance estimator of IPW-MLE, if $Y_{i}$ is binary, $\+E[Y_i|\*X_i]$ follows a logit model, and the true propensity score is used, then we can estimate $\*D^\sk_{\bm\beta_0, \varpi}$ by
\begin{align*}
\hat{\*D}_{\bm\beta, \varpi}^\sk =& \frac{1}{n_k } \sum_{i = 1 }^{n_k}  \Big( \frac{W_i}{(\hat{e}^\pool_i )^2}   + \frac{1 - W_i}{(1 - \hat{e}^\pool_i)^2 }\Big)  \hat \varepsilon_i^2 \*X_i \*X_i^\top,
\end{align*}
where $\varepsilon_i$ is unit $i$'s residual. Some commonly used packages, such as \textsf{syvglm}  in \textsf{R} \citep{lumley2011complex}, use working residuals for $\hat \varepsilon_i$ (i.e., $\hat \varepsilon_i = \frac{Y_i - \hat p_i}{\hat p_i (1 - \hat p_i) }$  and $ \hat p_i = \frac{\exp(\*X_i \hat{\bm\beta}_\HT^\pool)}{1 + \exp(\*X_i \hat{\bm\beta}_\HT^\pool)}$).

\section{Supplementary Empirical Analyses}\label{sec:additional-empirical}
\subsection{A Toy Example for Inverse Variance Weighting to Combine Coefficients}\label{subsec:toy-example-ivw}
In this section, we present a simplified example for the federated treatment coefficient from inverse variance weighting lying outside the interval between treatment coefficients on two data sets. Suppose we only have treatment and age in the outcome model, and the coefficients and inverse variance matrices on two data sets \footnote{These numbers are identical to those in the inverse propensity-weighted logistic regression on MarketScan and Optum ARD cohorts.} are:
\begin{align*}
    & \hat{\bm\beta}_\ms = \begin{bmatrix} \hat{\beta}_{\ms, w} \\ \hat{\beta}_{\ms, \mathrm{age}} \end{bmatrix} = \begin{bmatrix} -0.67 \\
2.03 \end{bmatrix},  \quad \hat{\*V}^{-1}_\ms  = \begin{bmatrix} 51.6 &  -28.6 \\ -28.6	& 474.02 \end{bmatrix},  \\ 
&\hat{\bm\beta}_\optum = \begin{bmatrix} \hat{\beta}_{\optum, w} \\ \hat{\beta}_{\optum, \mathrm{age}} \end{bmatrix} = \begin{bmatrix} -0.02 \\ -0.15\end{bmatrix}, \quad \hat{\*V}^{-1}_\optum  = \begin{bmatrix} 55.34 & 	14.61 \\ 14.61	& 187.98 \end{bmatrix}.
\end{align*}
Then the federated coefficients based on inverse variance weighting are 
\begin{align*}
    \hat{\bm\beta}_{\ivw} = \big(\hat{\*V}^{-1}_\ms + \hat{\*V}^{-1}_\optum \big)^{-1} \big(\hat{\*V}^{-1}_\ms \hat{\bm\beta}_\ms + \hat{\*V}^{-1}_\optum \hat{\bm\beta}_\optum \big) = \begin{bmatrix} -0.71 \\ 1.42 \end{bmatrix}
\end{align*}
The federated treatment coefficient is $-0.71$, which is smaller than $\hat{\beta}_{\ms, w} $ and $\hat{\beta}_{\optum, w}$.


{\blue 
\subsection{A Toy Example for Sample Size Weighting to Combine Variances}\label{subsec:toy-example-sample-size-weighting}
In this section, we present a toy example for the federated confidence intervals to be wider than the confidence intervals of an individual data set (or equivalently, the federated variance to be larger than the variance of an individual data set). This toy example is based on the point estimates, confidence intervals and sample sizes of the pneumonia cohort in Figure \ref{fig:fed-ms-optum}. Let the sample size on MarketScan and Optum be $n_\ms = 90,018$ and $n_\optum = 208,388$. Let the estimated variance (scaled by sample size) and estimated asymptotic variance on MarketScan and Optum be
\begin{align*}
    \hat{V}_{\ms,\mathrm{sc}} =  \left(\frac{-0.013-(-0.264)}{1.96}\right)^2 = 0.0164 & \qquad \hat{V}_{\ms} =  n_\ms \hat{V}_{\ms,\mathrm{sc}}  = 1476.27 \\
    \hat{V}_{\optum,\mathrm{sc}} = \left(\frac{-0.116-(-0.174)}{1.96}\right)^2 = 0.00088 & \qquad \hat{V}_{\optum} =  n_\optum \hat{V}_{\optum,\mathrm{sc}} = 182.48.
\end{align*}
The federated variance estimator that weighs $\hat{V}_{\ms}$ and $\hat{V}_{\optum}$ by sample size weighting is 
\begin{align*}
    & \hat{V}^\pool = \frac{n_\ms}{n_\ms+n_\optum} \hat{V}_{\ms} + \frac{n_\optum}{n_\ms+n_\optum} \hat{V}_{\optum} = 572.77 \\ & \hat{V}^\pool_{\mathrm{sc}} = \frac{\hat{V}^\pool}{n_\ms+n_\optum} = 0.0019 > \hat{V}_{\optum,\mathrm{sc}}= 0.00088
\end{align*}
Then the federated variance $\hat{V}^\pool_{\mathrm{sc}}$ is larger than the estimated variance $\hat{V}_{\optum,\mathrm{sc}}$. Even though the the federated variance estimator of IPW-MLE is more than complicated than this toy example, the general intuition is the same. 

}

\subsection{Study Definitions}\label{subsec:study-definition}
We follow the study definitions in \cite{koenecke2020alpha}.

\paragraph{Participants}\label{paragraph:participant} We study two cohorts of patients who were diagnostically coded in U.S. hospitals with acute respiratory distress
(ARD) from each of the MarketScan and Optum databases. We further study two cohorts of patients diagnostically coded in U.S. hospitals with pneumonia from each of the MarketScan and Optum databases. 

We limit the study to older men because alpha blockers are widely used as a treatment in the U.S. for benign prostatic hyperplasia (BPH), a common condition in older men that is clinically unrelated to the respiratory system. More specifically, we focus on men over the age of 45 so that a large portion of the exposed group faces similar risks of poor outcomes from respiratory conditions as the unexposed group, thus mitigating confounding by indication.\footnote{Note that this limits our analysis' validity to older men due to their being the dominant population historically being prescribed alpha blockers. However, we recognize the importance of studying other demographics, such as women and younger men, in clinical studies \citep{holdcroft2007, mcmurray1991}; extrapolating our results to these demographics would require additional assumptions as noted in \citep{powell2021ten}.} In addition, we enforce a maximum age of 85 years to reflect the ongoing clinical trials investigating prazosin (an alpha blocker) and its effects on COVID-19 patients.\footnote{See \url{https://clinicaltrials.gov/ct2/show/NCT04365257}.}

		\begin{figure}[ht!]
\tcaptab{Histograms of Patient Age in MarketScan and Optum}
			\centering
			\begin{subfigure}{0.5\textwidth}
				\centering
				\includegraphics[width=0.7\linewidth]{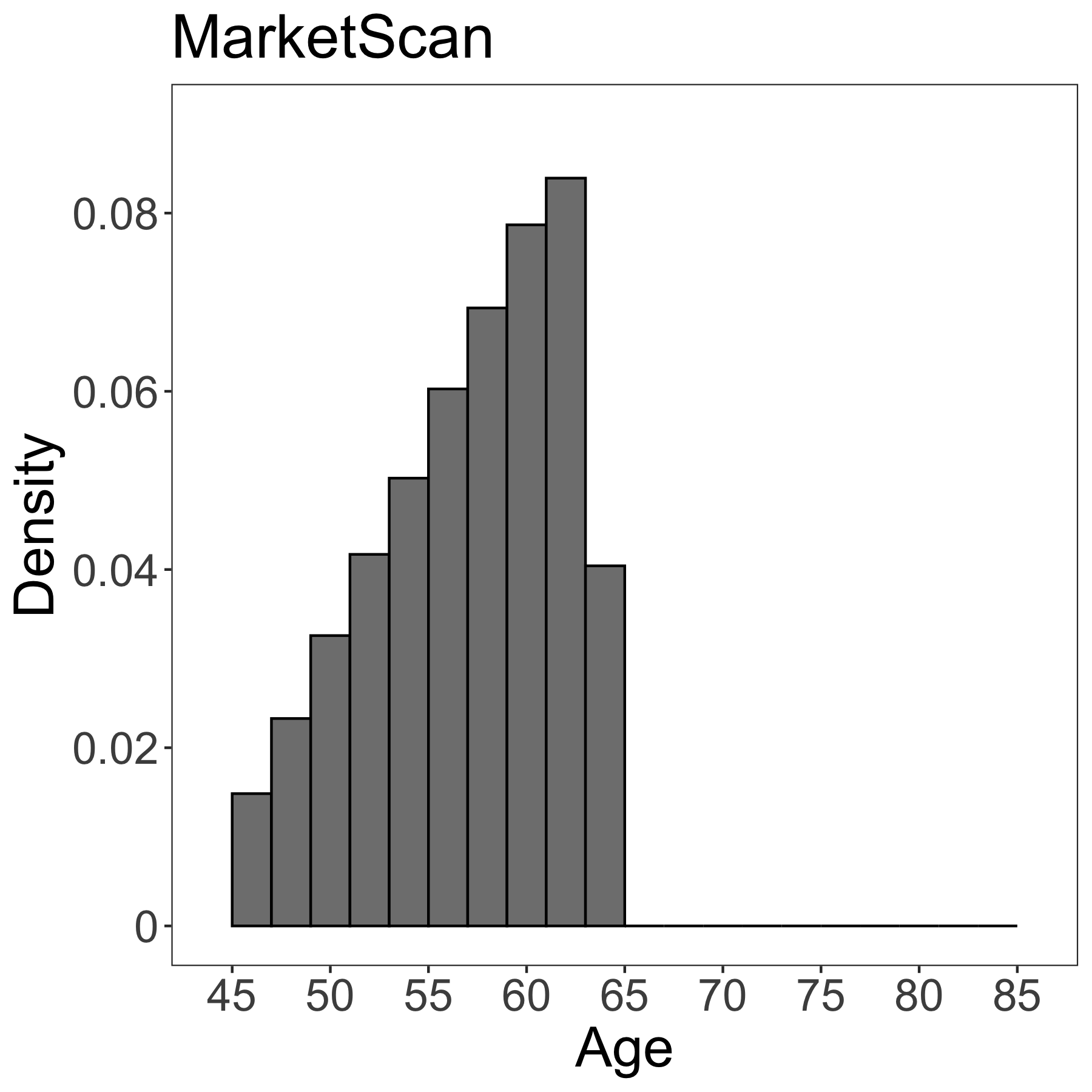}
			\end{subfigure}%
	\begin{subfigure}{0.5\textwidth}
				\centering
				\includegraphics[width=0.7\linewidth]{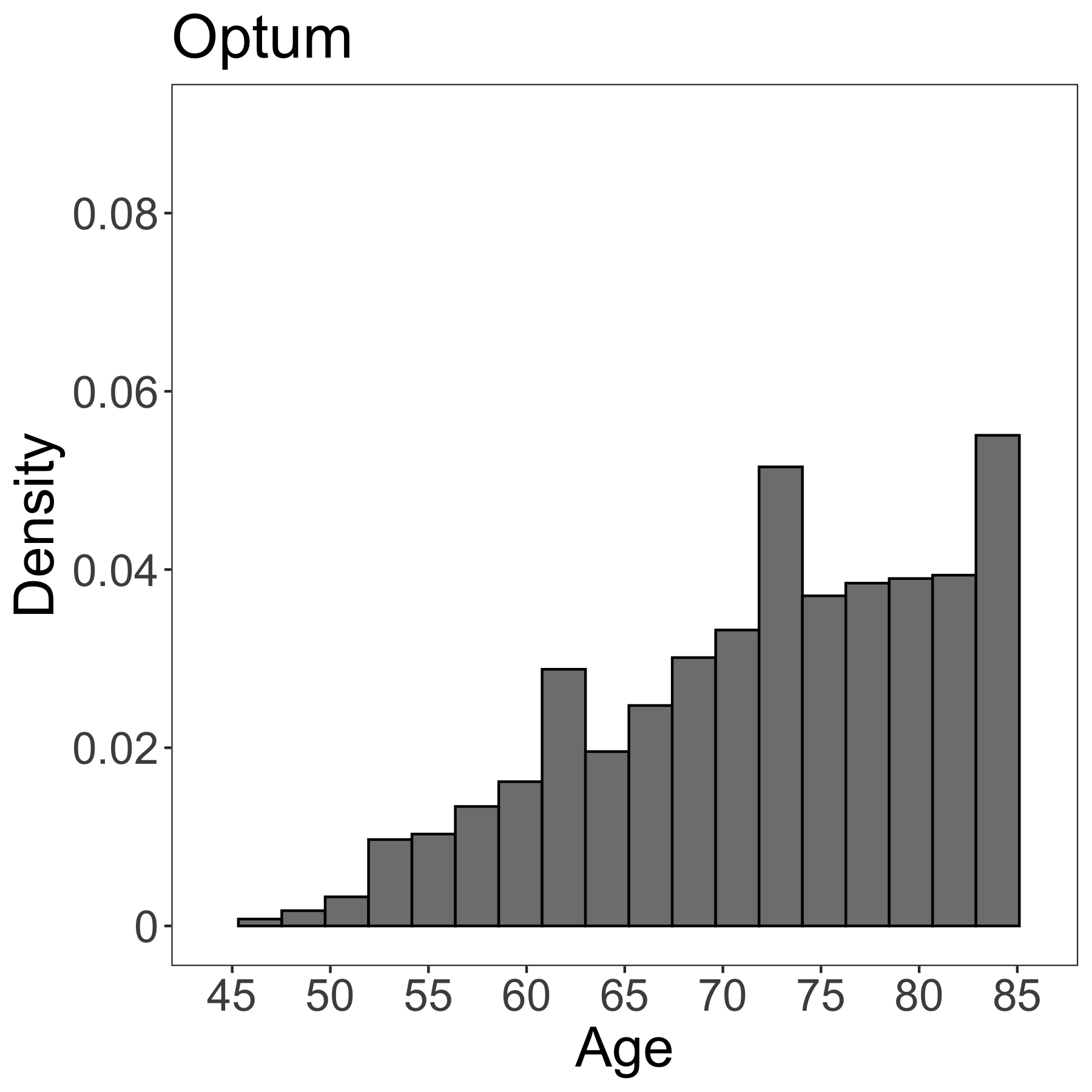}
			\end{subfigure}
			\bnotefig{We restrict all patients in both MarketScan and Optum databases to be over the age of 45. While patient data from the MarketScan database only include patients younger than age 65, a majority of the patients in the Optum database are over 65 years old.}
			\label{fig:age-hist}
		\end{figure}

After the restrictions on sex and age,  we obtain a cohort of 12,463 ARD inpatients and a cohort of 103,681 pneumonia inpatients from the MarketScan database (denoted as $\mathcal{C}_{\ms,\mathrm{ARD}}$ and $\mathcal{C}_{\ms,\mathrm{PNA}}$, respectively), and a cohort of 6,084 ARD inpatients and a cohort of  234,993 pneumonia inpatients from the Optum database (denoted as $\mathcal{C}_{\optum,\mathrm{ARD}}$ and $\mathcal{C}_{\optum,\mathrm{PNA}}$, respectively).

The demographics of patients in the MarketScan and Optum databases differ in two aspects. First, Optum includes older patients as MarketScan only includes patients up to age 65 due to Medicare exclusions (see Figure \ref{fig:age-hist} for the distribution of patient age on MarketScan and Optum). Second, Optum has more recent patient records from the fiscal year 2004 to 2019, while MarketScan only has patient records from the fiscal year 2004 to 2016.



\paragraph{Potential Confounders $\*X_i$} $\*X_i$ consists of age, fiscal year, and health-related confounders. Health-related confounders include total weeks with inpatient
admissions in the prior year, total outpatient visits in the prior year, total days as an inpatient in the
prior year, total weeks with inpatient admissions in the prior two months, and comorbidities identified
from healthcare encounters in the prior year: hypertension, ischemic heart disease, acute myocardial infarction, heart failure, chronic obstructive pulmonary disease, diabetes mellitus, and cancer.

\subsection{Additional Results for Federation Across Two Medical Claim Data Sets}

\begin{figure}[ht!]
		\centering
		\tcapfig{Coefficient of the Exposure to Alpha Blockers}
		\includegraphics[width=0.6\linewidth]{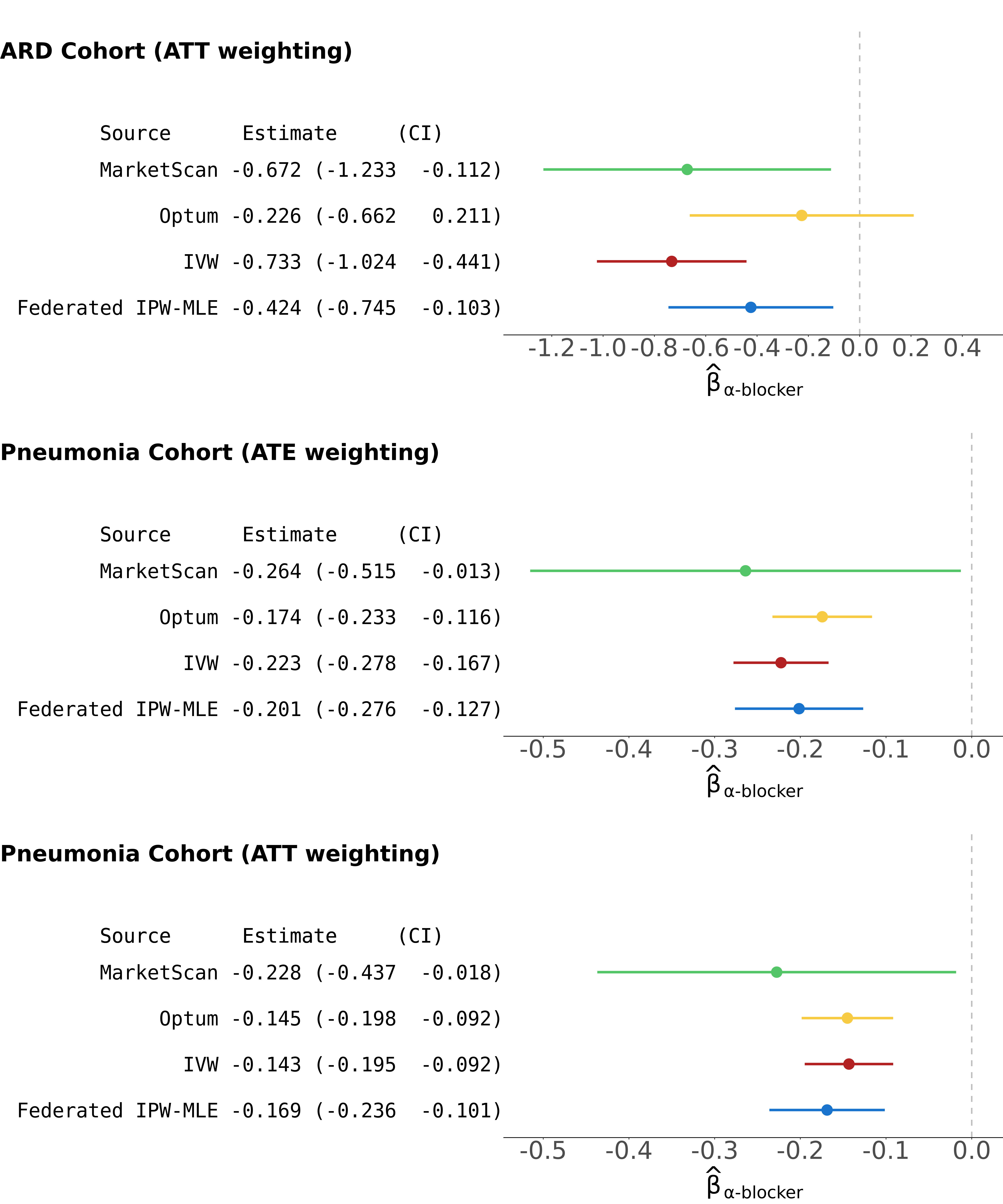}
		\label{fig:compare-ivw-ours-pna}
		\bnotefig{These figures show the estimated coefficient of alpha blockers and 95\% confidence interval on MarketScan and Optum, and federated coefficient and 95\% confidence interval from IVW and unrestricted federated IPW-MLE. These figures complement Figure \ref{fig:compare-ivw-ours} with ATE and ATT weighting on ARD and pneumonia cohorts. The federated coefficient from IVW lies outside the interval between treatment coefficients on two data sets only for the ARD cohort, whose sample size is much smaller than that of the pneumonia cohort.}
\end{figure}

		\begin{figure}[ht!]
\tcaptab{Coefficient of Age}\label{fig:coef-age}
			\centering
				\centering	\includegraphics[width=0.6\linewidth]{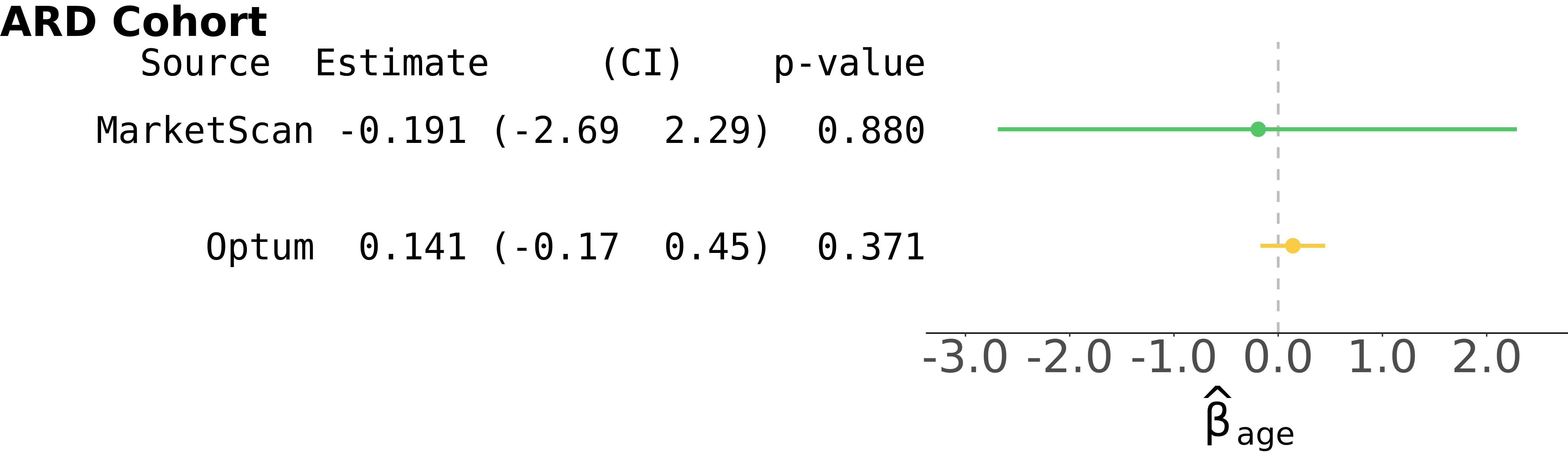}
\bnotefig{Coefficient of age has opposite signs in the logit model on two data sets.}
		\end{figure}



\begin{figure}
\tcaptab{Federation Across MarketScan and Optum (Unrestricted Federated MLE)}

		\centering
		\includegraphics[width=0.7\linewidth]{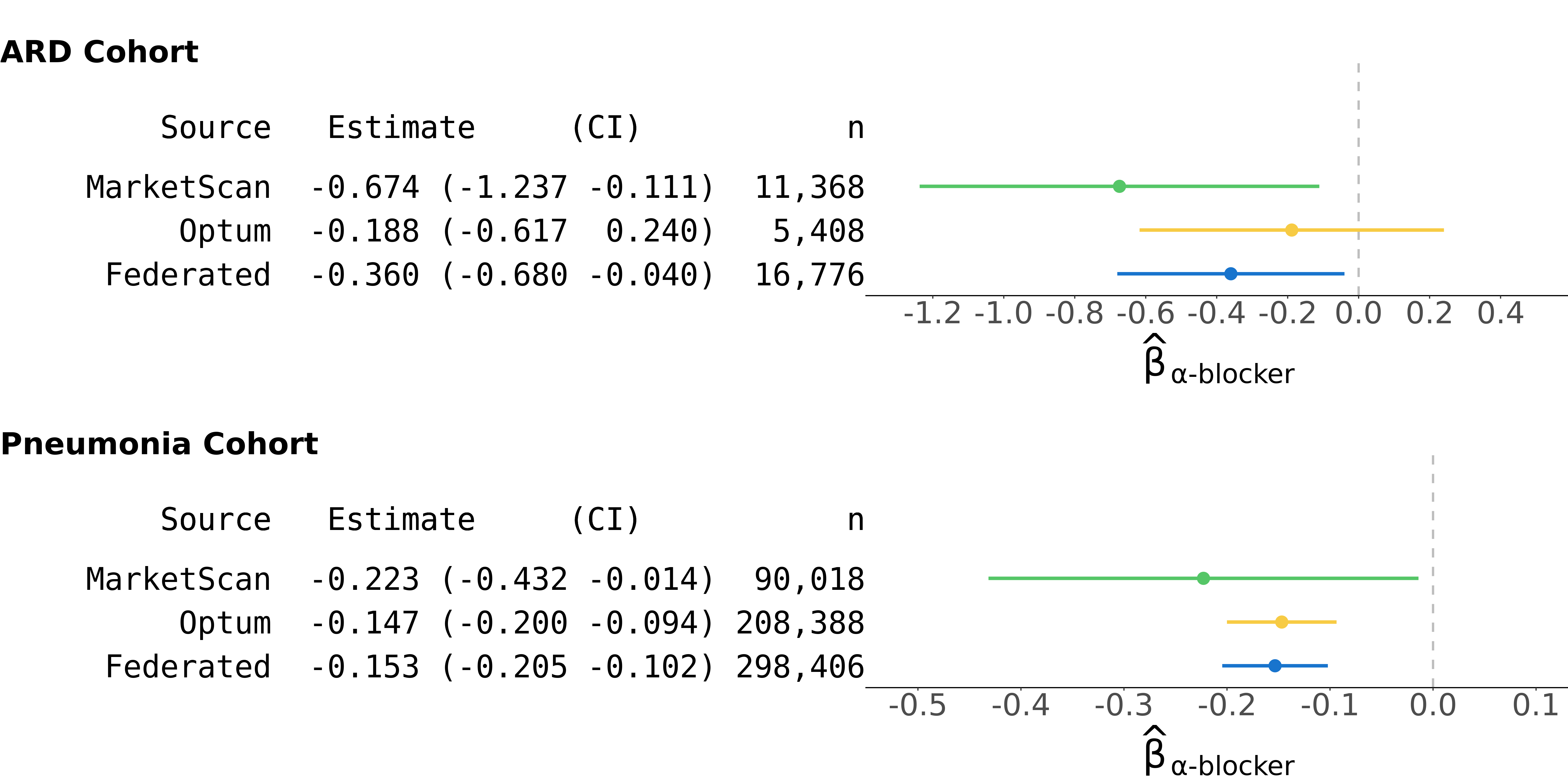}
			\label{fig:fed-ms-optum-mle}
\end{figure}

			
		
	
			\begin{figure}
		\centering
		\tcapfig{Federated ATE Across MarketScan and Optum}
		\begin{subfigure}{1\textwidth}
				\centering	\includegraphics[width=0.7\linewidth]{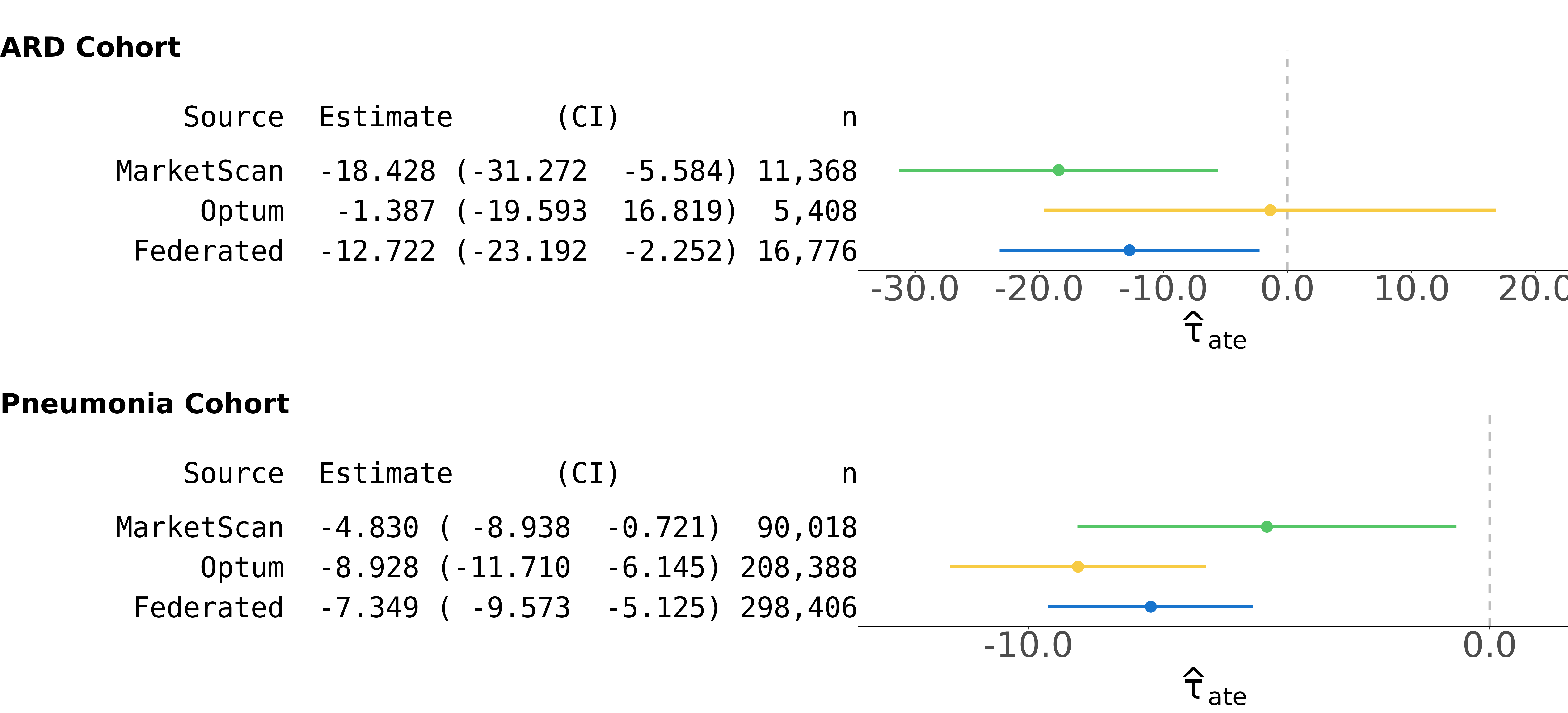}	
			\caption{Restricted AIPW {(inverse variance weighting)}}
			\end{subfigure}
		\begin{subfigure}{1\textwidth}
				\centering	\includegraphics[width=0.7\linewidth]{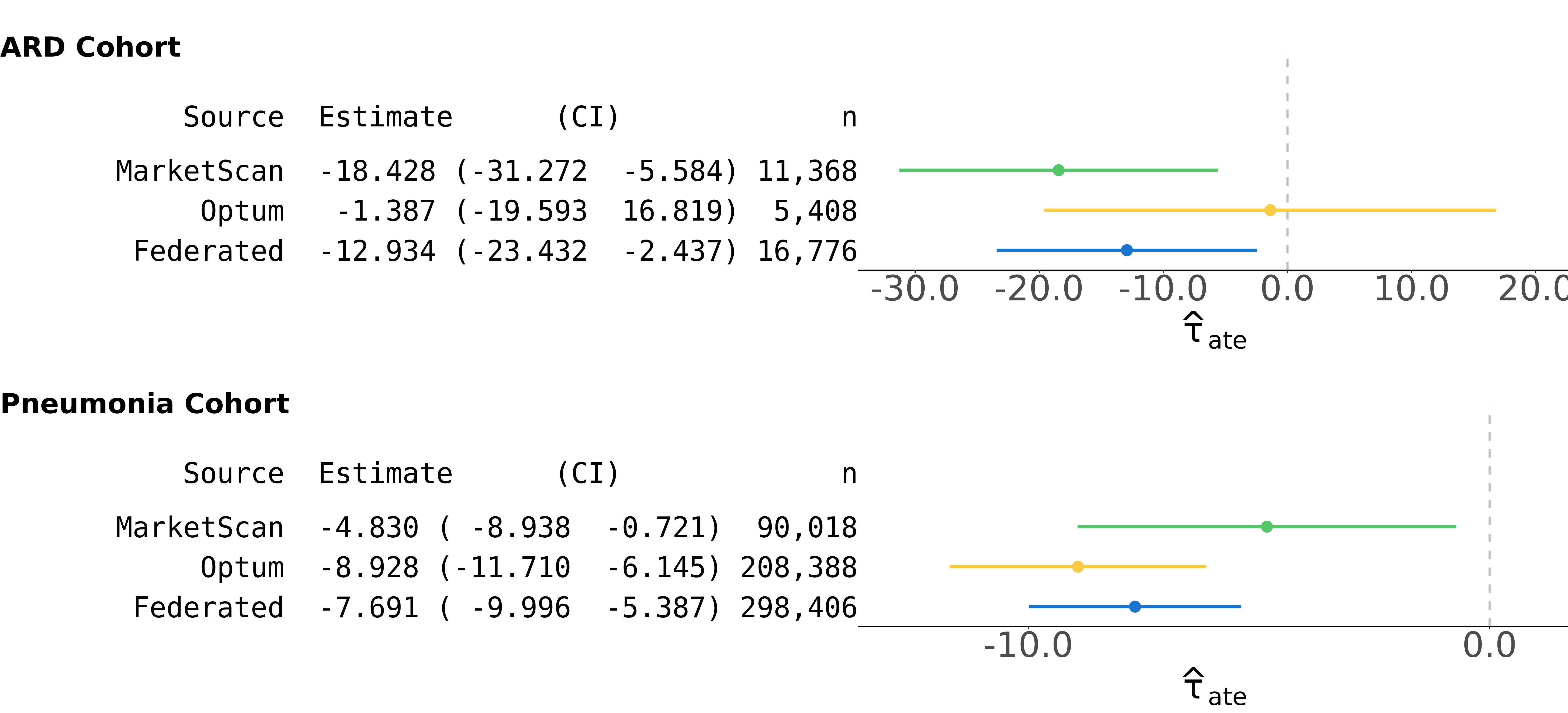}	
			\caption{Unrestricted AIPW (sample size weighting)}
			\end{subfigure}
			
			\begin{subfigure}{1\textwidth}
				\centering	\includegraphics[width=0.7\linewidth]{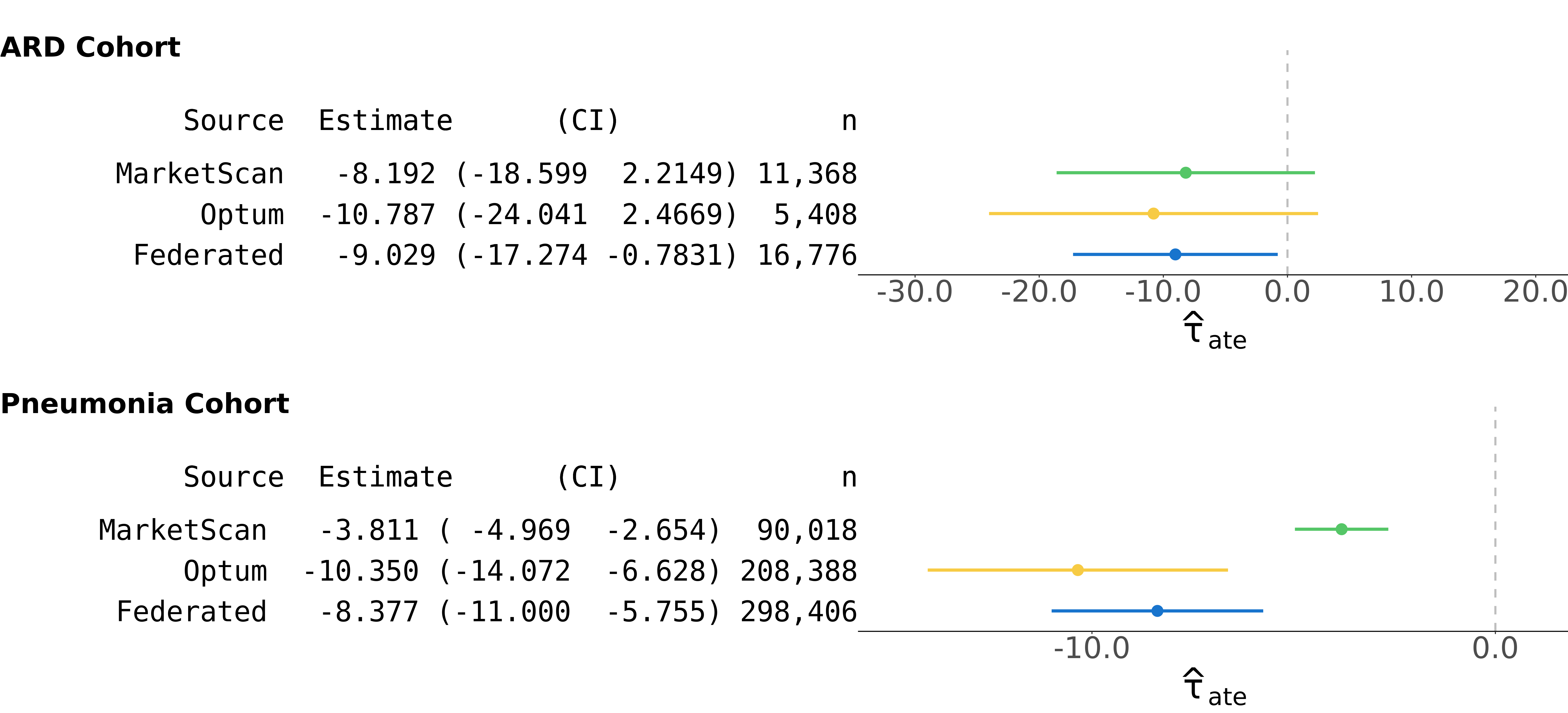}
			\caption{Unrestricted IPW-MLE (age and year dummies as unstable covariates)}
			\end{subfigure}
		\label{fig:compare-ivw-ours-aipw}
\end{figure}

\clearpage
\newpage
\subsection{Additional Simulation Results on One Medical Claims Data Set}\label{subsec:simulation-appendix}
The simulations in this section are based on various schemes of sampling from sub-cohorts that are partitioned from one patient cohort by age only. Suppose there are $D$ sub-cohorts. Then sub-cohort $j$, denoted as $\mathcal{C}_j$, has the records of patients whose age is between $(j-1)/D$ and $j/D$ percentiles of the full cohort. we consider alternative approaches to construct subsamples. The results are presented in Tables \ref{tab:varying-sampling-ratio}-\ref{tab:varying-d}, and are consistent with the results in Section \ref{subsec:empirical-sampling}. 

\paragraph{Varying Sampling Ratios of Sub-cohorts}
We construct $D = 2$ subsamples of equal size. For $\mathcal{S}_j$, $x$\% are sampled from $\mathcal{C}_j$ with replacement, and $(100-x)$\% are sampled from $\mathcal{C}_{3-j}$ with replacement, where $x \in \{50, 70, 90\}$ and $j \in \{1,2\}$. When $x = 50$, the age structure in $\mathcal{S}_1$ and $\mathcal{S}_2$ are similar; for other $x$, $\mathcal{S}_1$ has more young patients than $\mathcal{S}_2$. See Table \ref{tab:varying-sampling-ratio} for the results.


\paragraph{Varying Subsample Sizes}
We follow the same sampling schemes as \textbf{Varying Sampling Ratios of Sub-cohorts} with $x = 80$, but subsamples have unequal sizes. See Table \ref{tab:varying-sample-size} for the results.

\paragraph{Varying Number of Subsamples} We construct $D$ subsamples of equal size for $D \in \{2,3,4\}$. For $\mathcal{S}_j$, 70\% are drawn from $\mathcal{C}_j$ with replacement, and 30/(D-1)\% are drawn from $\mathcal{C}_k$ with replacement for $k \neq j$. See Table \ref{tab:varying-d} for the results.

\begin{table}[ht!]
	\centering
	\tcaptab{Comparison Between Restricted/Unrestricted Federated IPW-MLE and IVW with Corresponding Restricted/Unrestricted Benchmarks (ATT Weighting)}
	\begin{subtable}[t]{.53\textwidth}
	\centering
	\caption{Restricted Benchmarks $\hat\beta^{\bm{\s}}_{w,\BM}$, $\hat{V}^{\bm{\s}}_{w,\BM}$}
	{\footnotesize
	
	\begin{tabular}{l|r|r|r|r}
		\toprule
	&	\multicolumn{1}{c|}{$\hat\beta^{\bm{\s}}_{w,\BM}$}     & \multicolumn{1}{c|}{$\hat{\beta}_{w,\ivw}$} & \multicolumn{1}{c|}{$\hat\beta^{\bm{\s}.\pool}_{w,\HT}$} & \multicolumn{1}{c}{$\hat\beta^{\bm{\uns}.\pool}_{w,\HT}$} \\
	&	\textbf{mean}  & \textbf{MAE} &  \textbf{MAE} & \textbf{MAE}  \\ 
		\midrule
  ARD & \textsl{-0.7283} & 1.0965 & 0.0497 & 0.0506 \\ 
  PNA & \textsl{-0.2136} & 0.5260 & 0.0292 & 0.0367 \\ 
\midrule
&	\multicolumn{1}{c|}{$\hat{V}^{\bm{\s}}_{w,\BM}$}    & \multicolumn{1}{c|}{$\hat{V}_{w,\ivw}$}  & \multicolumn{1}{c|}{$\hat{V}^{\bm{\s}.\pool}_{w,\HT}$} & \multicolumn{1}{c}{$\hat{V}^{\bm{\uns}.\pool}_{w,\HT}$} \\
&	\textbf{mean}  & \textbf{MAE} &  \textbf{MAE} & \textbf{MAE}  \\ 
\midrule
  ARD & \textsl{0.0953} & 0.0691 & 0.0349 & 0.0323 \\ 
  PNA & \textsl{0.0468} & 0.0171 & 0.0084 & 0.0066 \\ 
		\bottomrule
	\end{tabular}
	}
	 \end{subtable}%
   \begin{subtable}[t]{0.53\textwidth}
  \centering
\caption{Unrestricted Benchmarks $\hat\beta^{\bm{\uns}}_{w,\BM}$, $\hat{V}^{\bm{\uns}}_{w,\BM}$}
{\footnotesize
\centering
	\begin{tabular}{l|r|r|r|r}
		\toprule
	&	\multicolumn{1}{c|}{$\hat\beta^{\bm{\uns}}_{w,\BM}$}     & \multicolumn{1}{c|}{$\hat{\beta}_{w,\ivw}$} & \multicolumn{1}{c|}{$\hat\beta^{\bm{\s}.\pool}_{w,\HT}$} & \multicolumn{1}{c}{$\hat\beta^{\bm{\uns}.\pool}_{w,\HT}$} \\
  &	\textbf{mean}  & \textbf{MAE} &  \textbf{MAE} & \textbf{MAE}  \\ 
		\midrule
  ARD & \textsl{-0.7223} & 1.1026 & 0.0483 & 0.0474 \\ 
  PNA & \textsl{-0.2142} & 0.5254 & 0.0294 & 0.0358 \\ 
\midrule
&	\multicolumn{1}{c|}{$\hat{V}^{\bm{\uns}}_{w,\BM}$}    & \multicolumn{1}{c|}{$\hat{V}_{w,\ivw}$}  & \multicolumn{1}{c|}{$\hat{V}^{\bm{\s}.\pool}_{w,\HT}$} & \multicolumn{1}{c}{$\hat{V}^{\bm{\uns}.\pool}_{w,\HT}$} \\
 &	\textbf{mean} & \textbf{MAE} &  \textbf{MAE} & \textbf{MAE}  \\ 
\midrule
  ARD & \textsl{0.0951} & 0.0689 & 0.0347 & 0.0321 \\ 
  PNA & \textsl{0.0467} & 0.0171 & 0.0084 & 0.0065 \\ 
		\bottomrule
	\end{tabular}
			}
	\end{subtable}
	\bnotetab{Subsamples are simulated from the MarketScan ARD cohort, and from the MarketScan pneumonia (PNA) cohort with $D=2$. For subsamples drawn from ARD cohort, $n_1 = n_2 = 6,000$; for subsamples drawn from PNA cohort, $n_1 = n_2 = 10,000$. These tables complement Table \ref{tab:fed-ivw-main} and follow the same sampling scheme as Table \ref{tab:fed-ivw-main}.}
	\label{tab:fed-ivw-att-weighting}
\end{table}

\begin{table}[h!]
	\centering
	\tcaptab{Varying Sampling Ratios of Sub-cohorts}
	\begin{subtable}[t]{.53\textwidth}
	\centering
	\caption{MLE: Restricted Benchmarks}
	{\footnotesize
	\begin{tabular}{l|r|r|r|r}
		\toprule
	&	\multicolumn{1}{c|}{$\hat\beta^{\bm{\s}}_{w,\BM}$}     & \multicolumn{1}{c|}{$\hat{\beta}_{w,\ivw}$} & \multicolumn{1}{c|}{$\hat\beta^{\bm{\s}.\pool}_{w,\mle}$} & \multicolumn{1}{c}{$\hat\beta^{\bm{\uns}.\pool}_{w,\mle}$} \\
	&	\textbf{mean} & \textbf{MAE} &  \textbf{MAE} & \textbf{MAE}  \\ 
		\midrule
50\%/50\% & \textsl{-0.2077} & 0.0504 & 0.0246 & 0.0264 \\ 
  70\%/30\% & \textsl{-0.1883} & 0.0586 & 0.0289 & 0.0306 \\ 
  90\%/10\% & \textsl{-0.2294} & 0.0503 & 0.0262 & 0.0268 \\ 
  \midrule
&  \multicolumn{1}{c|}{$\hat{V}^{\bm{\s}}_{w,\BM}$}    & \multicolumn{1}{c|}{$\hat{V}_{w,\ivw}$}  & \multicolumn{1}{c|}{$\hat{V}^{\bm{\s}.\pool}_{w,\mle}$} & \multicolumn{1}{c}{$\hat{V}^{\bm{\uns}.\pool}_{w,\mle}$} \\
 &	\textbf{mean} & \textbf{MAE} &  \textbf{MAE} & \textbf{MAE}  \\ 
\midrule
  50\%/50\% & \textsl{0.0515} & 0.0081 & 0.0043 & 0.0040 \\ 
  70\%/30\% & \textsl{0.0508} & 0.0086 & 0.0047 & 0.0045 \\ 
  90\%/10\% & \textsl{0.0528} & 0.0082 & 0.0049 & 0.0046 \\ 
  \bottomrule
	\end{tabular}
	}
	 \end{subtable}%
   \begin{subtable}[t]{0.53\textwidth}
  \centering
\caption{MLE: Unrestricted Benchmarks}
{\footnotesize
\centering
	\begin{tabular}{l|r|r|r|r}
		\toprule
	&	\multicolumn{1}{c|}{$\hat\beta^{\bm{\uns}}_{w,\BM}$}     & \multicolumn{1}{c|}{$\hat{\beta}_{w,\ivw}$} & \multicolumn{1}{c|}{$\hat\beta^{\bm{\s}.\pool}_{w,\mle}$} & \multicolumn{1}{c}{$\hat\beta^{\bm{\uns}.\pool}_{w,\mle}$} \\
 &	\textbf{mean}  & \textbf{MAE} &  \textbf{MAE} & \textbf{MAE}  \\ 
		\midrule
50\%/50\% & \textsl{-0.2082} & 0.0512 & 0.0251 & 0.0266 \\ 
  70\%/30\% & \textsl{-0.1887} & 0.0591 & 0.0298 & 0.0305 \\ 
  90\%/10\% & \textsl{-0.2300} & 0.0503 & 0.0267 & 0.0273 \\ 
\midrule
&	\multicolumn{1}{c|}{$\hat{V}^{\bm{\uns}}_{w,\BM}$}    & \multicolumn{1}{c|}{$\hat{V}_{w,\ivw}$}  & \multicolumn{1}{c|}{$\hat{V}^{\bm{\s}.\pool}_{w,\mle}$} & \multicolumn{1}{c}{$\hat{V}^{\bm{\uns}.\pool}_{w,\mle}$} \\
 &	\textbf{mean}  & \textbf{MAE} &  \textbf{MAE} & \textbf{MAE}  \\ 
\midrule
  50\%/50\% & \textsl{0.0516} & 0.0081 & 0.0043 & 0.0041 \\ 
  70\%/30\% & \textsl{0.0508} & 0.0087 & 0.0048 & 0.0045 \\ 
  90\%/10\% & \textsl{0.0529} & 0.0082 & 0.0049 & 0.0047 \\ 
		\bottomrule
	\end{tabular}
			}
	\end{subtable}
	\vspace{0.5cm}
	\begin{subtable}[t]{.53\textwidth}
	\centering
	\caption{IPW-MLE: Restricted Benchmarks}
	{\footnotesize
	
	\begin{tabular}{l|r|r|r|r}
		\toprule
	&	\multicolumn{1}{c|}{$\hat\beta^{\bm{\s}}_{w,\BM}$}     & \multicolumn{1}{c|}{$\hat{\beta}_{w,\ivw}$} & \multicolumn{1}{c|}{$\hat\beta^{\bm{\s}.\pool}_{w,\HT}$} & \multicolumn{1}{c}{$\hat\beta^{\bm{\uns}.\pool}_{w,\HT}$} \\
	&	\textbf{mean}  & \textbf{MAE} &  \textbf{MAE} & \textbf{MAE}  \\ 
		\midrule
50\%/50\% & \textsl{-0.2793} & 0.7466 & 0.0322 & 0.0205 \\ 
  70\%/30\% & \textsl{-0.2630} & 0.7721 & 0.0383 & 0.0262 \\ 
  90\%/10\% & \textsl{-0.3029} & 0.8316 & 0.0342 & 0.0289 \\ 
 \midrule
&	\multicolumn{1}{c|}{$\hat{V}^{\bm{\s}}_{w,\BM}$}    & \multicolumn{1}{c|}{$\hat{V}_{w,\ivw}$}  & \multicolumn{1}{c|}{$\hat{V}^{\bm{\s}.\pool}_{w,\HT}$} & \multicolumn{1}{c}{$\hat{V}^{\bm{\uns}.\pool}_{w,\HT}$} \\
&	\textbf{mean}  & \textbf{MAE} &  \textbf{MAE} & \textbf{MAE}  \\ 
\midrule
  50\%/50\% & \textsl{0.0697} & 0.0449 & 0.0167 & 0.0138 \\ 
  70\%/30\% & \textsl{0.0705} & 0.0467 & 0.0176 & 0.0148 \\ 
  90\%/10\% & \textsl{0.0720} & 0.0492 & 0.0177 & 0.0153 \\ 
		\bottomrule
	\end{tabular}
	}
	 \end{subtable}%
   \begin{subtable}[t]{0.53\textwidth}
  \centering
\caption{IPW-MLE: Unrestricted Benchmarks}
{\footnotesize
\centering
	\begin{tabular}{l|r|r|r|r}
		\toprule
	&	\multicolumn{1}{c|}{$\hat\beta^{\bm{\uns}}_{w,\BM}$}     & \multicolumn{1}{c|}{$\hat{\beta}_{w,\ivw}$} & \multicolumn{1}{c|}{$\hat\beta^{\bm{\s}.\pool}_{w,\HT}$} & \multicolumn{1}{c}{$\hat\beta^{\bm{\uns}.\pool}_{w,\HT}$} \\
  &	\textbf{mean}  & \textbf{MAE} &  \textbf{MAE} & \textbf{MAE}  \\ 
		\midrule
50\%/50\% & \textsl{-0.2746} & 0.7513 & 0.0344 & 0.0117 \\ 
  70\%/30\% & \textsl{-0.2587} & 0.7763 & 0.0373 & 0.0152 \\ 
  90\%/10\% & \textsl{-0.2993} & 0.8353 & 0.0333 & 0.0178 \\

\midrule
&	\multicolumn{1}{c|}{$\hat{V}^{\bm{\uns}}_{w,\BM}$}    & \multicolumn{1}{c|}{$\hat{V}_{w,\ivw}$}  & \multicolumn{1}{c|}{$\hat{V}^{\bm{\s}.\pool}_{w,\HT}$} & \multicolumn{1}{c}{$\hat{V}^{\bm{\uns}.\pool}_{w,\HT}$} \\
 &	\textbf{mean} & \textbf{MAE} &  \textbf{MAE} & \textbf{MAE}  \\ 
\midrule
  50\%/50\% & \textsl{0.0690} & 0.0441 & 0.0160 & 0.0130 \\ 
  70\%/30\% & \textsl{0.0690} & 0.0452 & 0.0160 & 0.0133 \\ 
  90\%/10\% & \textsl{0.0711} & 0.0484 & 0.0169 & 0.0145 \\ 
		\bottomrule
	\end{tabular}
			}
	\end{subtable}
	\bnotetab{Subsamples are sampled from the MarketScan pneumonia cohort with $D=2$ and $n_1 = n_2 = 10,000$. We use ATE weighting in IPW-MLE. The benchmark means and MAE are calculated based on 50 iterations.}
	\label{tab:varying-sampling-ratio}
\end{table}

\begin{table}[h!]
	\centering
	\tcaptab{Varying Subsample Sizes}
	\begin{subtable}[t]{.53\textwidth}
	\centering
	\caption{MLE: Restricted Benchmarks $\hat\beta^{\bm{\s}}_{w,\BM}$, $\hat{V}^{\bm{\s}}_{w,\BM}$}
	{\footnotesize
	\begin{tabular}{l|r|r|r|r}
		\toprule
	&	\multicolumn{1}{c|}{$\hat\beta^{\bm{\s}}_{w,\BM}$}     & \multicolumn{1}{c|}{$\hat{\beta}_{w,\ivw}$} & \multicolumn{1}{c|}{$\hat\beta^{\bm{\s}.\pool}_{w,\mle}$} & \multicolumn{1}{c}{$\hat\beta^{\bm{\uns}.\pool}_{w,\mle}$} \\
	&	\textbf{mean} & \textbf{MAE} &  \textbf{MAE} & \textbf{MAE}  \\ 
		\midrule
  20k10k & \textsl{-0.2379} & 0.0348 & 0.0162 & 0.0152 \\ 
  40k10k & \textsl{-0.2688} & 0.0259 & 0.0102 & 0.0092 \\ 
  \midrule
&  \multicolumn{1}{c|}{$\hat{V}^{\bm{\s}}_{w,\BM}$}    & \multicolumn{1}{c|}{$\hat{V}_{w,\ivw}$}  & \multicolumn{1}{c|}{$\hat{V}^{\bm{\s}.\pool}_{w,\mle}$} & \multicolumn{1}{c}{$\hat{V}^{\bm{\uns}.\pool}_{w,\mle}$} \\
 &	\textbf{mean} & \textbf{MAE} &  \textbf{MAE} & \textbf{MAE}  \\ 
\midrule
  20k10k & \textsl{0.0372} & 0.0043 & 0.0025 & 0.0022 \\ 
  40k10k & \textsl{0.0243} & 0.0018 & 0.0011 & 0.0010 \\ 
  \bottomrule
	\end{tabular}
	}
	 \end{subtable}%
   \begin{subtable}[t]{0.53\textwidth}
  \centering
\caption{MLE: Unrestricted Benchmarks $\hat\beta^{\bm{\uns}}_{w,\BM}$, $\hat{V}^{\bm{\uns}}_{w,\BM}$}
{\footnotesize
\centering
	\begin{tabular}{l|r|r|r|r}
		\toprule
	&	\multicolumn{1}{c|}{$\hat\beta^{\bm{\uns}}_{w,\BM}$}     & \multicolumn{1}{c|}{$\hat{\beta}_{w,\ivw}$} & \multicolumn{1}{c|}{$\hat\beta^{\bm{\s}.\pool}_{w,\mle}$} & \multicolumn{1}{c}{$\hat\beta^{\bm{\uns}.\pool}_{w,\mle}$} \\
 &	\textbf{mean}  & \textbf{MAE} &  \textbf{MAE} & \textbf{MAE}  \\ 
		\midrule
  20k10k & \textsl{-0.2377} & 0.0346 & 0.0165 & 0.0147 \\ 
  40k10k & \textsl{-0.2688} & 0.0260 & 0.0104 & 0.0091 \\ 
\midrule
&	\multicolumn{1}{c|}{$\hat{V}^{\bm{\uns}}_{w,\BM}$}    & \multicolumn{1}{c|}{$\hat{V}_{w,\ivw}$}  & \multicolumn{1}{c|}{$\hat{V}^{\bm{\s}.\pool}_{w,\mle}$} & \multicolumn{1}{c}{$\hat{V}^{\bm{\uns}.\pool}_{w,\mle}$} \\
 &	\textbf{mean}  & \textbf{MAE} &  \textbf{MAE} & \textbf{MAE}  \\ 
\midrule
  20k10k & \textsl{0.0372} & 0.0043 & 0.0025 & 0.0022 \\ 
  40k10k & \textsl{0.0243} & 0.0018 & 0.0011 & 0.0010 \\ 
		\bottomrule
	\end{tabular}
			}
	\end{subtable}
	\vspace{0.5cm}
	\begin{subtable}[t]{.53\textwidth}
	\centering
	\caption{IPW-MLE: Restricted Benchmarks $\hat\beta^{\bm{\s}}_{w,\BM}$, $\hat{V}^{\bm{\s}}_{w,\BM}$}
	{\footnotesize
	
	\begin{tabular}{l|r|r|r|r}
		\toprule
	&	\multicolumn{1}{c|}{$\hat\beta^{\bm{\s}}_{w,\BM}$}     & \multicolumn{1}{c|}{$\hat{\beta}_{w,\ivw}$} & \multicolumn{1}{c|}{$\hat\beta^{\bm{\s}.\pool}_{w,\HT}$} & \multicolumn{1}{c}{$\hat\beta^{\bm{\uns}.\pool}_{w,\HT}$} \\
	&	\textbf{mean}  & \textbf{MAE} &  \textbf{MAE} & \textbf{MAE}  \\ 
		\midrule
  20k10k & \textsl{-0.3444} & 0.6105 & 0.0527 & 0.0568 \\ 
  40k10k & \textsl{-0.3590} & 0.4971 & 0.0898 & 0.0921 \\ 

\midrule
&	\multicolumn{1}{c|}{$\hat{V}^{\bm{\s}}_{w,\BM}$}    & \multicolumn{1}{c|}{$\hat{V}_{w,\ivw}$}  & \multicolumn{1}{c|}{$\hat{V}^{\bm{\s}.\pool}_{w,\HT}$} & \multicolumn{1}{c}{$\hat{V}^{\bm{\uns}.\pool}_{w,\HT}$} \\
&	\textbf{mean}  & \textbf{MAE} &  \textbf{MAE} & \textbf{MAE}  \\ 
\midrule
  20k10k & \textsl{0.0508} & 0.0291 & 0.0105 & 0.0091 \\ 
  40k10k & \textsl{0.0342} & 0.0175 & 0.0048 & 0.0045 \\ 
		\bottomrule
	\end{tabular}
	}
	 \end{subtable}%
   \begin{subtable}[t]{0.53\textwidth}
  \centering
\caption{IPW-MLE: Unrestricted Benchmarks $\hat\beta^{\bm{\uns}}_{w,\BM}$, $\hat{V}^{\bm{\uns}}_{w,\BM}$}
{\footnotesize
\centering
	\begin{tabular}{l|r|r|r|r}
		\toprule
	&	\multicolumn{1}{c|}{$\hat\beta^{\bm{\uns}}_{w,\BM}$}     & \multicolumn{1}{c|}{$\hat{\beta}_{w,\ivw}$} & \multicolumn{1}{c|}{$\hat\beta^{\bm{\s}.\pool}_{w,\HT}$} & \multicolumn{1}{c}{$\hat\beta^{\bm{\uns}.\pool}_{w,\HT}$} \\
  &	\textbf{mean}  & \textbf{MAE} &  \textbf{MAE} & \textbf{MAE}  \\ 
		\midrule
  20k10k & \textsl{-0.3457} & 0.6092 & 0.0547 & 0.0558 \\ 
  40k10k & \textsl{-0.3598} & 0.4963 & 0.0909 & 0.0922 \\

\midrule
&	\multicolumn{1}{c|}{$\hat{V}^{\bm{\uns}}_{w,\BM}$}    & \multicolumn{1}{c|}{$\hat{V}_{w,\ivw}$}  & \multicolumn{1}{c|}{$\hat{V}^{\bm{\s}.\pool}_{w,\HT}$} & \multicolumn{1}{c}{$\hat{V}^{\bm{\uns}.\pool}_{w,\HT}$} \\
 &	\textbf{mean} & \textbf{MAE} &  \textbf{MAE} & \textbf{MAE}  \\ 
\midrule
  20k10k & \textsl{0.0506} & 0.0288 & 0.0102 & 0.0089 \\ 
  40k10k & \textsl{0.0342} & 0.0175 & 0.0049 & 0.0046 \\ 
		\bottomrule
	\end{tabular}
			}
	\end{subtable}
	\bnotetab{Subsamples are sampled from the MarketScan pneumonia cohort with $D=2$ and varying values of $n_1$ and $n_2$. In the first column of these tables, ``$x$k$y$k'' denotes $n_1 = 1000x$ and $n_2 = 1000y$ for $x \in \{20,40\}$ and $y = 10$.  We use ATE weighting in IPW-MLE. The benchmark means and MAE are calculated based on 50 iterations.}
	\label{tab:varying-sample-size}
\end{table}

\begin{table}[h!]
	\centering
	\tcaptab{Varying Number of Subsamples}
	\begin{subtable}[t]{.53\textwidth}
	\centering
	\caption{MLE: Restricted Benchmarks}
	{\footnotesize
	\begin{tabular}{l|r|r|r|r}
		\toprule
	&	\multicolumn{1}{c|}{$\hat\beta^{\bm{\s}}_{w,\BM}$}     & \multicolumn{1}{c|}{$\hat{\beta}_{w,\ivw}$} & \multicolumn{1}{c|}{$\hat\beta^{\bm{\s}.\pool}_{w,\mle}$} & \multicolumn{1}{c}{$\hat\beta^{\bm{\uns}.\pool}_{w,\mle}$} \\
	&	\textbf{mean} & \textbf{MAE} &  \textbf{MAE} & \textbf{MAE}  \\ 
		\midrule
$D=2$ & \textsl{-0.2100} & 0.0312 & 0.0168 & 0.0162 \\ 
  $D=3$ & \textsl{-0.2096} & 0.0303 & 0.0146 & 0.0140 \\ 
  $D=4$ & \textsl{-0.2482} & 0.0388 & 0.0270 & 0.0252 \\ 
  \midrule
&  \multicolumn{1}{c|}{$\hat{V}^{\bm{\s}}_{w,\BM}$}    & \multicolumn{1}{c|}{$\hat{V}_{w,\ivw}$}  & \multicolumn{1}{c|}{$\hat{V}^{\bm{\s}.\pool}_{w,\mle}$} & \multicolumn{1}{c}{$\hat{V}^{\bm{\uns}.\pool}_{w,\mle}$} \\
 &	\textbf{mean} & \textbf{MAE} &  \textbf{MAE} & \textbf{MAE}  \\ 
\midrule
  $D=2$ & \textsl{0.0342} & 0.0039 & 0.0020 & 0.0019 \\ 
  $D=3$ & \textsl{0.0228} & 0.0031 & 0.0016 & 0.0014 \\ 
  $D=4$ & \textsl{0.0176} & 0.0032 & 0.0017 & 0.0015 \\ 
  \bottomrule
	\end{tabular}
	}
	 \end{subtable}%
   \begin{subtable}[t]{0.53\textwidth}
  \centering
\caption{MLE: Unrestricted Benchmarks}
{\footnotesize
\centering
	\begin{tabular}{l|r|r|r|r}
		\toprule
	&	\multicolumn{1}{c|}{$\hat\beta^{\bm{\uns}}_{w,\BM}$}     & \multicolumn{1}{c|}{$\hat{\beta}_{w,\ivw}$} & \multicolumn{1}{c|}{$\hat\beta^{\bm{\s}.\pool}_{w,\mle}$} & \multicolumn{1}{c}{$\hat\beta^{\bm{\uns}.\pool}_{w,\mle}$} \\
 &	\textbf{mean}  & \textbf{MAE} &  \textbf{MAE} & \textbf{MAE}  \\ 
		\midrule
$D=2$ & \textsl{-0.2100} & 0.0318 & 0.0172 & 0.0162 \\ 
  $D=3$ & \textsl{-0.2098} & 0.0305 & 0.0149 & 0.0140 \\ 
  $D=4$ & \textsl{-0.2483} & 0.0391 & 0.0271 & 0.0253 \\ 

\midrule
&	\multicolumn{1}{c|}{$\hat{V}^{\bm{\uns}}_{w,\BM}$}    & \multicolumn{1}{c|}{$\hat{V}_{w,\ivw}$}  & \multicolumn{1}{c|}{$\hat{V}^{\bm{\s}.\pool}_{w,\mle}$} & \multicolumn{1}{c}{$\hat{V}^{\bm{\uns}.\pool}_{w,\mle}$} \\
 &	\textbf{mean}  & \textbf{MAE} &  \textbf{MAE} & \textbf{MAE}  \\ 
\midrule
  $D=2$ & \textsl{0.0342} & 0.0039 & 0.0020 & 0.0019 \\ 
  $D=3$ & \textsl{0.0228} & 0.0031 & 0.0016 & 0.0014 \\ 
  $D=4$ & \textsl{0.0176} & 0.0032 & 0.0017 & 0.0015 \\ 
		\bottomrule
	\end{tabular}
			}
	\end{subtable}
	\vspace{0.5cm}
	\begin{subtable}[t]{.53\textwidth}
	\centering
	\caption{IPW-MLE: Restricted Benchmarks}
	{\footnotesize
	
	\begin{tabular}{l|r|r|r|r}
		\toprule
	&	\multicolumn{1}{c|}{$\hat\beta^{\bm{\s}}_{w,\BM}$}     & \multicolumn{1}{c|}{$\hat{\beta}_{w,\ivw}$} & \multicolumn{1}{c|}{$\hat\beta^{\bm{\s}.\pool}_{w,\HT}$} & \multicolumn{1}{c}{$\hat\beta^{\bm{\uns}.\pool}_{w,\HT}$} \\
	&	\textbf{mean}  & \textbf{MAE} &  \textbf{MAE} & \textbf{MAE}  \\ 
		\midrule
$D=2$ & \textsl{-0.2757} & 0.5992 & 0.0199 & 0.0128 \\ 
  $D=3$ & \textsl{-0.2461} & 0.8752 & 0.0230 & 0.0197 \\ 
  $D=4$ & \textsl{-0.2961} & 0.9790 & 0.0308 & 0.0195 \\

\midrule
&	\multicolumn{1}{c|}{$\hat{V}^{\bm{\s}}_{w,\BM}$}    & \multicolumn{1}{c|}{$\hat{V}_{w,\ivw}$}  & \multicolumn{1}{c|}{$\hat{V}^{\bm{\s}.\pool}_{w,\HT}$} & \multicolumn{1}{c}{$\hat{V}^{\bm{\uns}.\pool}_{w,\HT}$} \\
&	\textbf{mean}  & \textbf{MAE} &  \textbf{MAE} & \textbf{MAE}  \\ 
\midrule
  $D=2$ & \textsl{0.0471} & 0.0264 & 0.0082 & 0.0067 \\ 
  $D=3$ & \textsl{0.0327} & 0.0229 & 0.0079 & 0.0067 \\ 
  $D=4$ & \textsl{0.0248} & 0.0184 & 0.0070 & 0.0058 \\  
		\bottomrule
	\end{tabular}
	}
	 \end{subtable}%
   \begin{subtable}[t]{0.53\textwidth}
  \centering
\caption{IPW-MLE: Unrestricted Benchmarks}
{\footnotesize
\centering
	\begin{tabular}{l|r|r|r|r}
		\toprule
	&	\multicolumn{1}{c|}{$\hat\beta^{\bm{\uns}}_{w,\BM}$}     & \multicolumn{1}{c|}{$\hat{\beta}_{w,\ivw}$} & \multicolumn{1}{c|}{$\hat\beta^{\bm{\s}.\pool}_{w,\HT}$} & \multicolumn{1}{c}{$\hat\beta^{\bm{\uns}.\pool}_{w,\HT}$} \\
  &	\textbf{mean}  & \textbf{MAE} &  \textbf{MAE} & \textbf{MAE}  \\ 
		\midrule
$D=2$ & \textsl{-0.2759} & 0.5990 & 0.0216 & 0.0059 \\ 
  $D=3$ & \textsl{-0.2477} & 0.8735 & 0.0243 & 0.0093 \\ 
  $D=4$ & \textsl{-0.2967} & 0.9784 & 0.0322 & 0.0125 \\

\midrule
&	\multicolumn{1}{c|}{$\hat{V}^{\bm{\uns}}_{w,\BM}$}    & \multicolumn{1}{c|}{$\hat{V}_{w,\ivw}$}  & \multicolumn{1}{c|}{$\hat{V}^{\bm{\s}.\pool}_{w,\HT}$} & \multicolumn{1}{c}{$\hat{V}^{\bm{\uns}.\pool}_{w,\HT}$} \\
 &	\textbf{mean} & \textbf{MAE} &  \textbf{MAE} & \textbf{MAE}  \\ 
\midrule
  $D=2$ & \textsl{0.0466} & 0.0259 & 0.0077 & 0.0062 \\ 
  $D=3$ & \textsl{0.0322} & 0.0224 & 0.0073 & 0.0061 \\ 
  $D=4$ & \textsl{0.0245} & 0.0181 & 0.0067 & 0.0055 \\ 
		\bottomrule
	\end{tabular}
			}
	\end{subtable}
	\bnotetab{Subsamples are sampled from the MarketScan pneumonia cohort with $D \in \{2,3,4\}$ and $n_j = 15,000$ for all $j \in \{1, \cdots, D\}$. We use ATE weighting in IPW-MLE. The benchmark means and MAE are calculated based on 50 iterations.}
	\label{tab:varying-d}
\end{table}

\clearpage
\newpage

{\blue
\subsection{Simulation Results on Model Efficiency Comparison}\label{efficiency-comparison}

We randomly sample 20,000 units (without replacement) from the Optum pneumonia patient cohort as our fixed benchmark combined data set. In each iteration, we then randomly partition this 20,000 units into $D = 2$ subsamples of size 10,000. 
We specify various sets of unrestricted covariates $= \{\emptyset, \{\text{age}\}, \{\text{age, health-related confounders}\}, \{\text{all covariates}\}\}$ and compare the empirical standard deviation of the federated estimates against restricted benchmarks (all covariates are set as restricted) under each model specification. The results are presented in Table 10.

\begin{table}[h!]
	\centering
	\tcaptab{Varying Unrestricted Model Specification for Sub-cohorts}
	\begin{subtable}[t]{.53\textwidth}
	\centering
	\caption{MLE: Restricted Benchmarks}
	{\footnotesize
	\begin{tabular}{l|r|r}
	\toprule
	\textbf{unrestricted} & \multicolumn{1}{c|}{$\hat\beta^{\bm{\uns}.\pool}_{w,\mle}$} & \multicolumn{1}{c}{$\hat{V}^{\bm{\uns}.\pool}_{w,\mle}$} \\
	\textbf{covariates} & \textbf{SD} & \textbf{SD}  \\ 
	\midrule
$\emptyset$ & 3.82 & 0.04 \\ 
\text{age} & 3.73 & 0.05 \\ 
\text{age, health-related} & 3.92 & 0.04 \\ 
\text{all covariates} & 4.21 & 0.04 \\ 
  \bottomrule
	\end{tabular}
	}
	\end{subtable}%
	\begin{subtable}[t]{0.53\textwidth}
    \centering
	\caption{IPW-MLE: Restricted Benchmarks}
	{\footnotesize
	\begin{tabular}{l|r|r}
		\toprule
	\textbf{unrestricted} & \multicolumn{1}{c|}{$\hat\beta^{\bm{\uns}.\pool}_{w,\HT}$} & \multicolumn{1}{c}{$\hat{V}^{\bm{\uns}.\pool}_{w,\HT}$} \\
	\textbf{covariates} & \textbf{SD} &  \textbf{SD}  \\ 
	\midrule
$\emptyset$ & 4.34 & 0.09 \\ 
\text{age} & 4.45 & 0.10 \\ 
\text{age, health-related} & 6.74 & 0.13 \\ 
\text{all covariates} & 10.68 & 0.17 \\ 
  \bottomrule
	\end{tabular}
	}
	\end{subtable}
	\bnotetab{From a fixed 20,000-unit sample obtained from Optum pneumonia cohort, subsamples are randomly partitioned into $D=2$ and $n_1 = n_2 = 10,000$. We use ATE weighting in IPW-MLE. The empirical standard deviation (SD) are calculated based on 50 iterations. The SD values in the table are multiplied by 1,000.}
	\label{tab:efficiency-comparison}
\end{table}
 }
\clearpage

\section{Simulations}\label{sec:simulation}
\subsection{Simulations for Finite-Sample Properties}
In this subsection, we demonstrate the finite sample properties of our asymptotic results for the federated MLE, federated IPW-MLE, and federated AIPW, and confirm our theoretical distribution results. To conserve space, we present the finite-sample results for the case in which the propensity and outcome models are stable, estimated, and correctly specified. The results for other cases are similar and available upon request. In our data generating process, $\*X_i \stackrel{\mathrm{i.i.d.}}{\sim} \mathrm{unif}(-1,1) $ is a scalar, and $Y_i$ is a binary response variable that follows
\begin{align*}
\frac{\pr(Y_i = 1\mid \*X_i, W_i)}{\pr(Y_i = 0\mid \*X_i, W_i)} =& \exp(\beta_c + \beta_w W_i + \beta_x \*X_i)  \\
\frac{\pr(W_i = 1 \mid \*X_i)}{\pr(W_i = 0 \mid \*X_i)} =& \exp(\gamma_c + \gamma_x \*X_i),
\end{align*}
where $\bm\beta_0 = [\beta_c, \beta_w, \beta_x] = [-0.2, -0.3, 0.5]$ and $\bm\gamma_0 =  [\gamma_c, \gamma_x] = [0.1, 0.2]$. We generate $n_\npool$ observations and randomly split these $n_\npool$ observations into $D$ equally-sized data sets, in which $n_\npool$ is selected at $500$, $1000$, $2000$, and $5000$, and $D$ varies from 1 to 5. Note that $D=1$ implies that we can simply apply the conventional MLE, IPW-MLE, and AIPW estimators without pooling $\bm\beta$ and $\tau_\ate$. The results for $D = 1$ serve as the benchmark to compare the results with other $D$. When $D$ varies from 2 to 5, we apply our estimation and federated methods from Section \ref{sec:estimation} to obtain the federated MLE, federated IPW-MLE, and federated AIPW estimators for $\bm\beta$ and $\tau_\ate$ and their federated variances. We calculate the standardized federated MLE estimator using $n_\npool^{1/2} (\hat{\*V}_{\bm\beta}^\pool)^{-1/2} (\hat{\bm\beta}^\pool_\mle - \bm\beta^\ast)$ based on Theorem \ref{thm:pool-mle}. Similarly, we calculate the standardized federated IPW-MLE and federated AIPW based on Theorems \ref{theorem:ht-est-prop} and \ref{theorem:pool-aipw}. 

Figure \ref{fig:histogram-mle-ht-aipw} shows the histograms of standardized federated MLE and federated IPW-MLE for the treatment coefficient $\beta_w$, as well as federated AIPW for $\tau_\ate$, for various $D$ with $n_\npool = 500$ based on 2,000 replications of the above procedure. The histograms match the
standard normal density function very well. Additionally, Table \ref{tab:simulation-mle-ht-aipw} reports the mean and standard error of the standardized federated MLE, federated IPW-MLE, and federated AIPW estimators for other $n_\npool$. Figure \ref{fig:histogram-mle-ht-aipw} and Table \ref{tab:simulation-mle-ht-aipw} show that federated estimators across data sets are very close to those estimated from the combined, individual-level data. Moreover, they support the validity of our asymptotic results in
finite samples even when $n_\npool$ is as low as $500$. A sample size of a few hundred observations for good finite sample properties can be satisfied in many empirical medical applications, such as our medical claims data in Section \ref{sec:empirical}.


		\begin{figure}[t!]
\tcaptab{Histograms of Standardized MLE, IPW-MLE, and AIPW}
			\centering
			\begin{subfigure}{0.8\textwidth}
				\centering
				\includegraphics[width=1\linewidth]{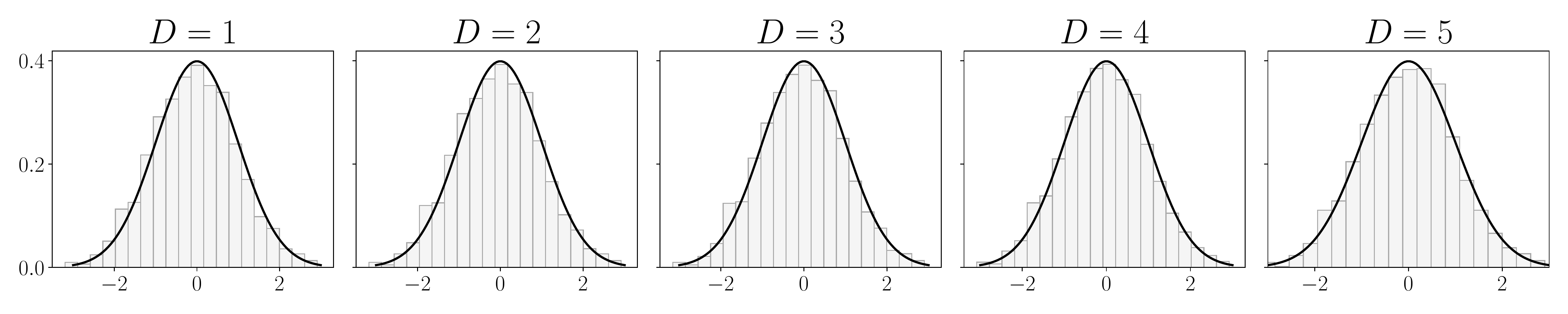}
			\caption{Federated MLE }\label{fig:mle-hist}
			\end{subfigure}
			\begin{subfigure}{0.8\textwidth}
				\centering
			\includegraphics[width=1\linewidth]{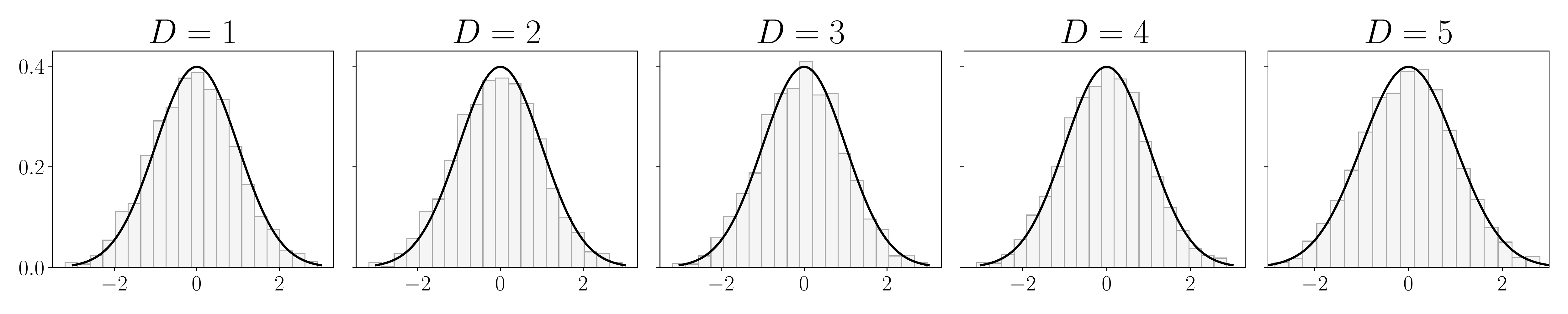}
				\caption{Federated IPW-MLE}\label{fig:ht-hist}
			\end{subfigure}
			\begin{subfigure}{0.8\textwidth}
				\centering
			\includegraphics[width=1\linewidth]{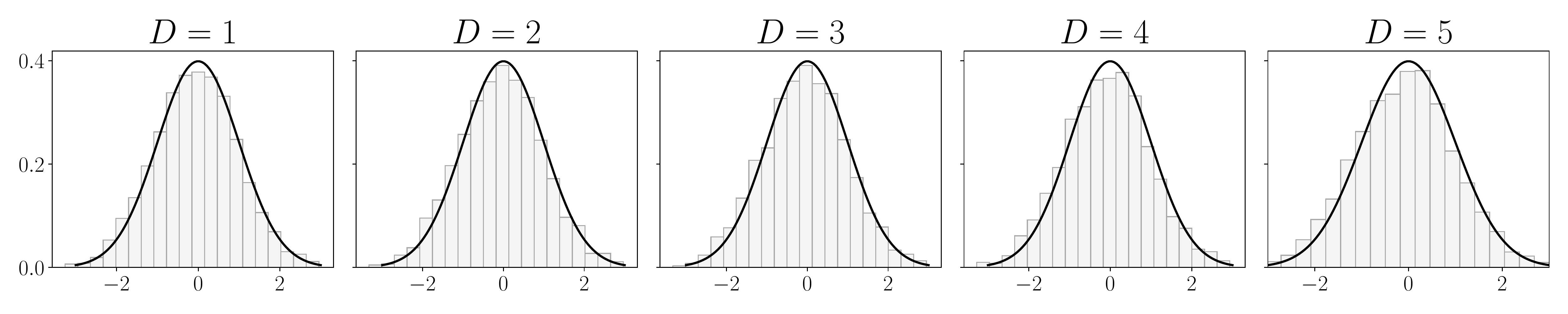}
				\caption{Federated AIPW  Estimators}\label{fig:aipw-hist}
			\end{subfigure}
			\bnotefig{These figures show the histograms of estimated MLE, IPW-MLE, and AIPW estimators normalized by their estimated standard deviations, where $n_\npool = 500$. $D$ is selected from 1 to 5, where $D = 1$ is the benchmark and implies that we estimate population parameters from the combined data. The normal density function is superimposed on the histograms. The results are based on 2,000 simulation replications. 
			}
			\label{fig:histogram-mle-ht-aipw}
		\end{figure}

\begin{table}[h]
	\centering
	\tcaptab{Simulations: Standardized Federated Maximum Likelihood Estimators}
	\begin{subtable}[t]{1\textwidth}
	\centering
	\begin{tabular}{r|rrrrrrrrrr}
		\toprule
		D & 1 &  & 2 &  & 3 &  & 4 &  & 5 &  \\ 
		\diagbox[width=3em]{$n$}{} & Mean & Std.  & Mean & Std. & Mean & Std.  & Mean & Std. & Mean & Std.   \\ 
		\midrule
		500 & -0.060 & 1.005 & -0.049 & 1.000 & -0.038 & 0.995 & -0.027 & 0.987 & -0.014 & 0.984 \\ 
		1000 & -0.011 & 0.997 & -0.004 & 0.994 & 0.004 & 0.991 & 0.010 & 0.988 & 0.018 & 0.986 \\ 
		2000 & -0.035 & 0.999 & -0.029 & 0.998 & -0.025 & 0.997 & -0.020 & 0.995 & -0.013 & 0.993 \\ 
		5000 & -0.015 & 1.019 & -0.012 & 1.019 & -0.008 & 1.018 & -0.005 & 1.017 & -0.002 & 1.017 \\
		\bottomrule
	\end{tabular}
	\caption{Federated MLE}
	\end{subtable}
	\hspace{1cm}
	\begin{subtable}[t]{1\textwidth}
	\centering
	\begin{tabular}{r|rrrrrrrrrr}
		\toprule
		D & 1 &  & 2 &  & 3 &  & 4 &  & 5 &  \\ 
		\diagbox[width=3em]{$n$}{} & Mean & Std.  & Mean & Std. & Mean & Std.  & Mean & Std. & Mean & Std.   \\ 
		\midrule
		500 & -0.058 & 1.004 & -0.047 & 1.000 & -0.036 & 0.994 & -0.025 & 0.986 & -0.012 & 0.983 \\ 
		1000 & -0.012 & 0.996 & -0.005 & 0.994 & 0.005 & 0.990 & 0.011 & 0.989 & 0.017 & 0.983 \\ 
		2000 & -0.035 & 1.000 & -0.030 & 0.999 & -0.024 & 0.997 & -0.019 & 0.998 & -0.013 & 0.995 \\ 
		5000 & -0.014 & 1.020 & -0.011 & 1.019 & -0.008 & 1.018 & -0.005 & 1.018 & -0.001 & 1.018 \\ 
		\bottomrule
	\end{tabular}
	\caption{Federated IPW-MLE}
	\end{subtable}
	\hspace{1cm}
	\begin{subtable}[t]{1\textwidth}
	\centering
	\begin{tabular}{r|rrrrrrrrrr}
		\toprule
		D & 1 &  & 2 &  & 3 &  & 4 &  & 5 &  \\ 
		\diagbox[width=3em]{$n$}{} & Mean & Std.  & Mean & Std. & Mean & Std.  & Mean & Std. & Mean & Std.   \\ 
		\midrule
		500 & -0.053 & 1.009 & -0.060 & 1.014 & -0.061 & 1.025 & -0.071 & 1.036 & -0.083 & 1.044 \\ 
		1000 & -0.004 & 0.999 & -0.008 & 1.003 & -0.013 & 1.007 & -0.015 & 1.014 & -0.019 & 1.011 \\ 
		2000 & -0.025 & 1.000 & -0.029 & 1.002 & -0.030 & 1.001 & -0.034 & 1.007 & -0.038 & 1.008 \\ 
		5000 & -0.002 & 1.020 & -0.005 & 1.020 & -0.007 & 1.022 & -0.009 & 1.022 & -0.009 & 1.023 \\ 
		\bottomrule
	\end{tabular}
	\caption{Federated AIPW}
	\end{subtable}

	\bnotetab{This table reports the mean and standard error of the standardized federated MLE and federated IPW-MLE for the treatment coefficient $\beta_w$, as well as the standardized federated AIPW for ATE $\tau_\ate$ across 2,000 simulation replications. $n_\npool$ is selected at $500$, $1000$, $2000$, and $5000$. $D$ is selected from 1 to 5, where $D = 1$ is the benchmark and implies that we estimate population parameters from the combined data. The results for the federated estimators ($D = 2, 3, 4, 5$)  are very close to the benchmarks ($D = 1$), implying the validity of our federated procedures for MLE, IPW-MLE, and AIPW. Moreover, the mean is close to 0, and the standard error is close to 1, verifying that our federated estimators have good finite sample properties. }
	\label{tab:simulation-mle-ht-aipw}
\end{table}

\clearpage

{\blue 
\subsection{Simulation for Double Robustness Property of Federated AIPW}
In this subsection, we demonstrate the double robustness property of our federated AIPW estimator under different settings of model specification. To conserve space, we present the results for the case in which the propensity and outcome models are stable, estimated, and correctly specified. We examine the performance of federated AIPW in terms of the Mean Absolute Error (MAE) with respect to the ground truth $\tau_{\ate}$ across simulations. We additionally compare the federated AIPW estimator with the two commonly used alternatives: outcome regression (OM) and inverse propensity weighting (IPW) estimators which do not have the double robustness property. In our data generating process, $\*X_i = (X_{i 1}, X_{i 2}, X_{i 3})^\T \in \mathbb{R}^3$ are i.i.d. samples where each $X_{i j} {\sim} \mathrm{unif}(-1,1) $ is a scalar for $j \in \{1, 2, 3\}$. $W_i$ is a binary treatment variable that follows
\begin{align*}
\frac{\pr(W_i = 1 \mid \*X_i)}{\pr(W_i = 0 \mid \*X_i)} =& \exp(\gamma_c + \gamma_x^\T \*X_i),
\end{align*}
where $\gamma_c = 0.1$ and $\gamma_x = [0.2, 0.3, 0.4]$. $Y_i$ is a binary response variable that follows
\begin{align*}
\frac{\pr(Y_i = 1\mid \*X_i, W_i)}{\pr(Y_i = 0\mid \*X_i, W_i)} =& \exp(\beta_c + \beta_w W_i + \beta_x^\T \*X_i),
\end{align*}
where $\beta_c = -0.2$, $\beta_w = -0.3$, $\beta_x = [0.5, 0.7, -0.6]$. 
We generate $n_\npool =20,000$ observations and randomly split these observations into $D=2$ equally-sized data sets. We evaluate the performance of the federated estimators under four settings: both outcome and propensity models are correctly specified (Setting 1); propensity model is correctly specified, but outcome model is misspecified (Setting 2); outcome model is correctly specified, but propensity model is misspecified (Setting 3); both outcome and propensity models are misspecified (Setting 4). In our simulation, misspecified models are set to include linear terms of the first two covariates ($X_{i1}$ and $X_{i2}$) and thus fail to capture the relationship of outcome (treatment) and $X_{i3}$. 

To compare federated AIPW, OM and IPW, we use different methods to estimate $\tau_\ate^\sk$, but we use the same method (i.e., IVW for the stable case) to federate the estimated $\tau_\ate^\sk$. In federated AIPW, we use our approach in Section \ref{subsec:federate-aipw}.


In federated OM, we estimate $\tau_\ate^\sk$ by 
\[\hat{\tau}_{\ate, \text{OM}}^\sk = \frac{1}{n_k} \sum_{i=1}^{n_k}\left(\hat{\mu}^\pool_{(1)}\left(\*X_i\right)-\hat{\mu}^\pool_{(0)}\left(\*X_i\right)\right),\]
where $\hat{\mu}^\pool_{(1)}\left(\*X_i\right)$ and $\hat{\mu}^\pool_{(0)}\left(\*X_i\right)$ are the federated MLE of $\mathbb{E}(Y_i\mid \*X_i, W_i= 1)$ and $\mathbb{E}(Y_i\mid \*X_i, W_i= 0)$ respectively.

In federated IPW, we estimate $\tau_\ate^\sk$ by
\[\hat{\tau}_{\ate, \text{IPW}}^\sk = \frac{1}{n_k} \sum_{i=1}^{n_k}\left(\frac{W_i Y_i}{\hat{e}^\pool\left(\*X_i\right)}-\frac{\left(1-W_i\right) Y_i}{1-\hat{e}^\pool\left(\*X_i\right)}\right)\]
where $\hat{e}^\pool\left(\*X_i\right)$ is the federated MLE of the propensity score $\pr(W_i = 1 \mid \*X_i)$. 


Table \ref{tab:double-robust-aipw} reports the MAE of federated AIPW, OM, and IPW for $D=2$ and $n_\npool = 20,000$ based on 50 replications from data generating process described above. If the outcome model is misspecified, while the propensity model is correctly specified (setting 2), the MAE of federated OM is substantially larger than that of federated AIPW. If the propensity model is misspecified, while the outcome model is correctly specified (setting 3), the MAE of federated IPW is substantially larger than that of federated AIPW. These results illustrate the double robustness property of federated AIPW. 

\begin{table}[h]
	\centering
	\tcaptab{Simulations: Federated AIPW, OM and IPW Estimators}
	\begin{subtable}[t]{1\textwidth}
	\centering
	\begin{tabular}{r|rrr}
		\toprule
		  & AIPW & OM & IPW \\ 
		  & \textbf{MAE} & \textbf{MAE} & \textbf{MAE}  \\ 
		
		\midrule
		Setting 1 & 3.672 & 3.660 & 5.953 \\ 
		Setting 2 & 3.682 & 17.815 & 5.953 \\
		Setting 3 & 3.675 & 3.660 & 6.961 \\
		Setting 4 & 17.845 & 17.815 & 6.961 \\
		
		\bottomrule
	\end{tabular}
	\end{subtable}
	\bnotetab{This table reports the MAE (values in the table are multiplied by $1,000$) of the estimated $\tau_\ate$ using federated AIPW, OM and IPW estimators across 50 simulation replications. For all simulations, $n_\npool = 20,000$ and $D = 2$. In the table, in setting 1, both outcome and propensity models are correctly specified; in setting 2, propensity model is correctly specified, but outcome model is misspecified; in setting 3, outcome model is correctly specified, but propensity model is misspecified; in setting 4, both outcome and propensity models are misspecified. The low MAE of federated AIPW estimator in the first three settings demonstrates its double robustness property as compared to the other two estimators.}
	\label{tab:double-robust-aipw}
\end{table}
}

\clearpage
\newpage

\section{Proofs}\label{sec:proof}

Let $\dot{\bm\ell}_n(\bm\beta) \coloneqq \frac{\partial \bm\ell_n(\bm\beta)}{\partial \bm\beta } $ and $ \ddot{\bm\ell}_n(\bm\beta) \coloneqq \frac{\partial^2 \bm\ell_n(\bm\beta)}{\partial \bm\beta \partial \bm\beta^\top } $ be the gradient and Hessian of the likelihood function.
Moreover, let $\bm\ell^\sk_{n_k}(\bm\beta)$, $\dot{\bm\ell}^\sk_{n_k}(\bm\beta)$, $\ddot{\bm\ell}^\sk_{n_k}(\bm\beta)$ and $\hat{\bm\beta}_\mle^\sk$ be the likelihood function, gradient, Hessian, and estimator on data set $k$. 

\subsection{Misspecified Maximum Likelihood Estimator}\label{subsec:misspecified-mle}
If the outcome model in the maximum likelihood estimator is misspecified, under suitable regularity conditions, the maximum likelihood estimator is still consistent and asymptotic normal \citep{white1982maximum}, i.e., 
\begin{align}
\sqrt{n} (\hat{\bm\beta}_\mle - {\bm\beta}^\ast ) \xrightarrow{d} \mathcal{N} \big(0, \*A_{\bm\beta^\ast}^\I \*B_{\bm\beta^\ast} \*A_{\bm\beta^\ast}^\I \big), \label{eqn:misspecified-mle}
\end{align}
where $\bm\beta^\ast$ minimizes the  Kullback-Leibler Information Criterion,
\begin{align*}
\int \log \Big( \frac{g(y \mid \*x, w)}{ f(y \mid \*x, w, \bm\beta)  }   \Big) d G(\*x, w, y).
\end{align*}
$G(\*x, w, y) $ is the cumulative density function of $(\*x, w, y) $. $g(y \mid \*x, w)$ is the population density function of $y$ given $(\*x, w) $. $\*A_{\bm\beta^\ast}$ and $\*B_{\bm\beta^\ast}$ are $\*A_{\bm\beta}$ and $\*B_{\bm\beta}$ evaluated at $\bm{\beta}^\ast$ for the definitions of $\*A_{\bm\beta}$ and $\*B_{\bm\beta}$ provided in Table \ref{tab:def-matrices}.

\subsection{Proof of Proposition \ref{proposition:adjust-data-efficiency}}

\begin{proof}[Proof of Proposition \ref{proposition:adjust-data-efficiency}]
	We adjust the covariates corresponding to $\bm\beta_{\mathrm{uns}}^\sk$ by data set. For example, in generalized linear models, we can partition the treatment and covariates into two groups, $\tilde{\*X}_i = (W_i, \*X_i) = (\tilde{\*X}_{\mathrm{s}}, \tilde{\*X}_{\mathrm{uns}})$, and include the interaction terms between $\tilde{\*X}_{\mathrm{uns}}$  and $Z_k$ in the pooled outcome model, where $Z_k$ is an binary variable indicating whether an observation is in data set $k$.

	If $Y_i$ follows a GLM, it means the conditional distribution of  $Y_i$ on $\*X_i$ and $W_i$ is in the exponential family and the log-likelihood function can be simplified to 
	\begin{align}
	\bm\ell_{n}(\bm\beta) = \sum_{i = 1 }^n \frac{Y_i \bm{\theta}_i - b(\bm{\theta}_i)}{\phi} + c(Y_i, \phi),
	\end{align}
	for a dispersion parameter $\phi$, a natural parameter $\bm\theta$, and functions $b(\bm\theta)$, and $c(Y, \phi)$.\footnote{By slight abuse of notation, $\bm\ell_{n}(\bm\beta)$ is the shorthand for $\bm\ell_{n}(\bm\beta;\phi)$, and likewise for $\dot{\bm\ell}_{n}(\bm\beta)$, $\ddot{\bm\ell}_{n}(\bm\beta)$, and $\mathcal{I}(\bm\beta)$ are similar. } Additionally, with link function $g$, we have $\+E[Y_i] = \mu_i = b^\prime(\bm{\theta}_i)$, $\tilde{\*X}_i^\T \bm\beta = g(\mu_i)$
	and 
	$\tilde{\*X}_i = (\*X_i, W_i)$. Let $h(\tilde{\*X}_i^\T \bm\beta) \coloneqq \bm{\theta}_i = (b^\prime)^\I \circ g^\I(\tilde{\*X}_i^\T \bm\beta)$. Therefore, we have $\dot{\bm\ell}_{n}(\bm\beta) = \sum_{i = 1 }^n \frac{Y_i - \mu_i}{\phi}  h^\prime(\tilde{\*X}_i^\T \bm\beta) \tilde{\*X}_i$ and 
	\[ \quad \+E[\ddot{\bm\ell}_n(\bm\beta) ] = - \frac{1}{\phi} \sum_{i = 1 }^n b^{\prime\prime}(\bm{\theta}_i) [h^\prime (\tilde{\*X}_i^\T \bm\beta) ]^2 \tilde{\*X}_i \tilde{\*X}_i^\T = -    \sum_{i = 1 }^n \underbrace{\frac{h^\prime (\tilde{\*X}_i^\T \bm\beta) }{g^\prime(\mu_i) \phi} }_{\xi_i} \tilde{\*X}_i \tilde{\*X}_i^\T =- \tilde{\*X}^\T \Xi \tilde{\*X},  \]
	where $\Xi = \diag(\xi_1, \cdots, \xi_n)$. We have $\mathcal{I}(\bm\beta) = \tilde{\*X}^\T \Xi \tilde{\*X}$ and $\Var(\hat{\bm\beta}) = (\tilde{\*X}^\T \Xi \tilde{\*X})^\I$. 
	
	Now we consider two data sets with parameters $\bm\beta^{(1)}$ and $\bm\beta^{(2)}$. 

	Suppose $\bm\beta^{(1)} = \bm\beta^{(2)}$ but we use a richer model for the pooled data that adjusts covariates by data sets, $(\tilde{\*X}_{i, \mathrm{s}}, \tilde{\*X}_{i, \mathrm{uns}} \cdot Z_1, \tilde{\*X}_{i, \mathrm{uns}} \cdot Z_2  )$, with coefficients $(\bm\beta_{\mathrm{s}}, \bm\beta^{(1)}_{\mathrm{uns}}, \bm\beta^{(2)}_{\mathrm{uns}})$,  where $Z_1$ and $Z_2$ are binary variables indicating whether an observation is in data sets 1 and 2, respectively.  We show that using this richer model gives us a less efficient estimate of $\bm\beta_{\mathrm{s}}$, where the estimator is denoted as $\bm\beta_{\mathrm{s}}^\sep$. The corresponding estimate of $\bm\beta$ from the simple model is denoted as $\bm\beta_{\mathrm{s}}^\joint$. 
	
	Next we show 
	\[ \Var(\hat{\bm\beta}_{\mathrm{s}}^\sep) \succcurlyeq \Var(\hat{\bm\beta}_{\mathrm{s}}^\joint).  \]
	Let $\tilde{\*X}_{\mathrm{s}}^\sj \in \+R^{n_j \times s_0}$ and $\tilde{\*X}_{\mathrm{uns}}^\sj \in \+R^{n_j \times (d_j-s_0)}$ be the covariate matrices of shared parameters and dataset-specific parameters on data set $j$. With algebra, we can show that 
	\begin{align*}  
	&\Var(\hat{\bm\beta}_{w}^\sep)^\I  = (\tilde{\*X}_{\mathrm{s}}^{(1)})^\T (\Xi^{(1)})^\I \tilde{\*X}_{\mathrm{s}}^{(1)} \\ & \quad - \big((\tilde{\*X}_{\mathrm{s}}^{(1)})^\T (\Xi^{(1)})^\I \tilde{\*X}_{\mathrm{uns}}^{(1)}  \big) \cdot \big(  (\tilde{\*X}_{\mathrm{uns}}^{(1)})^\T (\Xi^{(1)})^\I \tilde{\*X}_{\mathrm{uns}}^{(1)} \big)^\I \cdot \big(  (\tilde{\*X}_{\mathrm{uns}}^{(1)})^\T (\Xi^{(1)})^\I \tilde{\*X}_{\mathrm{s}}^{(1)}   \big) \\
	& \quad + (\tilde{\*X}_{\mathrm{s}}^{(2)})^\T (\Xi^{(2)})^\I \tilde{\*X}_{\mathrm{s}}^{(2)} - \big((\tilde{\*X}_{\mathrm{s}}^{(2)})^\T (\Xi^{(2)})^\I \tilde{\*X}_{\mathrm{uns}}^{(2)}    \big)  \cdot \big(  (\tilde{\*X}_{\mathrm{uns}}^{(2)})^\T (\Xi^{(2)})^\I \tilde{\*X}_{\mathrm{uns}}^{(2)}  \big)^\I \cdot \big( (\tilde{\*X}_{\mathrm{uns}}^{(2)})^\T (\Xi^{(2)})^\I \tilde{\*X}_{\mathrm{s}}^{(2)}  \big)\\
	&\Var(\hat{\bm\beta}_{w}^\joint)^\I = \big((\tilde{\*X}_{\mathrm{s}}^{(1)})^\T (\Xi^{(1)})^\I \tilde{\*X}_{\mathrm{s}}^{(1)} \\ & \quad + (\tilde{\*X}_{\mathrm{s}}^{(2)})^\T (\Xi^{(2)})^\I \tilde{\*X}_{\mathrm{s}}^{(2)}  \big) - \big((\tilde{\*X}_{\mathrm{s}}^{(1)})^\T (\Xi^{(1)})^\I \tilde{\*X}_{\mathrm{uns}}^{(1)} + (\tilde{\*X}_{\mathrm{s}}^{(2)})^\T (\Xi^{(2)})^\I \tilde{\*X}_{\mathrm{uns}}^{(2)}    \big) \\
	& \quad \cdot \big(  (\tilde{\*X}_{\mathrm{uns}}^{(1)})^\T (\Xi^{(1)})^\I \tilde{\*X}_{\mathrm{uns}}^{(1)} + (\tilde{\*X}_{\mathrm{uns}}^{(2)})^\T (\Xi^{(2)})^\I \tilde{\*X}_{\mathrm{uns}}^{(2)}  \big)^\I \cdot \big(  (\tilde{\*X}_{\mathrm{uns}}^{(1)})^\T (\Xi^{(1)})^\I \tilde{\*X}_{\mathrm{s}}^{(1)} + (\tilde{\*X}_{\mathrm{uns}}^{(2)})^\T (\Xi^{(2)})^\I \tilde{\*X}_{\mathrm{s}}^{(2)}  \big).
	\end{align*}
	In order to show $ \Var(\hat{\bm\beta}_{\mathrm{s}}^\sep) \succcurlyeq \Var(\hat{\bm\beta}_{\mathrm{s}}^\joint)$, it is equivalent to show $ \Var(\hat{\bm\beta}_{\mathrm{s}}^\sep)^\I \preccurlyeq \Var(\hat{\bm\beta}_{\mathrm{s}}^\joint)^\I$ and therefore equivalent to show for any vector $v \in R^{s_0}$, $v^\T \Var(\hat{\bm\beta}_{\mathrm{s}}^\sep)^\I  v \leq  v^\T \Var(\hat{\bm\beta}_{\mathrm{s}}^\joint)^\I v $. Let $a_1 = (\Xi^{(1)})^{-1/2} \tilde{\*X}_{\mathrm{s}}^{(1)} \cdot v $, $a_2 = (\Xi^{(2)})^{-1/2} \tilde{\*X}_{\mathrm{s}}^{(2)} \cdot v $, $\*M_1 = (\Xi^{(1)})^{-1/2} \tilde{\*X}_{\mathrm{uns}}^{(1)}  $ and $\*M_2 = (\Xi^{(2)})^{-1/2} \tilde{\*X}_{\mathrm{uns}}^{(2)}  $. With algebra, we have
	\begin{align}
	& v^\T \Var(\hat{\bm\beta}_{\mathrm{s}}^\sep)^\I  v \leq  v^\T \Var(\hat{\bm\beta}_{\mathrm{s}}^\joint)^\I v \label{ineq1}  \\
	\nonumber \Leftrightarrow& (a_1)^\T a_1 - (a_1)^\T \*M_1 \big((\*M_1)^\T \*M_1 \big)^\I (\*M_1)^\T  a_1 + (a_2)^\T a_2 - (a_2)^\T \*M_2 \big((\*M_2)^\T \*M_2 \big)^\I (\*M_2)^\T  a_2 \\
	\nonumber & \leq (a_1)^\T a_1  + (a_2)^\T a_2 -  \big((a_1)^\T \*M_1  + (a_2)^\T \*M_2  \big) \big((\*M_1)^\T \*M_1 + (\*M_2)^\T \*M_2\big)^\I \big((\*M_1)^\T  a_1  + (\*M_2)^\T  a_2\big).
	\end{align}
	Consider the SVD of $\*M_1 = \*U_1 \*D_1 \*V_1^\T \in \+R^{n_1 \times p}$ and $\*M_2 = \*U_2 \*D_2 \*V_2^\T \in \+R^{n_2 \times p}$, where $\*V_1^\I = \*V_1^\T$ and $\*V_2^\I = \*V_2^\T$ following $p \ll n_1$ and $p \ll n_2$. We can simplify the inequality \eqref{ineq1} to 
	\begin{align*}
	& a_1^\T \*U_1 \*U_1^\T  a_1 + a_2^\T \*U_2 \*U_2^\T  a_2   \\ 
	\geq&  \big(a_1^\T \*U_1 \*D_1 \*V_1^\T + a_2^\T \*U_2 \*D_2 \*V_2^\T\big) \big(\*V_1 \*D_1^2 \*V_1^\T + \*V_2 \*D_2^2 \*V_2^\T\big)^\I \big(\*V_1 \*D_1 \*U^\T_1 a_1 + \*V_2 \*D_2 \*U^\T_2 a_2 \big).
	\end{align*}
	Let $\bm\Omega= \*D_1^\I \*V_1^\I \*V_2 \*D_2$. We can write the terms in the above in equality as functions of $\bm\Omega$: 
	\begin{align*}
	\*D_1 \*V_1^\T \big(\*V_1 \*D_1^2 \*V_1^\T + \*V_2 \*D_2^2 \*V_2^\T\big)^\I \*V_1 \*D_1   =& \big( \*I + \*D_1^\I \*V_1^\I \*V_2 \*D_2^2 \*V_2^\T (\*V_1^\T)^\I \*D_1^\I \big)^\I= \big(\*I + \bm\Omega \bm\Omega^\T \big)^\I \\
	\*D_2 \*V_2^\T \big(\*V_1 \*D_1^2 \*V_1^\T + \*V_2 \*D_2^2 \*V_2^\T\big)^\I \*V_2 \*D_2   =& \big( \*I + \*D_2^\I \*V_2^\I \*V_1 \*D_1^2 \*V_1^\T (\*V_2^\T)^\I \*D_2^\I \big)^\I= \big(\*I + \*\Omega^\I  (\*\Omega^\I)^\T \big)^\I \\
	\*D_1 \*V_1^\T \big(\*V_1 \*D_1^2 \*V_1^\T + \*V_2 \*D_2^2 \*V_2^\T\big)^\I \*V_2 \*D_2 =& (\bm\Omega^\I + \bm\Omega^\T)^\I.
	\end{align*}
	Let $\tilde{a}_1 = \*U_1^\T  a_1 $ and $\tilde{a}_2 = \*U_2^\T  a_2 $,. We can further simplify Inequality \eqref{ineq1} to 
	\begin{align*}
	\tilde a_1^\T \tilde  a_1 + \tilde a_2^\T \tilde  a_2   
	\geq \tilde a_1^\T  \big(\*I + \bm\Omega \bm\Omega^\T \big)^\I  \tilde  a_1   + \tilde a_2^\T \big(\*I + \bm\Omega^\I  (\bm\Omega^\I)^\T \big)^\I  \tilde  a_2 + 2 a_1 (\bm\Omega^\I + \bm\Omega^\T)^\I a_2. 
	\end{align*}
	Consider the SVD of $\bm\Omega = \*U \*D \*V^\T$. We can further simplify Inequality \eqref{ineq1} to 
	\begin{align*}
	\tilde a_1^\T \*U \*D^2 \big(\*I + \*D^2\big)^\I \*U^\T \tilde  a_1 + \tilde a_2^\T \*V \*D^{-2} \big(I + \*D^{-2}\big)^\I  \*V^\I \tilde  a_2   \geq 2 \tilde{a}_1^\T \*U \big(\*D + \*D^\I \big)^\I \*V^\T \tilde{a}_2
	\end{align*}
	Denote each element in $\*U^\T \tilde{a}_1$ as $\bar{a}_{1,i}$ and each element in $\*V^\I \tilde  a_2$ as $\bar{a}_{2,i}$. We can further simplify Inequality \eqref{ineq1} to 
	\begin{align*}
	\sum_{i} \frac{\bar{a}_{1,i}^2 d_i^2}{1+d_i^2} + \sum_{i} \frac{\bar{a}_{2,i}^2 }{1+d_i^2} \geq 2 \sum_{i} \frac{\bar{a}_{1,i} \bar{a}_{2,i} d_i}{1+d_i^2}
	\end{align*}
	We can see that this inequality holds from the Cauchy-Schwarz inequality, and therefore Inequality \eqref{ineq1} holds. If there are more data sets,  $\Var(\hat{\bm\beta}_{\mathrm{s}}^\sep) \succcurlyeq \Var(\hat{\bm\beta}_{\mathrm{s}}^\joint)$ still holds by induction.
	
\end{proof}

\subsection{Proof of Results for Federated MLE in Section \ref{subsec:mle-results}}

\subsubsection{Proof of Theorem \ref{thm:pool-mle} (Correctly Specified and Stable Outcome Models)}
If outcome models are correctly specified, then $\bm\beta^\ast = \bm\beta_0$ and the information matrix equality holds, implying that $\*V_{\bm{\beta}} = \mathcal{I}(\bm{\beta})^\I$. For the proof in this part, we use $\bm\beta_0$ to denote the limit of (federated) MLE.
\begin{proof}[Proof of Theorem \ref{thm:pool-mle} (Correctly Specified and Stable Outcome Models)]

	
    Our proof of Theorem \ref{thm:pool-mle} consists of showing the following four equations:
    \begin{enumerate}
        \item $n_\npool^{1/2} (\hat{\*V}_{\bm\beta}^{\concat})^{-1/2} (\hat{\bm\beta}^\concat_\mle - \bm\beta_0) \xrightarrow{d} \mathcal{N} (0, \*I_d)$
        \item $n_\npool^{1/2} (\hat{\*V}_{\bm\beta}^{\pool})^{-1/2} (\hat{\bm\beta}^\pool_\mle - \bm\beta_0) \xrightarrow{d} \mathcal{N} (0, \*I_d)$
        \item $n_\npool^{1/2} (\hat{\*V}_{\bm\beta}^{\concat})^{-1/2} (\hat{\bm\beta}^\pool_\mle - \bm\beta_0) \xrightarrow{d} \mathcal{N} (0, \*I_d)$
        \item $n_\npool^{1/2} (\hat{\*V}_{\bm\beta}^{\pool})^{-1/2} (\hat{\bm\beta}^\concat_\mle - \bm\beta_0) \xrightarrow{d} \mathcal{N} (0, \*I_d)$
    \end{enumerate}
    
    We need to consider two cases. The first case is the information matrix $\mathcal{I}^\sk(\bm\beta) $ being the same for all $k$. The second case is $\mathcal{I}^\sk(\bm\beta) $ varying with $k$.

	The first step is to show $n_\npool^{1/2} (\hat{\*V}_{\bm\beta}^{\concat})^{-1/2} (\hat{\bm\beta}^\concat_\mle - \bm\beta_0) \xrightarrow{d} \mathcal{N} (0, \*I_d)$.
	MLE is consistent and asymptotic normal (see Chapter 4.2.3 in \cite{amemiya1985advanced}):
	\begin{align*}
	\hat{\bm\beta}_\mle^\sk  \xrightarrow{p}& \bm\beta_0 \\
	\sqrt{n_k} \big(\hat{\bm\beta}_\mle^\sk -  \bm\beta_0   \big) \xrightarrow{d}&\mathcal{N} \big(0, \mathcal{I}^\sk(\bm\beta_0)^\I \big),	
	\end{align*}
	where $\mathcal{I}^\sk(\bm\beta)  = - \+E_{(\*x, w, y) \sim \mathbb{P}^\sk} \Big[ \frac{\partial^2 \log  f(y_i \mid \*x_i, w_i,  \bm\beta) }{\partial \bm\beta \partial \bm\beta^\top } \Big] $. From the law of large numbers and the consistency of $\hat{\bm\beta}_\mle^\sk$, we have $- \frac{1}{n_k}  \hat{\*H}_{\bm\beta}^\sk \xrightarrow{p}  \mathcal{I}^\sk(\bm\beta_0) $. Hence, from Slutsky's theorem,  we have for each individual data set $k$,
	\begin{align*}
	\sqrt{n_k} \big(- \hat{\*H}_{\bm\beta}^\sk/n_k \big)^{-1/2} \big(\hat{\bm\beta}_\mle^\sk -  \bm\beta_0   \big) \xrightarrow{d}&\mathcal{N} \big(0,  \*I_d\big),
	\end{align*}
	where $\hat{\*H}_{\bm\beta}^\sk =  \ddot{\bm\ell}^\sk_{n_k}(\hat{\bm\beta}_\mle^\sk) $. Similarly for the combined, individual-level data, we have 
	 $\hat{\bm\beta}_\mle^\concat \xrightarrow{p} \bm\beta_0$ and $ \hat{\*V}_{\bm\beta}^{\concat} =  \big( -  \frac{1}{n_\npool} \sum_{k = 1}^D \ddot{\bm\ell}^\sk_{n_k}(\hat{\bm\beta}_\mle^\concat) \big)^\I \xrightarrow{p}\mathcal{I}^\concat(\bm\beta_0)^\I $. Then, we have
	 \begin{align*}
	     n_\npool^{1/2} (\hat{\*V}_{\bm\beta}^{\concat})^{-1/2} (\hat{\bm\beta}^\concat_\mle - \bm\beta_0) \xrightarrow{d} \mathcal{N} (0, \*I_d),
	 \end{align*}
	 which is our first equation.

    The second step is to show 
    	 \begin{align}\label{eqn:theorem1-step2}
	 & n_\npool^{1/2}  (\hat{\*V}_{\bm\beta}^\pool)^{-1/2} (\hat{\bm\beta}^\pool_\mle - \bm\beta_0) 
	  \xrightarrow{d}  \mathcal{N} \big( 0, \*I_d \big).
	 \end{align}

	Let us first consider the case where the information matrix $\mathcal{I}^\sk(\bm\beta) $ is the same for all data sets (and then follow with the case where $\mathcal{I}^\sk(\bm\beta) $ differs across data sets). Let $\mathcal{I}(\bm\beta) = \mathcal{I}^\sk(\bm\beta)$ for all $k$. In this case, $\mathcal{I}(\bm\beta) = \mathcal{I}^\concat(\bm\beta)$. Using the property that for all $k$,  $- \frac{1}{n_k} \hat{\*H}_{\bm\beta}^\sk \xrightarrow{p} \mathcal{I}(\bm\beta_0)$, we have
	 \begin{align}\label{eqn:hessian-equality}
	     \Big( \sum_{k = 1}^{D} \hat{\*H}_{\bm\beta}^\sk  \Big)^\I  \cdot \hat{\*H}_{\bm\beta}^\sj \cdot \frac{\sum_{k = 1}^D n_k}{n_j} \xrightarrow{p} \*I_d,
	 \end{align}
	 and we can use this property to show the consistency of $\hat{\bm\beta}^\pool_\mle$. Let $\hat{p}_{n,j} = \frac{n_j}{\sum_{k = 1}^D n_k}$. We have
	 \begin{align}
	 \nonumber \norm{\hat{\bm\beta}^\pool_\mle - \bm\beta_0}_2  =& \norm{\Big( \sum_{k = 1}^{D} \hat{\*H}_{\bm\beta}^\sk  \Big)^\I \Big( \sum_{k = 1}^{D} \hat{\*H}_{\bm\beta}^\sk \big(\hat{\bm\beta}_\mle^\sk  - \bm\beta_0 \big)   \Big) }_2 \\
	 \nonumber =& \norm{\sum_{j = 1}^{D} \hat{p}_{n,j} \cdot  \Big[  \Big( \sum_{k = 1}^{D} \hat{\*H}_{\bm\beta}^\sk  \Big)^\I  \cdot \hat{\*H}_{\bm\beta}^\sj \cdot \frac{1}{\hat{p}_{n,j}}  \cdot \big(\hat{\bm\beta}_\mle^\sj  - \bm\beta_0 \big)  \Big]}_2 \\
	 \leq&  \sum_{j = 1}^{D} \hat{p}_{n,j} \cdot  \underbrace{ \norm{ \Big[  \Big( \sum_{k = 1}^{D} \hat{\*H}_{\bm\beta}^\sk  \Big)^\I \cdot \hat{\*H}_{\bm\beta}^\sj \cdot \frac{1}{\hat{p}_{n,j}} \cdot  \big(\hat{\bm\beta}_\mle^\sj  - \bm\beta_0 \big)   \Big]}_2}_{o_p(1)} \label{eqn:pool-beta-mle-est-err} = o_p(1),
	 \end{align}
	 where we use the properties that $\hat{\bm\beta}_\mle^\sj   \xrightarrow{p}  \bm\beta_0 $, $0 < \hat{p}_{n,j} < 1$ and $D$ is finite.
	 
	 Since observations between data sets are asymptotically independent, we have $\Big(n_1^{1/2}  \big(\hat{\bm\beta}_\mle^{(1)} -  \bm\beta_0   \big) , n_2^{1/2} \big(\hat{\bm\beta}_\mle^{(2)} -  \bm\beta_0   \big), \cdots, n_D^{1/2} \big(\hat{\bm\beta}_\mle^{(D)} -  \bm\beta_0   \big)    \Big)$ jointly converge to a normal distribution, and for any $j \neq k$, $n_j^{1/2}  \big(\hat{\bm\beta}_\mle^{(j)} -  \bm\beta_0   \big)$ and $n_k^{1/2}  \big(\hat{\bm\beta}_\mle^{(k)} -  \bm\beta_0   \big)$ are independent. Using $n_\npool = \sum_{k = 1}^D n_k$, we can decompose $n_\npool^{1/2} \Big( \hat{\bm\beta}^\pool_\mle  -  \bm\beta_0 \Big) $ as 
	 	\begin{align*}
	n_\npool^{1/2} \Big( \hat{\bm\beta}^\pool_\mle  -  \bm\beta_0 \Big)  
	 =&  \sum_{j = 1}^{D}  \hat{p}_{n,j}^{1/2}  \underbrace{\Big[  \Big( \sum_{k = 1}^{D} \hat{\*H}_{\bm\beta}^\sk  \Big)^\I \cdot
	 \hat{\*H}_{\bm\beta}^\sj \cdot \frac{1}{\hat{p}_{n,j}} \cdot n_j^{1/2}   \big(\hat{\bm\beta}_\mle^\sj   - \bm\beta_0 \big) \Big]  }_{\coloneqq \bm\xi_{n_j}^\sj } .
	 \end{align*}
	 For the term $\bm\xi_{n_j}^\sj$ in the bracket, from Eq. \eqref{eqn:hessian-equality} and Slutsky's theorem, we have
	 \[ \bm\xi_{n_j}^\sj \xrightarrow{d}  \bm\xi^j \stackrel{d}{=} \mathcal{N}(0,  \mathcal{I}(\bm\beta_0)^\I). \] 
    As the multiplier $\hat{p}_{n,j}^{1/2}$ converges to $p^{1/2}_j$ as $n_k \rightarrow \infty$ for all $k$, from Slutsky's theorem and the delta method, we have 
  %
	  \[n_\npool^{1/2} \Big( \hat{\bm\beta}^\pool_\mle  -  \bm\beta_0 \Big)   \xrightarrow{d} \mathcal{N}(0,  \mathcal{I}(\bm\beta_0)^\I). \]

	 Next, let us consider the case, where $\mathcal{I}^\sk(\bm\beta) $ varies with the data set.
  Using the property that $\frac{1}{n_k} \hat{\*H}_{\bm\beta}^\sk \xrightarrow{p} - \mathcal{I}^\sk(\bm\beta_0)  $ and the definition $\mathcal{I}^\concat(\bm\beta_0) = \sum_{k = 1}^D p_k \mathcal{I}^\sk(\bm\beta_0)$, we have 
	\begin{align*}
	\frac{1}{\sum_{k = 1}^D n_k}  \sum_{j = 1}^D \hat{\*H}_{\bm\beta}^\sj = - \sum_{j = 1}^D  \underbrace{\frac{n_j}{\sum_{k = 1}^D n_k} }_{\hat{p}_{n,j} }  \mathcal{I}^\sj(\bm\beta_0) + o_p(1) = - \mathcal{I}^\concat(\bm\beta_0) + o_p(1).
	\end{align*}
    Since $\norm{ \mathcal{I}^\concat(\bm\beta_0)^\I \cdot \mathcal{I}^\sj(\bm\beta_0)}_2 \leq M$, we have
	\begin{align*}
	 \Big( \sum_{k = 1}^{D} \hat{\*H}_{\bm\beta}^\sk  \Big)^\I   \hat{\*H}_{\bm\beta}^\sj \cdot \frac{1}{\hat{p}_{n,j}} \xrightarrow{p} \mathcal{I}^\concat(\bm\beta_0)^\I \cdot \mathcal{I}^\sj(\bm\beta_0).
	\end{align*}
	and we can show the consistency of $\hat{\bm\beta}^\pool_\mle$ following the same procedures as Inequality \eqref{eqn:pool-beta-mle-est-err} using the property that $\norm{ \mathcal{I}^\concat(\bm\beta_0)^\I \cdot \mathcal{I}^\sj(\bm\beta_0)}_2 \leq M$. For the asymptotic normality of $\hat{\bm\beta}^\pool_\mle$, 
	since $\frac{n_j}{\sum_{k = 1}^D n_k}$ converges to some constant for all $j$, using Slutsky's Theorem and the delta method, we have
    \begin{align*}
	&n_\npool^{1/2} \Big( \hat{\bm\beta}^\pool_\mle  -  \bm\beta_0 \Big)  
	=  \sum_{j = 1}^{D} \hat{p}_{n,j}^{1/2} \Big[  \Big( \sum_{k = 1}^{D} \hat{\*H}_{\bm\beta}^\sk  \Big)^\I   \hat{\*H}_{\bm\beta}^\sj \cdot\frac{1}{\hat{p}_{n,j}} \cdot n_j^{1/2} \big(\hat{\bm\beta}_\mle^\sj   - \bm\beta_0 \big) \Big] \\
	\xrightarrow{d}& \mathcal{N} \Big( 0,\sum_{j = 1}^D p_j \cdot \mathcal{I}^\concat(\bm\beta_0)^\I \cdot \mathcal{I}^\sj(\bm\beta_0)  \cdot \mathcal{I}^\sj(\bm\beta_0)^\I   \cdot \mathcal{I}^\sj(\bm\beta_0)  \cdot \mathcal{I}^\concat(\bm\beta_0)^\I  \Big) 
	\stackrel{d}{=} \mathcal{N}  \big( 0, \mathcal{I}^\concat(\bm\beta_0)^\I   \big).
	\end{align*}
	Using the property $\big( \hat{\*V}_{\bm\beta}^\pool \big)^\I  =  \sum_{j = 1 }^D \hat{p}_{n,j} \cdot \big(  \hat{\*V}^\sj_{\bm\beta}  \big)^\I =  \sum_{j = 1 }^D p_j \mathcal{I}^\sj(\bm\beta_0) + o_p(1) = \mathcal{I}^\concat(\bm\beta_0)  + o_p(1)$, \eqref{eqn:theorem1-step2} continues to hold, and we finish showing the second step for the case where $\mathcal{I}^\sk(\bm\beta_0)$ varies with $k$.
	
	The third step is to show $n_\npool^{1/2} (\hat{\*V}_{\bm\beta}^{\concat})^{-1/2} (\hat{\bm\beta}^\pool_\mle - \bm\beta_0) \xrightarrow{d} \mathcal{N} (0, \*I_d)$. From the second step, we have shown that  $n_\npool^{1/2} \big( \hat{\bm\beta}^\pool_\mle  -  \bm\beta_0 \big) \xrightarrow{d} \mathcal{N}  \big( 0, \mathcal{I}^\concat(\bm\beta_0)^\I   \big) $ holds regardless of whether $\mathcal{I}^\sk(\bm\beta_0)$ varies with $k$. From the first step, we have $\frac{1}{n_\npool} \hat{\*V}_{\bm\beta}^{\concat} \xrightarrow{p} \mathcal{I}^\concat(\bm\beta_0)^\I $. By Slutsky's theorem, 
	\begin{align*}
	    n_\npool^{1/2} (\hat{\*V}_{\bm\beta}^{\concat})^{-1/2} (\hat{\bm\beta}^\pool_\mle - \bm\beta_0) \xrightarrow{d} \mathcal{N} (0, \*I_d)
	\end{align*}
	which completes the proof of the third step. 
	
	The last step is to show $n_\npool^{1/2} (\hat{\*V}_{\bm\beta}^{\pool})^{-1/2} (\hat{\bm\beta}^\concat_\mle - \bm\beta_0) \xrightarrow{d} \mathcal{N} (0, \*I_d)$. We have shown that $\big( \hat{\*V}_{\bm\beta}^\pool \big)^\I  = \mathcal{I}^\concat(\bm\beta_0)  + o_p(1)$ in the second step. Using this property, together with the first step, we have 
	\[n_\npool^{1/2} (\hat{\*V}_{\bm\beta}^{\pool})^{-1/2} (\hat{\bm\beta}^\concat_\mle - \bm\beta_0) \xrightarrow{d} \mathcal{N} (0, \*I_d). \]
	
	This recovers all four steps and therefore concludes the proof of Theorem \ref{thm:pool-mle}.

\end{proof}

\subsubsection{Proof of Theorem \ref{thm:pool-mle} (Misspecified and Stable Outcome Models)}
\begin{proof}[Proof of Theorem \ref{thm:pool-mle} (Misspecified and Stable Outcome Models)]
    The proof for the misspecified outcome models is the same as Theorem \ref{thm:pool-mle}, but with the limit $\bm\beta_0$ replaced by $\bm\beta^\ast$ and with $\mathcal{I}^\sk(\bm\beta)$ replaced by $(\*A^\sk_{\bm\beta})^\I \*B^\sk_{\bm\beta} (\*A^\sk_{\bm\beta})^\I$, where the definitions of $\*A^\sk_{\bm\beta}$ and $\*B^\sk_{\bm\beta}$ can be found in Table \ref{tab:def-matrices}. 
\end{proof}

\subsubsection{Proof of Theorem \ref{thm:pool-mle} (Unstable Outcome Models)}
This proof works for both correctly specified and misspecified outcome models. If outcome models are correctly specified, then $\bm\beta^\ast = \bm\beta_0$.
\begin{proof}[Proof of Theorem \ref{thm:pool-mle} (Unstable Outcome Models)]
Since the nonzero blocks $\*A^\sk_{\bm\beta,\mathrm{s},\mathrm{s}}$, $\*A^\sk_{\bm\beta,\mathrm{s},\mathrm{uns}}$, and $\*A^\sk_{ \bm\beta,\mathrm{uns},\mathrm{uns}}$ in $\*A^{\pad,\sk} $ can be consistently estimated, $\*A^{\pad,\sk} $ can be consistently estimated for all $k$. 
Hence, our pooling procedure provides a consistent estimator for $\*A^{\concat} $ (and similarly for $\*B^{\concat} $), where $\*A^{\concat} $ is defined as 
$\*A^{\concat} = \sum_{k=1}^D p_k \*A^{\pad,\sk}$ (and $\*B^{\concat} $ is defined similarly). In the case where the outcome model is correctly specified, $\*A^{\concat} =\*B^{\concat}  $. 

Let $\hat{\bm\beta}^{\concat}_\mle$ be the estimator that maximizes the likelihood function $\bm\ell^\concat_{n_\npool}(\bm\beta^\concat) $ for the combined, individual-level data, where the true parameter is $\bm\beta^\ast$. We have
\begin{align*}
    n_\npool^{1/2} \big(\hat{\bm\beta}^{\concat}_\mle - \bm\beta^\ast \big) \xrightarrow{d} \mathcal{N} \big(0, (\*A^\concat)^\I \*B^\concat  (\*A^\concat)^\I \big).
\end{align*}

Recall that $\sqrt{n_k} \big(\hat{\bm\beta}_\mle^\sk -  \bm\beta^{\sk\ast} \big) \xrightarrow{d}\mathcal{N} \big(0, (\*A^\sk)^\I \*B^\sk (\*A^\sk)^\I \big)$ and $- \hat{\*H}^\sk_{\bm\beta} /n_k \xrightarrow{p} \*A^\sk$. From Slutsky's theorem, we have $n_k^{-1/2} \hat{\*H}^\sk_{\bm\beta} \big(\hat{\bm\beta}_\mle^\sk -  \bm\beta^{\sk\ast}   \big) \xrightarrow{d}\mathcal{N} \big(0, \*B^\sk\big)$, and then we have
\[ n_k^{-1/2} \hat{\*H}^{\pad,\sk}_{\bm\beta} \big(\hat{\bm\beta}_\mle^{\pad,\sk} -  \bm\beta^{\pad,\sk\ast}   \big) \xrightarrow{d}\mathcal{N} \big(0, \*B^{\pad,\sk}\big), \]
using the property that $-\hat{\*H}^{\pad,\sk}_{\bm\beta}/n_k \xrightarrow{p} \*A^{\pad,\sk}$. 
Moreover,  we have 
\[ n_\npool \cdot n_k^{-1/2} \Big(\sum_{j = 1}^D \hat{\*H}^{\pad,\sj}_{\bm\beta} \Big)^\I  \hat{\*H}^{\pad,\sk}_{\bm\beta} \big(\hat{\bm\beta}_\mle^{\pad,\sk} -  \bm\beta^{\pad,\sk\ast} \big) \xrightarrow{d}\mathcal{N} \big(0,  (\*A^\concat)^\I \*B^{\pad,\sk} (\*A^\concat)^\I \big), \]
which follows from $-\frac{1}{n_\npool} \sum_{j = 1}^D \hat{\*H}^{\pad,\sj}_{\bm\beta}= -\sum_{j = 1}^D \frac{n_j}{n_\npool} \hat{\*H}^{\pad,\sj}_{\bm\beta}/n_j  \xrightarrow{p} \sum_{j = 1}^D p_j \*A^{\pad,\sj} = \*A^\concat $.

Note that we have the equality that $\hat{\*H}^{\pad,\sk}_{\bm\beta} \bm\beta^{\pad,\sk\ast}   = \hat{\*H}^{\pad,\sk}_{\bm\beta} \bm\beta^\ast$. This equality follows from the fact that for all the nozero entries in $\bm\beta^\ast - \bm\beta^{\pad,\sk\ast}$, the corresponding columns in $\hat{\*H}^{\pad,\sk}_{\bm\beta}$ are 0. Then, we can decompose $\bm\beta^\ast$ as 
\begin{align*}
    \bm\beta^{\concat}_0 = \sum_{k = 1}^D \bigg(\sum_{j = 1}^D \hat{\*H}^{\pad,\sj}_{\bm\beta} \bigg)^\I \hat{\*H}^{\pad,\sk}_{\bm\beta}   \bm\beta^{\pad,\sk\ast} + o_p(1).
\end{align*}
Now we are ready to show the asymptotic distribution of $\hat{\bm\beta}^\pool_\mle$: 
\begin{align*}
& n_\npool^{1/2}\Big( \hat{\bm\beta}^\pool_\mle - \bm\beta^\ast  \Big)
= \sum_{k = 1}^D  \frac{n_k^{1/2}}{n_\npool^{1/2}} \frac{n_\npool}{n_k^{1/2}} \bigg(\sum_{j = 1}^D \hat{\*H}^{\pad,\sj}_{\bm\beta} \bigg)^\I  \hat{\*H}^{\pad,\sk}_{\bm\beta} \big(\hat{\bm\beta}_\mle^{\pad,\sk} -  \bm\beta^{\pad,\sk}_0 \big) \\
\xrightarrow{d}& \mathcal{N} \Big(0,  (\*A^\concat)^\I \Big( \sum_{k = 1}^D  p_k \*B^{\pad,\sk} \Big) (\*A^\concat)^\I  \Big)  \stackrel{d}{=} \mathcal{N} \Big(0,  (\*A^\concat)^\I \*B^\concat (\*A^\concat)^\I \Big)  + o_p(1).
\end{align*}
Hence, we have  $n_\npool^{1/2}\Big( \hat{\bm\beta}^\pool_\mle - \bm\beta^\ast  \Big) \stackrel{d}{=}  n_\npool^{1/2}\Big( \hat{\bm\beta}^\concat_\mle - \bm\beta^\ast  \Big)$. Our federation procedures provide consistent estimators for $\*A^{\concat} $ and $\*B^{\concat} $. Then, we follow the same procedures and can show that the four steps in the proof of Theorem \ref{thm:pool-mle} continue to hold (even with a misspecified outcome model). 
\end{proof}

\subsubsection{Proof of Proposition \ref{proposition:model-shift-mle-weighted}}
\begin{proof}[Proof of Proposition \ref{proposition:model-shift-mle-weighted}]
For each data set $k$, if the outcome model is correctly specified, then the MLE estimator satisfies
	\begin{align*}
\sqrt{n_k} \big(\hat{\bm\beta}_\mle^\sk -  \bm\beta_0^\sk   \big) \xrightarrow{d}&\mathcal{N} \Big(0,  \mathcal{I}^\sk(\bm\beta_0)^\I \Big).	
\end{align*}
In this proof, let $\*H^\sk(\bm\beta) = \sum_{i = 1 }^{n_k}  \frac{\partial^2}{\partial \bm\beta \partial \bm\beta^\T} \log f(Y^\sk_i \mid  \*X^\sk_i, W^\sk_i, {\bm\beta})  $.
From the mean value theorem, on each data set $k$, we have 
\begin{align*}
\Big(\frac{1}{n_k} \ddot{\bm\ell}^\sk_{n_k}(\hat{\bm\beta}_\mle^\sk) \Big) \big(\hat{\bm\beta}_\mle^\sk - \bm\beta_0^\sk  \big) = - \frac{1}{n_k} \dot{\bm\ell}^\sk_{n_k}({\bm\beta}_0^\sk) + o_p \Big( \frac{1}{\sqrt{n_k}} \Big),
\end{align*}
and the above equation holds with $\ddot{\bm\ell}^\sk_{n_k}(\hat{\bm\beta}_\mle^\sk) $ replaced by $\ddot{\bm\ell}^\sk_{n_k}({\bm\beta}_0^\sk)$. Since $ \hat{\*H}_{\bm\beta}^\sk =  \ddot{\bm\ell}^\sk_{n_k}(\hat{\bm\beta}_\mle^\sk) $ for all $k$, we have
\begin{align*}
&\frac{1}{n_\npool}  \sum_{k = 1}^D  \Big(  \ddot{\bm\ell}^\sk_{n_k}(\hat{\bm\beta}_\mle^\sk)  \big(\hat{\bm\beta}_\mle^\sk - \bm\beta^\dagger  \big)  \Big)  \\ =& - \frac{1}{n_\npool} \sum_{k = 1}^D \bigg( \dot{\bm\ell}^\sk_{n_k}( \bm\beta_0^\sk) - \ddot{\bm\ell}^\sk_{n_k}( \bm\beta_0^\sk)  \big({\bm\beta}_0^\sk - \bm\beta^\dagger  \big) \bigg) + o_p \big( n_\npool^{-1/2} \big) \\
=& - \frac{1}{n_\npool} \sum_{k = 1}^D \big( \dot{\bm\ell}^\sk_{n_k}( \bm\beta_0^\sk) -   \ddot{\bm\ell}^\sk_{n_k}( \bm\beta_0^\sk)  \cdot {\bm\beta}_0^\sk  \big)  - \Big(\frac{1}{n_\npool}  \sum_{j = 1}^D  \ddot{\bm\ell}^\sj_{n_j} ( {\bm\beta}^\dagger) \Big)  \bm\beta^\dagger  + o_p \big( n_\npool^{-1/2} \big) \\
=& - \frac{1}{n_\npool} \sum_{k = 1}^D \big( \dot{\bm\ell}^\sk_{n_k}( \bm\beta^\dagger) -   \ddot{\bm\ell}^\sk_{n_k}( \bm\beta^\dagger)  \cdot {\bm\beta}^\ast  \big)  - \Big(\frac{1}{n_\npool}  \sum_{j = 1}^D  \ddot{\bm\ell}^\sj_{n_j} ( {\bm\beta}^\ast) \Big)  \bm\beta^\ast  + o_p \big( n_\npool^{-1/2} \big) \\
=& - \frac{1}{n_\npool} \sum_{k = 1}^D \dot{\bm\ell}^\sk_{n_k}( \bm\beta^\ast)   + o_p \big( n_\npool^{-1/2} \big), 
\end{align*}
where the first equality follows from that $p_k = \lim n_k/n_\npool$ is bounded away from 0 and 1,  the second equality follows from the assumption that $\mathcal{I}^\sj(\bm\beta)$ not depending on $\bm\beta$ (recall $\ddot{\bm\ell}^\sj_{n_j}({\bm\beta}) /n_j \xrightarrow{p} \mathcal{I}^\sj(\bm\beta)$), and  the third equality follows from the assumption that $\dot{\*d}^\sj_y(\bm\beta)  - \mathcal{I}^\sj(\bm\beta) \cdot \bm\beta$  not depending on $\bm\beta$ (recall $\dot{\bm\ell}^\sj_{n_j}({\bm\beta}) /n_j \xrightarrow{p} \dot{\*d}^\sj_y(\bm\beta)$). Hence we have
\begin{align*}
& n_\npool^{1/2}\Big( \hat{\bm\beta}^\pool_\mle   - \bm\beta^\ast \Big) = n_\npool^{1/2} \Big(\sum_{k = 1}^{D} \hat{\*H}_{\bm\beta}^\sk \Big)^\I   \sum_{j = 1}^{D}  \Big[   \hat{\*H}_{\bm\beta}^\sj \big(\hat{\bm\beta}_\mle^\sj   - \bm\beta^\dagger \big) \Big] 
\\  =& -  \Big(\frac{1}{n_\npool}  \sum_{k = 1}^D \ddot{\bm\ell}^\sk_{n_k}( \bm\beta^\dagger)  \Big)^\I \frac{1}{n_\npool^{1/2}} \sum_{k = 1}^D \dot{\bm\ell}^\sk_{n_k}( \bm\beta^\dagger)   + o_p (1) 
\xrightarrow{d}  \mathcal{N} \big(0, \*A^\concat(\bm\beta^\dagger)^\I \*B^\concat(\bm\beta^\dagger) \*A^\concat(\bm\beta^\dagger)^\I \big)
\end{align*}
following \eqref{eqn:misspecified-mle} in Appendix \ref{subsec:misspecified-mle}, $\*A^\concat(\bm\beta^\dagger) = \sum_{k = 1}^D p_k  \mathcal{I}^\sk(\bm\beta^\dagger ) $ and $\*B^\concat(\bm\beta^\dagger) = \sum_{k = 1}^D p_k \+E \big[ \dot{\bm\ell}^\sk_{n_k}( \bm\beta^\dagger)  \dot{\bm\ell}^\sk_{n_k}( \bm\beta^\dagger)^\T   \big]$.
We then complete the proof of Proposition \ref{proposition:model-shift-mle-weighted}. 
\end{proof}

\subsection{Proof of Results for Federated IPW-MLE in Section \ref{subsec:ht-results}}

\subsubsection{Proof of Lemma \ref{lemma:expression-ht-var}}
\begin{proof}[Proof of Lemma \ref{lemma:expression-ht-var}]
	Suppose the propensity model is the same across all data sets. Let us first show the asymptotic distribution for $	\hat{\bm\beta}_\HT$ when the propensity is estimated. We parameterize the propensity score as $\pr(W_i \mid \*X_i) = e(\*X_i, {\bm\gamma})$, and the corresponding maximum likelihood estimator is denoted as $\hat{\bm\gamma}$. Furthermore, we denote the likelihood of $W_i$ given $\*X_i$ and ${\bm\gamma}$  as $ \pr(W_i \mid \*X_i, \bm\gamma)$, and then we have $e(\*X_i, {\bm\gamma}) =  \pr(W_i = 1 \mid \*X_i, \bm\gamma)$.
	
	It is possible for $e(\*x, {\bm\gamma})$ to be misspecified. In this case, under regularity conditions in \cite{white1982maximum}, $\hat{\bm\gamma}_\mle$ is consistent and asymptotically normal: 
	\begin{align}
	\sqrt{n} (\hat{\bm\gamma}_\mle - {\bm\gamma}^\ast ) \xrightarrow{d} \mathcal{N} \big(0, \*V_{\bm\gamma^\ast} \big),
	\end{align}
	where $\bm\gamma^\ast$ minimizes the Kullback-Leibler Information Criterion between the true model and the parameterized model $e(\*X_i, {\bm\gamma}^\ast)$,  and $\*V_{\bm\gamma^\ast}  = \*A_{\bm\gamma^\ast}^\I \*B_{\bm\gamma^\ast} \*A_{\bm\gamma^\ast}^\I$. $\*A_{\bm\gamma^\ast}$ is $\*A_{\bm\gamma}$ evaluated at $\bm\gamma^\ast $ with the definition of $\*A_{\bm\gamma}$ provided in Table \ref{tab:def-matrices}, and likewise for $\*B_{\bm\gamma^\ast} $.

	Note that $	\hat{\bm\beta}_\HT$ satisfies the first order condition of the objective function \eqref{eqn:obj-ht}. With probability approaching one, we have the mean value expansion of the first order condition (or score)  at $\bm\beta_0$ of:  
	\[0 = \frac{1}{\sqrt{n}} \sum_{i = 1 }^n \varpi_{i,\hat{e}} \*g(\*X_i, W_i, \bm\beta^\ast)  + \bigg(\frac{1}{n} \sum_{i = 1 }^n \varpi_{i,\hat{e}} \ddot{\*H}(\*X_i, W_i, \tilde{\bm\beta})    \bigg) \sqrt{n} \big( \hat{\bm\beta}_\HT - \bm\beta^\ast \big),  \]
	where  $\*g_i \coloneqq \*g(\*X_i, W_i, \bm\beta^\ast)  =  \frac{\partial}{\partial \bm\beta} \log f(Y_i \mid  \*X_i, W_i, {\bm\beta}^\ast)$, $\ddot{\*H}(\*X_i, W_i, \tilde{\bm\beta})  =  \frac{\partial^2}{\partial \bm\beta \partial \bm\beta^\T} \log f(Y_i \mid  \*X_i, W_i, \tilde{\bm\beta}) $ with $\tilde{\bm\beta}$ lying between $\hat{\bm\beta}_\HT$ and $\bm\beta^\ast$, and $\varpi_{i, \hat{e}} = \frac{W_i}{\hat{e}(\*X_i)} + \frac{1 - W_i}{1 - \hat{e}(\*X_i)}$ for ATE weighting or $\varpi_{i, \hat{e}} = W_i + \frac{\hat{e}(\*X_i)}{1 - \hat{e}(\*X_i)} (1 - W_i)$ for ATT weighting.

	By the uniform weak law of large numbers, we have 
	\[  \sqrt{n} \big( \hat{\bm\beta}_\HT - \bm\beta^\ast \big) = -  \*A_{\bm{\beta}_0,\varpi}^\I \bigg(  \frac{1}{\sqrt{n}} \sum_{i = 1 }^n \varpi_{i,\hat{e}} \*g_i \bigg) + o_p(1), \]
	where $\*A_{\bm{\beta}_0,\varpi} = \frac{1}{n} \sum_{i = 1 }^n \varpi_{i,\hat{e}} \ddot{\*H}(\*X_i, W_i, \bm\beta^\ast)$. 
	The next step is to use the mean value expansion on $\frac{1}{\sqrt{n}} \sum_{i = 1 }^n \varpi_{i,\hat{e}} \*g_i $ at $\bm\gamma^\ast$; we have
	\[\frac{1}{\sqrt{n}} \sum_{i = 1 }^n \varpi_{i,\hat{e}}  \*g_i = \frac{1}{\sqrt{n}} \sum_{i = 1 }^n \underbrace{\varpi_{i,e_{\bm\gamma^\ast}} \*g_i}_{\coloneqq \*k_i}   + \+E\Big[ \*g_i \Big( \frac{\partial \varpi_{i, e_{\bm\gamma}}}{\partial  \bm\gamma} \Big|_{\bm\gamma= \bm\gamma^\ast}\Big)^\T  \Big] \sqrt{n} (\hat{\bm\gamma}_\mle - \bm\gamma^\ast)  + o_p(1),\]
	where $\frac{\partial \varpi_{i, e_{\bm\gamma}}}{\partial  \bm\gamma} \Big|_{\bm\gamma= \bm\gamma^\ast}$ is the first order derivative of $\varpi_{i, e_{\bm\gamma}}$ with respect to $\bm\gamma$ evaluated at $\bm\gamma^\ast$. In order to show the asympototic distribution of $\hat{\bm{\beta}}_{\HT}$, we need to show the asymptotic distribution of $\frac{1}{\sqrt{n}} \sum_{i = 1 }^n \varpi_{i,\hat{e}}  \*g_i$. We analyze the leading terms in the above equation one by one. 
	
	Let us first consider the ATE weighting. In this case, $\varpi_{i, e} = \frac{W_i}{e(\*X_i, \bm\gamma)} + \frac{1 - W_i}{1 - e(\*X_i, \bm\gamma)}$ and 
	\[ \frac{\partial \varpi_{i, e_{\bm\gamma}}}{\partial  \bm\gamma} \Big|_{\bm\gamma= \bm\gamma^\ast} = - \frac{W_i}{(e_i^\ast)^2}  \frac{\partial e(\*X_i, \bm\gamma^\ast)}{\partial  \bm\gamma} - \frac{1 - W_i}{(1-e_i^\ast)^2}   \frac{\partial (1 - e(\*X_i, \bm\gamma^\ast))}{\partial  \bm\gamma},  \]
	where $e^\ast_i = e(\*X_i, \bm\gamma^\ast)$. Under Asssumption \ref{assumption:parametric} and the asymptotic distribution \eqref{eqn:misspecified-mle} in Appendix \ref{subsec:misspecified-mle}, we have 
	\[ \sqrt{n} \big( \hat{\bm\gamma}_\mle - \bm\gamma^\ast  \big) = \*A_{\bm\gamma^\ast}^\I \cdot \frac{1}{\sqrt{n}} \sum_{i = 1 }^n \*d_i + o_p(1), \]
	where $\*A_{\bm\gamma^\ast}$ is $\*A_{\bm\gamma}$ evaluated at $\bm{\gamma}^\ast$, the definition of $\*A_{\bm\gamma}$ can be found in Table \ref{tab:def-matrices}, and $\*d_i$ is defined as 
	\[ \*d_i = \frac{W_i}{e_i^\ast} \frac{\partial e(\*X_i, \bm\gamma^\ast)}{\partial \bm\gamma} -  \frac{1 - W_i}{1 - e_i^\ast} \frac{\partial e(\*X_i, \bm\gamma^\ast)}{\partial \bm\gamma},   \]
	which is the first order derviative (or score) of the binary response (treatment variable $W_i$) evaluated at $\bm\gamma^\ast$. If $e(\*X_i, \bm\gamma)$ is correctly specified, we have $ \*A_{\bm\gamma^\ast} =  \+E[ \*d_i \*d_i^\T]$. Using $W_i(1 - W_i) = 0$, we have 
	$W_i \Big( \frac{\partial \varpi_{i, e_{\bm\gamma}}}{\partial  \bm\gamma} \Big|_{\bm\gamma= \bm\gamma^\ast}\Big) = - \frac{W_i}{e_i^\ast} \*d_i$ and $(1 - W_i )\Big( \frac{\partial \varpi_{i, e_{\bm\gamma}}}{\partial  \bm\gamma} \Big|_{\bm\gamma= \bm\gamma^\ast}\Big) = - \frac{1 -W_i}{1 - e_i^\ast} \*d_i$. Therefore,
	\[ \+E\Big[ \*g_i \Big( \frac{\partial \varpi_{i, e_{\bm\gamma}}}{\partial  \bm\gamma} \Big|_{\bm\gamma= \bm\gamma^\ast}\Big)^\T  \Big]   = - \+E \Big[ \underbrace{\Big( \frac{W_i}{e_i^\ast}   + \frac{1 - W_i}{1 - e_i^\ast}\Big) \*g_i}_{\*k_i}  \*d_i^\T    \Big]. \]
	Collecting terms together, we have shown
	\begin{align}\label{eqn;asymptotic-expansion-for-beta}
	\sqrt{n} \big( \hat{\bm\beta}_\HT - \bm\beta^\ast \big) =  - \*A_{\bm{\beta}^\ast, \varpi}^\I  \Bigg(  \frac{1}{\sqrt{n}} \sum_{i = 1 }^n \*k_i - \+E \big[\*k_i \*d_i^\T \big] \*A_{\bm\gamma^\ast}^\I \cdot \frac{1}{\sqrt{n}} \sum_{i = 1 }^n  \*d_i  \Bigg)+ o_p(1).
	\end{align}
	Since the standard unconfoundedness assumption holds (stated in Section \ref{subsec:model-setup}), the randomness of $\*k_i$ comes from the residual in $Y_i$, and the randomness of $\*d_i$ comes from the residual in $W_i$, and we have $\*k_i$ uncorrelated with $\*d_j$ for any $i$ and $j$ (including the case where $i$ and $j$ are the same). In addition, observations are i.i.d., $\*k_i$ is uncorrelated with $\*k_j$, and $\*d_i$ is uncorrelated with $\*d_j$ for $i \neq j$. Then, we have 
	\begin{align*}
	\*V_{\bm\beta^\ast, \HT, \hat{e}}^\dagger   = \*A_{\bm{\beta}^\ast, \varpi}^\I  \Big(  \underbrace{\+E \big[\*k_i \*k_i^\T \big] }_{\*D_{\bm{\beta}^\ast, \varpi}}   - \underbrace{\+E \big[\*k_i \*d_i^\T \big] }_{\*C_{\bm{\beta}^\ast, \varpi}}  \*V_{\bm\gamma}  \underbrace{\+E \big[\*d_i \*k_i^\T \big]}_{\*C^\T_{\bm{\beta}^\ast, \varpi}}  \Big) \*A_{\bm{\beta}^\ast, \varpi}^\I, 
	\end{align*}
	where $\*V_{\bm\gamma}  = \*A_{\bm\gamma^\ast}^\I \*B_{\bm\gamma^\ast} \*A_{\bm\gamma^\ast}^\I $. If $e(\*X_i, \bm\gamma)$ is correctly specified, we have  $\*V_{\bm\gamma^\ast} = \+E[ \*d_i \*d_i^\T]^\I = \*A_{\bm\gamma^\ast}^\I$.

	If we use the true propensity score, then 
	\[\frac{1}{\sqrt{n}} \sum_{i = 1 }^n \varpi_{i,\hat{e}}  \*g_i = \frac{1}{\sqrt{n}} \sum_{i = 1 }^n \varpi_{i,e} \*g_i  + o_p(1)\]
	and 
	\begin{align*}
	\*V_{\bm\beta^\ast, \HT, e}^\dagger   = \*A_{\bm{\beta}^\ast, \varpi}^\I  \underbrace{\+E \big[\*k_i \*k_i^\T \big] }_{\*D_{\bm{\beta}^\ast, \varpi}} \*A_{\bm{\beta}^\ast, \varpi}^\I. 
	\end{align*}
	
	Next, let us consider the ATT weighting. In this case, $\varpi_{i, e} = W_i + \frac{e(\*X_i,  \bm\gamma)}{1 - e(\*X_i,  \bm\gamma)} (1 - W_i)$ and 
    \[ \frac{\partial \varpi_{i, e_{\bm\gamma}}}{\partial  \bm\gamma} \Big|_{\bm\gamma= \bm\gamma^\ast} = \frac{1 - W_i}{(1-e_i^\ast)^2}   \frac{\partial  e(\*X_i, \bm\gamma^\ast)}{\partial  \bm\gamma}.  \]
Using $W_i(1 - W_i) = 0$, we have 
	$ \frac{\partial \varpi_{i, e_{\bm\gamma}}}{\partial  \bm\gamma} \Big|_{\bm\gamma= \bm\gamma^\ast}= - \frac{1 -W_i}{1 - e_i^\ast} \*d_i$. Therefore,
	\[ \+E\Big[ \*g_i \Big( \frac{\partial \varpi_{i, e_{\bm\gamma}}}{\partial  \bm\gamma} \Big|_{\bm\gamma= \bm\gamma^\ast}\Big)^\T  \Big]   = - \+E \Big[ \underbrace{\frac{1 - W_i}{1 - e_i^\ast} \*g_i  }_{\*h_i} \*d_i^\T    \Big]. \]
	If the propensity score is estimated, then $\*V_{\bm\beta, \HT, \hat{e}}^\dagger$ takes the form of
	\begin{align*}
	& \*V_{\bm\beta^\ast, \HT, \hat{e}}^\dagger  \\ 
	=&  \*A_{\bm{\beta}^\ast, \varpi}^\I \Big( \underbrace{\+E \big[\*k_i \*k_i^\T \big] }_{\*D_{\bm{\beta}^\ast, \varpi}}   -  \underbrace{\+E \big[\*h_i \*d_i^\T \big]}_{\*C_{\bm{\beta}^\ast, \varpi,1}} \*V_{\bm\gamma}   \underbrace{\+E \big[\*d_i \*k_i^\T \big]}_{\*C^\T_{\bm{\beta}^\ast, \varpi,2}}  -  \underbrace{\+E \big[\*k_i \*d_i^\T \big]}_{\*C_{\bm{\beta}^\ast, \varpi,2}} \*V_{\bm\gamma}  \underbrace{\+E \big[\*d_i \*h_i^\T \big]}_{\*C^\T_{\bm{\beta}^\ast, \varpi,1}} + \underbrace{\+E \big[\*h_i \*d_i^\T \big]}_{\*C_{\bm{\beta}^\ast, \varpi,2}} \*V_{\bm\gamma}   \underbrace{\+E \big[\*d_i \*h_i^\T \big]}_{\*C^\T_{\bm{\beta}^\ast, \varpi,2}} \Big) \*A_{\bm{\beta}^\ast, \varpi}^\I, 
	\end{align*}
	where $\*V_{\bm\gamma}  = \*A_{\bm\gamma^\ast}^\I \*B_{\bm\gamma^\ast} \*A_{\bm\gamma^\ast}^\I $. If $e(\*X_i, \bm\gamma)$ is correctly specified, we have  $\*V_{\bm\gamma} = \+E[ \*d_i \*d_i^\T]^\I = \*A_{\bm\gamma^\ast}^\I$. If we use the true propensity score, then 
	\begin{align*}
	\*V_{\bm\beta^\ast, \HT, e}^\dagger   = \*A_{\bm{\beta}^\ast, \varpi}^\I  \underbrace{\+E \big[\*k_i \*k_i^\T \big] }_{\*D_{\bm{\beta}^\ast, \varpi}} \*A_{\bm{\beta}^\ast, \varpi}^\I. 
	\end{align*}

\end{proof}

\subsubsection{Proof of Theorem \ref{theorem:ht-est-prop} (Stable Propensity and Outcome Models)}
This proof holds for both correctly specified and misspecified propensity and outcome models.
\begin{proof}[Proof of Theorem \ref{theorem:ht-est-prop}  (Stable Propensity and Outcome Models)] 
In this proof, we show the results for the federated estimators where the estimated propensity is used. If the true propensity is used (Condition \ref{cond:known-propensity}), we can follow the same procedure to prove the results for this case. 
Our proof of Theorem \ref{theorem:ht-est-prop} consists of showing the following four equations
    \begin{enumerate} 
        \item $n_\npool^{1/2} (\hat{\*V}_{\bm\beta,\HT, \hat{e}}^{\concat,\dagger})^{-1/2} (\hat{\bm\beta}^\concat_\HT - \bm\beta^\ast) \xrightarrow{d} \mathcal{N} (0, \*I_d)$
        \item $n_\npool^{1/2} (\hat{\*V}_{\bm\beta,\HT, \hat{e}}^{\pool,\dagger})^{-1/2} (\hat{\bm\beta}^\concat_\HT - \bm\beta^\ast) \xrightarrow{d} \mathcal{N} (0, \*I_d)$
        \item $n_\npool^{1/2} (\hat{\*V}_{\bm\beta,\HT, \hat{e}}^{\pool,\dagger})^{-1/2} (\hat{\bm\beta}^\pool_\HT - \bm\beta^\ast) \xrightarrow{d} \mathcal{N} (0, \*I_d)$
        \item $n_\npool^{1/2} (\hat{\*V}_{\bm\beta,\HT, \hat{e}}^{\concat,\dagger})^{-1/2} (\hat{\bm\beta}^\pool_\HT - \bm\beta^\ast) \xrightarrow{d} \mathcal{N} (0, \*I_d)$.
        
    \end{enumerate}

    

    The first step is to show $n_\npool^{1/2} (\hat{\*V}_{\bm\beta,\HT, \hat{e}}^{\concat,\dagger})^{-1/2} (\hat{\bm\beta}^\concat_\HT - \bm\beta^\ast) \xrightarrow{d} \mathcal{N} (0, \*I_d)$. From Lemma \ref{lemma:expression-ht-var},  for the combined data (that can be viewed as a single data set), we have
	\begin{align*}
	\hat{\bm\beta}_\HT^\concat  \xrightarrow{p}& \bm\beta^\ast ,\\
	\sqrt{n_k} \big(\hat{\bm\beta}_\HT^\concat -  \bm\beta^\ast   \big) \xrightarrow{d}&\mathcal{N} \big(0,  \*V_{\bm\beta^\ast, \HT, \hat{e}}^{\dagger } \big),	
	\end{align*}
	where $\*V_{\bm\beta^\ast, \HT, \hat{e}}^{\dagger}$ is the asymptotic variance (see Lemma \ref{lemma:expression-ht-var} for its expression). From the law of large numbers and the consistency of $\hat{\bm\beta}_\HT^\concat $, we have  $\hat{\*V}_{\bm\beta,\HT, \hat{e}}^{\concat,\dagger}$ be a consistent estimator of $ \*V_{\bm\beta^\ast, \HT, \hat{e}}^\dagger $. Hence, by Slutsky's theorem, we have 
	\[n_\npool^{1/2} (\hat{\*V}_{\bm\beta,\HT,\hat{e}}^{\concat,\dagger})^{-1/2} (\hat{\bm\beta}^\concat_\HT  - \bm\beta^\ast) \xrightarrow{d} \mathcal{N} (0, \*I_d). \]
	
	The second step is to show the second equation (i.e., $n_\npool^{1/2} (\hat{\*V}_{\bm\beta,\HT, \hat{e}}^{\pool,\dagger})^{-1/2} (\hat{\bm\beta}^\concat_\HT - \bm\beta^\ast) \xrightarrow{d} \mathcal{N} (0, \*I_d)$) for the case where $ \*V_{\bm\beta^\ast, \HT, \hat{e}}^{ \sk,\dagger} $  is the same for all data sets.
	
	For this case, we drop superscript $k$ for notation simplicity. 
    In order to show the second equation, we need to additionally show the consistency of $\hat{\*V}_{\bm\beta,\HT, \hat{e}}^{\pool, \dagger} $ given what we have in the first step. To show the consistency of  $\hat{\*V}_{\bm\beta,\HT, \hat{e}}^{\pool, \dagger} $, we start with showing the consistency of $\hat{\bm\beta}^\pool_\HT$ and $\hat{\bm\gamma}_\mle^\pool$. We can follow the same procedure as the proof of $\norm{\hat{\bm\beta}^\pool_\mle - \bm\beta^\ast}_2 = o_p(1)$ in Inequality \eqref{eqn:pool-beta-mle-est-err} (in the proof of Theorem \ref{thm:pool-mle}) to show the consistency of $\hat{\bm\beta}^\pool_\HT$ and $\hat{\bm\gamma}_\mle^\pool$. 
    
    In more detail, for $\hat{\bm\beta}^\pool_\HT$ (recall we use Hessian weighting to pool $\hat{\bm\beta}_\HT^\sk$, denoting the Hessian on data set $k$ as $\hat{\*H}_{\bm\beta,\HT}^\sk$ and $\hat{p}_{n,j} = \frac{n_j}{\sum_{k = 1}^D n_k}$), 
    \begin{align*}
	& \norm{\hat{\bm\beta}^\pool_\HT - \bm\beta^\ast}_2  = \norm{\Big( \sum_{k = 1}^{D} \hat{\*H}_{\bm\beta,\HT}^\sk \Big)^\I \Big( \sum_{k = 1}^{D} \hat{\*H}_{\bm\beta,\HT}^\sk \big( \hat{\bm\beta}_\HT^\sk  - \bm\beta^\ast \big)  \Big) }_2 \\
	\leq&  \sum_{j = 1}^{D} \hat{p}_{n,j} \cdot  \underbrace{ \norm{   \Big( \sum_{k = 1}^{D} \hat{\*H}_{\bm\beta,\HT}^\sk \Big)^\I   \hat{\*H}_{\bm\beta,\HT}^\sj \cdot \frac{1}{\hat{p}_{n,j} }  \cdot \Big( \hat{\bm\beta}_\HT^\sj  - \bm\beta^\ast \Big)   }_2}_{o_p(1)} =  o_p(1),
	\end{align*}
    where we use the property that $ \big( \sum_{k = 1}^{D} \hat{\*H}_{\bm\beta,\HT}^\sk  \big)^\I   \hat{\*H}_{\bm\beta,\HT}^\sj \frac{1}{\hat{p}_{n,j}}\xrightarrow{p} \*I_d$ (which can be shown in the same procedure as Eq. \eqref{eqn:hessian-equality}, where we additionally use the consistency of $\hat{e}^\sk$). Therefore we finish the proof of the consistency of $\hat{\bm\beta}^\pool_\HT$.

    Next we show the consistency of $\hat{\*V}_{\bm\beta,\HT, \hat{e}}^{\pool, \dagger} $. Recall from Table \ref{tab:ht} that in the estimation of $\hat{\*V}_{\bm\beta,\HT, \hat{e}}^{\pool, \dagger} $, we use  $\hat{\*A}_{\betavarpi}^\sk, \hat{\*C}_{\betavarpi}^\sk, \hat{\*D}_{\betavarpi}^\sk, \hat{\*A}^\sk_{\bm\gamma}$, and $ \hat{\*B}^\sk_{\bm\gamma}$ (for ATT weighting, replace $\hat{\*C}_{\betavarpi}^\sk$ by $\hat{\*C}_{\betavarpi, 1}^\sk, \hat{\*C}_{\betavarpi, 2}^\sk$) which are estimated using  $\hat{\bm\gamma}^\pool$ and $\hat{\bm\beta}^\pool$. By the uniform weak law of large numbers, all these quantities are consistent. 
	Using exactly the same proof that showed $\hat{\bm\beta}^\pool_\HT \xrightarrow{p} \bm\beta^\ast$, we can show the consistency of $\hat{\*A}^\pool_{\betavarpi},  \hat{\*C}_{\betavarpi}^\pool, \hat{\*D}_{\betavarpi}^\pool, \hat{\*A}^\pool_{\bm\gamma}$, and $ \hat{\*B}^\pool_{\bm\gamma}$ (for ATT weighting, replace $\hat{\*C}_{\betavarpi}^\pool$ by $\hat{\*C}_{\betavarpi, 1}^\pool, \hat{\*C}_{\betavarpi, 2}^\pool$). Then, the consistency of $\hat{\*V}_{\bm\beta,\HT, \hat{e}}^{\pool, \dagger}$ can be shown:
	\begin{align}
   \nonumber \hat{\*V}_{\bm\beta,\HT, \hat{e}}^{\pool, \dagger} =& \big( \hat{\*A}^\pool_{\betavarpi} \big)^\I \big(   \hat{\*D}^\pool_{\betavarpi}  -  \hat{\*M}^\pool_{\betavarpi, \bm\gamma} \big)  \big( \hat{\*A}^\pool_{\betavarpi} \big)^\I \\
	\xrightarrow{p}&  \*A_{\bm\beta^\ast, \varpi}^\I \big(\*D_{\bm\beta^\ast, \varpi} - \*M_{\bm\beta^\ast, \varpi, \bm\gamma^\ast}   \big) \*A_{\bm\beta^\ast, \varpi}^\I = \*V_{\bm\beta^\ast, \HT, \hat{e}}^\dagger, \label{eqn:V-pool-est-ht-consistent}
	\end{align}
	where $\hat{\*M}^\pool_{\betavarpi, \bm\gamma} $ is a smooth function of $\hat{\*C}_{\betavarpi}^\pool, \hat{\*A}^\pool_{\bm\gamma}$ and $ \hat{\*B}^\pool_{\bm\gamma}$ for ATE weighting, and $\hat{\*M}^\pool_{\betavarpi, \bm\gamma} $ is a smooth function of $\hat{\*C}_{\betavarpi, 1}^\pool, \hat{\*C}_{\betavarpi, 2}^\pool, \hat{\*A}^\pool_{\bm\gamma}$, and $ \hat{\*B}^\pool_{\bm\gamma}$ for ATT weighting.
	
    Given the consistency of $\hat{\*V}_{\bm\beta,\HT, \hat{e}}^{\pool, \dagger}$, we have recovered the second equation:
	\[ n_\npool^{1/2} (\hat{\*V}_{\bm\beta,\HT,\hat{e}}^{\pool,\dagger})^{-1/2} (\hat{\bm\beta}^\concat_\HT  - \bm\beta^\ast) \xrightarrow{d} \mathcal{N} (0, \*I_d)   \]
	
	The third step is to show the third and fourth equations together for the case where $ \*V_{\bm\beta^\ast, \HT, \hat{e}}^{ \sk,\dagger} $  is the same for $k$ ($n_\npool^{1/2} (\hat{\*V}_{\bm\beta,\HT, \hat{e}}^{\pool,\dagger})^{-1/2} (\hat{\bm\beta}^\pool_\HT - \bm\beta^\ast) \xrightarrow{d} \mathcal{N} (0, \*I_d)$
        and $n_\npool^{1/2} (\hat{\*V}_{\bm\beta,\HT, \hat{e}}^{\concat,\dagger})^{-1/2} (\hat{\bm\beta}^\pool_\HT - \bm\beta^\ast) \xrightarrow{d} \mathcal{N} (0, \*I_d)$). Given the consistency of $\hat{\*V}_{\bm\beta,\HT,\hat{e}}^{\concat,\dagger}$ and $\hat{\*V}_{\bm\beta,\HT,\hat{e}}^{\pool,\dagger}$ (from the proofs of the first and second equations), if we can show $\hat{\bm\beta}_\HT^\pool$ converges to $\bm\beta^\ast$ in an asymptotic normal distribution with the convergence rate $n_\npool^{1/2}$ and asymptotic variance  with the asymptotic variance $\*V_{\bm\beta^\ast, \HT, \hat{e}}^{\dagger}$, then by Slutsky's theorem, we obtain the third and fourth equations. 
	
	Since observations between data sets are asymptotically independent, we have that $\Big(n_1^{1/2}  \big(\hat{\bm\beta}_\HT^{(1)} -  \bm\beta^\ast   \big) , n_2^{1/2} \big(\hat{\bm\beta}_\HT^{(2)} -  \bm\beta^\ast   \big), \cdots, n_D^{1/2} \big(\hat{\bm\beta}_\HT^D -  \bm\beta^\ast   \big)    \Big)$ converges jointly to a normal distribution, for any $j \neq k$, $n_j^{1/2}  \big(\hat{\bm\beta}_\HT^\sj -  \bm\beta^\ast   \big)$ and $n_k^{1/2}  \big(\hat{\bm\beta}_\HT^\sk -  \bm\beta^\ast   \big)$ are independent, and
	\begin{align*}
	 n_\npool ^{1/2} \Big( \hat{\bm\beta}^\pool_\HT  -  \bm\beta^\ast \Big)  
	=  \sum_{j = 1}^{D} \hat{p}_{n,j}^{1/2}  \Big[ \underbrace{\big( \sum_{k = 1}^{D} \hat{\*H}_{\bm\beta,\HT}^\sk  \big)^\I   \hat{\*H}_{\bm\beta,\HT}^\sj \frac{1}{\hat{p}_{n,j}}\cdot n_j^{1/2} \big(\hat{\bm\beta}_\HT^\sj   - \bm\beta^\ast \big)}_{\substack{\coloneqq \bm\xi_{n_j}^\sj \xrightarrow{d} \mathcal{N}(0,  \*V_{\bm\beta^\ast, \HT, \hat{e}}^\dagger )  \text{ from } \\ \big( \sum_{k = 1}^{D} \hat{\*H}_{\bm\beta,\HT}^\sk  \big)^\I   \hat{\*H}_{\bm\beta,\HT}^\sj \frac{1}{\hat{p}_{n,j}}\xrightarrow{p} \*I_d \text{ and Slutsky's theorem}  } }  \Big].
	\end{align*}
    As $\hat{p}^{1/2}_{n,j} \rightarrow p_j$, by Slutsky's theorem, we have
	\begin{align*}
	n_\npool ^{1/2}  \big(\hat{\*V}_{\bm\beta, \HT, \hat{e}}^{\pool, \dagger}\big)^{-1/2}  \big(\hat{\bm\beta}_\HT^\pool   - \bm\beta^\ast \big)  \xrightarrow{d}& \mathcal{N} \big( 0, \*I_d \big) \\
	n_\npool ^{1/2}  \big(\hat{\*V}_{\bm\beta, \HT, \hat{e}}^{\concat, \dagger}\big)^{-1/2}  \big(\hat{\bm\beta}_\HT^\pool   - \bm\beta^\ast \big)  \xrightarrow{d}& \mathcal{N} \big( 0, \*I_d \big).    
	\end{align*}
	
	The last step is to show the second to fourth equations for the case where $ \*V_{\bm\beta^\ast, \HT, \hat{e}}^{ \sk,\dagger} $ differs across data sets. Based on what we have from the first case, we only need to additionally show that $\hat{\bm\beta}^\pool_\HT $ and $ \hat{\*V}_{\bm\beta,\HT, \hat{e}}^{\pool, \dagger}$ are consistent and $\hat{\bm\beta}^\pool_\HT $ is asymptotically normal with variance $\*V_{\bm\beta^\ast, \HT, \hat{e}}^\dagger$ even when $ \*V_{\bm\beta^\ast, \HT, \hat{e}}^{ \sk,\dagger} $  differs across data sets. 
	
	Let us start with the consistency of $\hat{\bm\beta}^\pool_\HT $.
	Recall from our federation procedure of the IPW-MLE estimator that we first estimate the propensity model on the combined data and use this federated propensity model to estimate ${\bm\beta}^\sk_\HT$ on each data set. Then, for the ATE weighting, the asymptotic distribution of $\hat{\bm\beta}^\sk_\HT$ satisfies (ATT weighting can be shown analogously with a similar equation):
	\begin{align*}
	n_k^{1/2} \big( \hat{\bm\beta}^\sk_\HT - \bm\beta^\ast \big) =&  - (\*A^\sk_{\bm\beta^\ast, \varpi})^\I  \Bigg(  \frac{1}{n_j^{1/2}} \sum_{i = 1 }^{n_k} \*k_i -  \*C^\sk_{\bm\beta^\ast, \varpi} \cdot \big(\*A_{\bm\gamma^\ast}^\concat\big)^\I \cdot \hat{p}_{n,k}^{1/2} \cdot \frac{1}{n_\npool^{1/2}} \sum_{i = 1 }^{n_\npool}  \*d_i  \Bigg) + o_p(1) \\
	\xrightarrow{d}&  \mathcal{N} \bigg( 0,  (\*A^\sk_{\bm\beta^\ast, \varpi})^\I  \Big(\*D^\sk_{\bm\beta^\ast, \varpi}  - \*C^\sk_{\bm\beta^\ast, \varpi} \cdot p_k \*V_{\bm\gamma}^\concat \cdot \*C^\sk_{\bm\beta^\ast, \varpi}   \Big)  (\*A^\sk_{\bm\beta^\ast, \varpi})^\I   \bigg),
	\end{align*}
	 where the definitions of $\*k_i$ and $\*d_i$ can be found in the proof of Lemma \ref{lemma:expression-ht-var}. 
	 Note that we have $\hat{\*H}_{\bm\beta,\HT}^\sk/n_k \xrightarrow{p} \*A^\sk_{\bm\beta^\ast, \varpi}$. Since $\hat{\bm\beta}_\HT^\sk$ is consistent, we have $\sum_{k = 1}^{D} \hat{\*H}_{\bm\beta,\HT}^\sk /n_\npool \xrightarrow{p}  \*A^\concat_{\bm\beta^\ast, \varpi}$, and therefore $ \big( \sum_{k = 1}^{D} \hat{\*H}_{\bm\beta,\HT}^\sk  \big)^\I \cdot  \hat{\*H}_{\bm\beta,\HT}^\sj \cdot \frac{1}{\hat{p}_{n,j}}\xrightarrow{p} ( \*A^\concat_{\bm\beta^\ast, \varpi})^\I  \*A^\sk_{\bm\beta^\ast, \varpi}$. Given the assumption $\norm{(\*A^\concat_{\bm\beta^\ast, \varpi})^\I  \*A^\sk_{\bm\beta^\ast, \varpi}}_2 \leq M$, then $ \big( \sum_{k = 1}^{D} \hat{\*H}_{\bm\beta,\HT}^\sk  \big)^\I   \hat{\*H}_{\bm\beta,\HT}^\sj \cdot  \frac{1}{\hat{p}_{n,j}}\cdot \big( \hat{\bm\beta}_\HT^\sk  - \bm\beta^\ast \big)   = o_p(1)$ continues to hold, and therefore $ \norm{\hat{\bm\beta}^\pool_\HT - \bm\beta^\ast}_2  = o_p(1)$ (where 
	$\hat{\bm\gamma}_\mle^\pool \xrightarrow{p} \bm\gamma^\ast$ can be shown using exactly the same proof).
	
	Lastly, we show the asymptotic distribution of $\hat{\bm\beta}^\pool_\HT$. Using $ \big( \sum_{k = 1}^{D} \hat{\*H}_{\bm\beta,\HT}^\sk  \big)^\I   \hat{\*H}_{\bm\beta,\HT}^\sj \cdot \frac{1}{\hat{p}_{n,j}}\xrightarrow{p} (\*A^\concat_{\bm\beta^\ast, \varpi})^\I  \*A^\sk_{\bm\beta^\ast, \varpi} $, we have the following for ATE weighting (with similar arithmetic for ATT weighting):
	\begin{align*}
	& n_\npool ^{1/2} \Big( \hat{\bm\beta}^\pool_\HT  -  \bm\beta^\ast \Big) 
	=  -\sum_{j = 1}^{D} \hat{p}_{n,j}^{1/2}  \Big[  \big( \sum_{k = 1}^{D} \hat{\*H}_{\bm\beta,\HT}^\sk  \big)^\I   \hat{\*H}_{\bm\beta,\HT}^\sj \cdot \frac{1}{\hat{p}_{n,j}}\cdot n_j^{1/2} \big(\hat{\bm\beta}_\HT^\sj   - \bm\beta^\ast \big) \Big] \\
	=&  - (\*A^\concat_{\bm\beta^\ast, \varpi})^\I  \sum_{j = 1}^{D} \hat{p}_{n,j}^{1/2}  \Bigg(  \frac{1}{n_j^{1/2}} \sum_{i = 1 }^{n_j} \*k_i -  \*C^\sj_{\bm\beta^\ast, \varpi} \cdot \big(\*A_{\bm\gamma^\ast}^\concat\big)^\I \cdot \hat{p}_{n,j}^{1/2} \cdot \frac{1}{n_\npool^{1/2}} \sum_{i = 1 }^{n_\npool}  \*d_i  \Bigg) + o_p(1) \\
	=& - \frac{ (\*A^\concat_{\bm\beta^\ast, \varpi})^\I}{n_\npool^{1/2}}\Bigg(  \sum_{i = 1 }^{n_\npool} \*k_i - \bigg(  \sum_{j = 1}^{D} \frac{n_j}{n_\npool}  \*C^\sj_{\bm\beta^\ast, \varpi}    \bigg) \cdot \big(\*A_{\bm\gamma^\ast}^\concat\big)^\I \cdot \sum_{i = 1 }^{n_\npool}  \*d_i  \Bigg) + o_p(1) \\
	\xrightarrow{d}&  \mathcal{N} \bigg( 0, \*V_{\bm\beta^\ast, \HT, \hat{e}}^\dagger  \bigg),
	\end{align*}
 where 
 \[\*V_{\bm\beta^\ast, \HT, \hat{e}}^\dagger =(\*A^\concat_{\bm\beta^\ast, \varpi})^\I \Big( \*D^\concat_{\bm\beta^\ast, \varpi}  - \*C^\concat_{\bm\beta^\ast, \varpi} \cdot  \*V_{\bm\gamma^\ast}^\concat \cdot \*C^\concat_{\bm\beta^\ast, \varpi} \Big)  (\*A^\concat_{\bm\beta^\ast, \varpi})^\I   \]
    We have hence shown the asymptotic distribution of $\hat{\bm\beta}^\pool_\HT$, which completes the proof in the second case. 
\end{proof}

\subsubsection{Proof of Theorem \ref{theorem:ht-est-prop} (Unstable Propensity and/or Unstable Outcome Models)}

\begin{proof}[Proof of Theorem \ref{theorem:ht-est-prop} (Stable Propensity and Outcome Models)]
    The results follow directly from the proof of the unstable outcome models in Theorem \ref{thm:pool-mle} and Theorem \ref{theorem:ht-est-prop}. Details are therefore omitted and available upon request. 
\end{proof}

\subsection{Proof of Results for Federated AIPW in Section \ref{subsec:aipw-results}}

\begin{proof}[Proof of Theorem \ref{theorem:pool-aipw}]
    In order to prove Theorem \ref{theorem:pool-aipw}, let us first review some properties of $\hat{\tau}^\aipw$ estimated from a single data set. 
	If either the propensity or outcome model is correctly specified, $\hat{\tau}_\aipw$ is asymptotically linear \citep{tsiatis2007comment},
	\begin{align}
	\sqrt{n} \big( \hat{\tau}_\aipw - \tau_0 \big) = \frac{1}{\sqrt{n}} \sum_{i = 1 }^n \phi(\*X_i, W_i, Y_i ) + o_p(1) \xrightarrow{d} \mathcal{N} \big(0, {\*V}_\tau \big),
	\end{align}
	where  $\phi(\*x, w, y)$ is an  influence function that satisfies $\+E[\phi(\*x, w, y)] = 0$ and ${\*V}_\tau = \+E[\phi(\*x, w, y)^2] < \infty$. Suppose the score function of $s(\*X_i, W_i, Y_i)$ can be parameterized by $\bm{\theta}$, with the true value being $\bm{\theta}_0$; then, the treatment effect $\tau_0$ can also be parameterized, i.e., $\tau_0 = \tau(\bm{\theta}_0)$, and $\tau_0$ is differentiable in $\bm{\theta}$. From \cite{newey1994asymptotic}, $\phi(\*X_i, W_i, Y_i )$ as a valid influence function connects $\tau_0$ and $s(\*X_i, W_i, Y_i \mid \bm{\theta})$  via
	\begin{equation}\label{eqn:influence-function-equality}
	    \frac{\partial \tau(\bm{\theta}_0)}{\partial \bm{\theta}} = \+E \big[ \phi(\*X_i, W_i, Y_i ) s(\*X_i, W_i, Y_i \mid \bm{\theta}_0) \big]. 
	\end{equation}

	Now we are ready to show Theorem \ref{theorem:pool-aipw}. We aim to find a valid influence function that satisfies \eqref{eqn:influence-function-equality} on the combined data, and then we can use this valid influence function to provide the asymptotic distribution of $\hat{\tau}^\concat_\aipw$ and $\hat{\tau}^\pool_\aipw$. The population treatment effect and score function on the combined data set satisfy the following (recall that $p_j = \lim {n_j}/{n_\npool}$):
	\begin{align*}
	    \tau_0 =& \sum_{j = 1 }^D p_j \tau_0^\sj \\
	    s^\concat(\*X^\sk_i, W^\sk_i, Y^\sk_i \mid \bm\theta_0) =& \sum_{j = 1 }^D \mathbbm{1} (k = j)  s^\sj(\*X^\sk_i, W^\sk_i, Y^\sk_i \mid \bm{\theta}^\sj_0).
	\end{align*}
	
	Let a candidate influence function on the combined data set be
		\[ \phi^\concat(\*X^\sk_i, W^\sk_i, Y^\sk_i ) = \sum_{j = 1}^D \mathbbm{1} (k = j) \phi^\sj (\*X^\sk_i, W^\sk_i, Y^\sk_i ). \]
	This candidate influence function satisfies $\+E[\phi^\concat(\*x, w, y ) ] = 0$, $\+E[\phi^\concat(\*x, w, y )^2 ] < \infty$,
	\begin{equation}\label{eqn:influence-function-equivalence}
	    \phi^\concat(\*X^\sk_i, W^\sk_i, Y^\sk_i ) = \phi^\sk(\*X^\sk_i, W^\sk_i, Y^\sk_i ), 
	\end{equation}
	and
	\begin{align*}
	    \frac{\partial \tau(\bm\theta_0)}{\partial \bm\theta} =&  \sum_{j = 1}^D p_j \frac{\partial \tau(\bm{\theta}_0^\sj)}{\partial \bm{\theta}^\sj} =    \sum_{j = 1}^D p_j  \+E \big[ \phi^\sj(\*X_i^\sj, W_i^\sj, Y_i^\sj ) s(\*X_i^\sj, W_i^\sj, Y_i^\sj \mid \bm{\theta}_0^\sj) \big] \\  =& \+E \big[ \phi^\concat(\*X_i, W_i, Y_i ) s^\concat(\*X_i, W_i, Y_i \mid \bm\theta_0) \big],
	\end{align*} 
	i.e., equality \eqref{eqn:influence-function-equality} holds for $\phi^\concat(\*X_i, W_i, Y_i ) $, and therefore, $\phi^\concat(\*X^\sk_i, W^\sk_i, Y^\sk_i ) $ is a valid influence function. Based on this influence function, we have 
	\[n^{1/2}_\npool \big(\hat{\tau}^\concat_\aipw - \tau_0  \big) \xrightarrow{d} \mathcal{N} \big(0, {\*V}_\tau^\concat \big) \]
	where the asymptotic variance ${\*V}_\tau^\concat$ satisfies 
	\[ {\*V}_\tau^\concat = \+E \big[\phi^\concat(\*X^\sk_i, W^\sk_i, Y^\sk_i )^2 \big] = \sum_{j = 1}^D p_j \+E\big[ \phi^\sj (\*X^\sk_i, W^\sk_i, Y^\sk_i )^2 \big] = \sum_{j = 1}^D p_j {\*V}_\tau^\sk \]
	using the property that $\mathbbm{1} (k = j) \cdot \mathbbm{1} (k = l) = 0$ for $j \neq l$, where ${\*V}_\tau^\sk$ is the asymptotic variance on data set $k$.
	
	$\hat{\*V}^\concat_\tau$ is consistent from Lemma \ref{lemma:expression-aipw-var} and the definition of $\hat{\*V}^\concat_\tau$, and from Slutsky's theorem, we have 
	\[n_\npool^{1/2} (\hat{\*V}^\concat_\tau)^{-1/2} (\hat{\tau}^\concat_\aipw - \tau_0)  \xrightarrow{d} \mathcal{N}(0,1). \]
	
	For the case where $\phi(\*X_i, W_i, Y_i)$ varies with the data set, the federated treatment effect $\hat{\tau}^\pool_\aipw$ from sample size weighting in Section \ref{subsec:pool-aipw-model-shift} satisfies
	\begin{align}
	\nonumber n_\npool^{1/2} (\hat{\tau}^\pool_\aipw - \tau_0)=& n_\npool^{1/2}  \sum_{k = 1}^{D}  \frac{n_k}{n_\npool} \cdot \frac{1}{n_k} \sum_{i = 1 }^{n_k}  \phi^\sk(\*X^\sk_i, W^\sk_i, Y^\sk_i ) + o_p(1)  \\
	=& \frac{1}{ n_\npool^{1/2}}  \sum_{k = 1}^{D}  \sum_{i = 1 }^{n_k}  \phi^\concat(\*X^\sk_i, W^\sk_i, Y^\sk_i ) + o_p(1) 
	\xrightarrow{d} \mathcal{N}(0, {\*V}^\npool_\tau). \label{eqn:pool-aipw-heter}
	\end{align}
	The federated variance $\hat{\*V}^\pool$ from sample size weighting in Section \ref{subsec:pool-aipw-model-shift} satisfies
	\[\hat{\*V}^\pool_\tau =  \sum_{k = 1}^D \frac{n_k}{n_\npool} \hat{\*V}^\sk_\tau = \sum_{k = 1}^D \frac{n_k}{n_\npool} \hat{\*V}^\sk_\tau \xrightarrow{p}  \sum_{k = 1}^D  p_k \*V_\tau^\sk = \*V^\concat_\tau,  \]
	where we use the property that $\hat{\*V}_\tau^\sk \xrightarrow{p} \*V_\tau^\sk$ from Lemma \ref{lemma:expression-aipw-var}. 
	
	For the case where $\phi(\*X_i, W_i, Y_i)$ is the same across data sets, we have $\*V^\concat_\tau \equiv \*V^\sk_\tau = \*V_\tau $ for all $k$ and for some $\*V_\tau$. Then, the federated variance $\hat{\*V}^\pool_\tau$ from sample size weighting in Section \ref{subsec:pool-aipw-model-shift} satisfies 
	\[\hat{\*V}^\pool_\tau = \Big( \sum_{k = 1}^D \big(\hat{\*V}^\sk_\tau\big)^\I \Big)^\I \xrightarrow{p} \*V_\tau. \]
	The federated treatment effect $\hat{\tau}^\pool_\aipw$ from inverse variance weighting in Section \ref{subsec:pool-stable-aipw} satisfies
	\begin{align*}
 n_\npool^{1/2} \big(\hat{\tau}^\pool_\aipw - \tau_0 \big)	 =&   n_\npool^{1/2}  \Big( \sum_{k = 1}^{D} (\hat{\*V}_{\tau}^\sk)^\I \Big)^\I \Big( \sum_{k = 1}^{D} (\hat{\*V}_{\tau}^\sk)^\I  (\hat{\tau}_\aipw^\sk - \tau_0)  \Big)  \\
	 =& n_\npool^{1/2}  \sum_{k = 1}^{D} \frac{n_k}{n_\npool}  (\hat{\tau}_\aipw^\sk - \tau_0) + o_p(1)  \\ 
	=& n_\npool^{1/2}  \sum_{k = 1}^{D}  \frac{n_k}{n_\npool} \cdot \frac{1}{n_k} \sum_{i = 1 }^{n_k}  \phi^\concat(\*X^\sk_i, W^\sk_i, Y^\sk_i ) + o_p(1) 	\xrightarrow{d} \mathcal{N}(0, {\*V}^\npool_\tau),
	\end{align*}
	where the second equality uses Eq. \eqref{eqn:influence-function-equivalence}.
	
	For both cases, $\hat{\tau}^\pool_\aipw$ is asymptotically normal, and $\hat{\*V}^\pool_\tau$ is consistent. Then, from Slutsky's theorem, we have
	\begin{align*}
	    n_\npool^{1/2}  (\hat{\*V}^\pool_\tau)^{-1/2} (\hat{\tau}^\concat_\aipw - \tau_0)  \xrightarrow{d}& \mathcal{N}(0,1) \\
	    n_\npool^{1/2}  (\hat{\*V}^\concat_\tau)^{-1/2} (\hat{\tau}^\pool_\aipw - \tau_0)  \xrightarrow{d}& \mathcal{N}(0,1) \\
	    n_\npool^{1/2}  (\hat{\*V}^\pool_\tau)^{-1/2} (\hat{\tau}^\pool_\aipw - \tau_0)  \xrightarrow{d}& \mathcal{N}(0,1).
	\end{align*}
\end{proof}

}

\end{appendices}

\end{document}